\documentclass[twoside,11pt]{article}

\usepackage{amsmath,amsthm}
\let\standardproof\proof
\let\endstandardproof\endproof
\makeatletter
\def\input@path{{vendor/}}
\makeatother
\usepackage[preprint]{jmlr2e}
\usepackage[T1]{fontenc}
\usepackage{times}
\let\proof\standardproof
\let\endproof\endstandardproof
\hypersetup{hidelinks,pdfauthor={Zinan Wang; Xinhao Yang},
  pdftitle={Sharp Integrality Gaps in Calibration Distance}}
\newcommand{\CalDist}{\operatorname{CalDist}}
\newcommand{\LowerCalDist}{\operatorname{LowerCalDist}}

\newcommand{\R}{\mathbb{R}}

\newcommand{\eps}{\varepsilon}
\newcommand{\supp}{\operatorname{supp}}
\newcommand{\cost}{\operatorname{cost}}

\newcommand{\Modes}{\mathcal{K}}

\newcommand{\abs}[1]{\left\lvert #1\right\rvert}

\newcommand{\paperkeywords}{calibration, calibration distance, lower calibration distance, integrality gap, fractional transport}

\ShortHeadings{Sharp Integrality Gaps in Calibration Distance}{Wang and Yang}
\firstpageno{1}
\title{Sharp Integrality Gaps in Calibration Distance}
\author{\name Zinan Wang \email{zinan.wang-3@student.manchester.ac.uk}\\
        \addr University of Manchester
        \AND
        \name Xinhao Yang \email{xinhaoy@usc.edu}\\
        \addr University of Southern California}

\begin{document}
\maketitle
\begin{abstract}
We study the offline gap between deterministic calibration distance $C$
and its fractional relaxation $L$ for binary unit-weight sequences under
total absolute-change cost.  We sharpen the offline comparison
$C\leq L+O(\sqrt T)$ \citep[Theorem~2]{qiao-zheng-2024} to the sharp
worst-case order $\Theta(T^{1/3})$.  If $\Delta_T$ is the supremum of
$C-L$ over length-$T$ inputs, then
\[
 \frac{T^{1/3}}{1000}\leq\Delta_T\leq41T^{1/3}
 \qquad(T\geq216).
\]
The upper bound holds for every input, while each $T\geq216$ has a
rational lower-bound input.  For every input with $m$ distinct forecasts,
$C\leq L+m$, and the unrestricted-sample worst-case sparse order is
$\Theta(m)$.  For rational forecasts and accuracy, with binary-encoded
multiplicities of separately assignable unit identities, a grid-free
polynomial-bit-time procedure returns $B\leq L\leq U$, $U-B<\eta$, and
an exactly calibrated compact repair of cost at most $U+m\leq L+m+\eta$.

\end{abstract}
\begin{keywords}
\paperkeywords
\end{keywords}

\section{Introduction}

How much can fractional splitting underestimate the cost of repairing
calibration?  We study an offline repair problem for a fixed sequence of
binary outcomes and forecasts in $[0,1]$, with one indivisible unit
identity per observation and total absolute-change cost.  A deterministic
repair assigns each identity to one destination; a fractional repair may
split its mass.  Every destination must predict its positive-label
frequency.  Write $C=\CalDist(x,p)$ and
$L=\LowerCalDist(x,p)$ for the two minimum costs, and let
$\Delta_T=\sup_{x,p}(C-L)$ over inputs of length $T$.  For a cap $m$, let
$\Delta^{\leq}_{T,m}$ denote the same supremum restricted to
$|\{p_t:t\in[T]\}|\leq m$.  The forecasts and outcomes are fixed before
repair, so this is distinct from online forecasting.

\citet[Theorem~2]{qiao-zheng-2024} established
$C\leq L+O(\sqrt T)$ in this same sum-cost model.  For $m$ distinct
original forecasts, they also proved $C\leq O(1)L+O(m)$; Appendix~A.4
gives the explicit bound $20L+8m+13$.  Section~4 asks whether the
square-root additive gap is tight and whether the sparse multiplicative
factor can be removed.  We answer both at the level of worst-case order.

Our results are as follows.  For every integer $T\geq1$, every binary
unit-weight input satisfies $C-L\leq41T^{1/3}$, whereas for each integer
$T\geq216$ a rational input satisfies $C-L\geq T^{1/3}/1000$.  Hence
\[
 \frac{T^{1/3}}{1000}\leq\Delta_T\leq41T^{1/3}
 \qquad(T\geq216),
\]
so the offline worst-case order is $\Theta(T^{1/3})$, sharpening the
established $O(\sqrt T)$ comparison.  The lower bound is existential at
each length, not a claim about every input.

The upper proof uses an adaptive rational grid $q_i=a_i/b_i$ with mesh
at most $H^{-2}$ and $\sum_i1/b_i\leq5H$; mean-preserving quantization,
denominator-sized packet rounding, and exact restoration of the positive
label count give $C-L\leq T/H^2+20H$.  Taking
$H\asymp T^{1/3}$ yields the upper order.  The lower construction uses
separated rational centers and the integer denominator of every calibrated
group mean; its certificate covers groups that mix centers and remains
valid after padding to each requested length.

For every input with $m$ distinct original forecasts,
$C\leq L+m$.  Pairing opposite-label source types, cost-nonincreasing
compression, and marginal-preserving row rounding leave at most $2m$ modes
and give the coefficient-one bound.  A separate geometric-multiplicity
construction proves $G(m)=\Theta(m)$, where $G$ is the supremum over
inputs with at most $m$ original forecasts and unrestricted $T$.  These
are not a sharp joint formula for fixed $(T,m)$.  We prove the
fixed-pair envelope $r/500\leq\Delta^{\leq}_{T,m}\leq\min\{m,41T^{1/3}\}$
when $r=\min\{\lfloor T^{1/3}\rfloor,\lfloor\sqrt{m-1}\rfloor\}\geq6$.
We also prove the all-input upper bound
$\Delta^{\leq}_{T,m}\leq36\sqrt m(1+\ln T)$, which rules out a
uniform linear-in-$\min\{m,T^{1/3}\}$ lower law.  The exact joint
order remains open.

Theorem~\ref{thm:gridfree-algorithm} gives, for rational forecasts,
rational $\eta>0$, and binary-encoded multiplicities of separately
assignable unit identities, a deterministic rational bit-model procedure
with $B\leq L\leq U$, $U-B<\eta$, and an exactly calibrated compact
repair with at most $2m$ modes and cost at most
$U+m\leq L+m+\eta$.  Its running time and output encoding size are
polynomial in the binary input and accuracy lengths; expansion can still
require $\Omega(T)$ records.  ``Grid-free'' means that no target-forecast
grid is prescribed in advance.

\subsection{Related Work}
The distributional framework of \citet{blasiok-et-al-2023} distinguishes
predictor-level distance from lower distance determined by the
prediction--label marginal.  On a uniform domain of distinct identities
with deterministic labels, their mean-cost quantities correspond to
$C/T$ and $L/T$, and their finite-target linear program uses a divisible
coupling.  Their Lemma~4.5 concerns lifts sharing a marginal with
nonuniform masses and random conditional labels.  The deterministic
four-unit separation is instead
\citet[Proposition~9]{qiao-zheng-2024}: $C=4\epsilon$ and
$L=O(\epsilon^2)$; it does not itself give a growing additive gap.

Online calibration has different quantifiers.  The
$\Omega(T^{1/3})$ lower bound of \citet{qiao-zheng-2024} is online, not a
bound on $\Delta_T$, and the deterministic predictor of
\citet{arunachaleswaran-et-al-2025} achieves $C\leq2\sqrt T+1$ against
an adversarial outcome sequence rather than comparing $C$ and $L$ for a
fixed input.

For fractional computation, \citet[Proposition~3]{hu-jambulapati-tian-yang-2024}
use a finite target grid to estimate normalized lower distance to additive
$2\epsilon$ accuracy in $O(T\log T/\epsilon^2)$ time on expanded
samples.  \citet[Corollary~32 and Appendix~C.5]{hartline-hu-wu-2026}
give a different route: their $\ell_1$-ATB is between one-third and three
times $L/T$ and is computable in $O(T\log T)$ time.  A constant-factor
surrogate is not a strict additive certificate for $L$, and sum accuracy
$\eta$ corresponds to mean accuracy $\eta/T$.  The earth-mover and
shared-label coupling characterizations of
\citet{gopalan-stavropoulos-talwar-tankala-2026} likewise concern
fractional normalized distance and do not impose integer-identity
rounding.

For the true distance, \citet{qiao-2026-hardness} already give an exact
$O(T^4)$ real-RAM algorithm on the expanded uniform, noiseless domain,
with polynomial rational overhead; this includes the expanded problem
for $C/T$.  Theorem~\ref{thm:gridfree-algorithm} instead gives strict
additive certificates for $L$ and compact calibrated output with time
polynomial in the binary input and accuracy lengths.  Binary
multiplicities represent assignable unit identities, not indivisible
weighted atoms, so hardness results for general atomic distributions do
not transfer without a separate reduction.  We claim neither the first
calibration-distance algorithm nor an improvement to the expanded
exact-computation guarantee.

Other work studies decision loss after randomized post-processing
\citep{hartline-wu-yang-2025}, testability and actionable cutoff error
\citep{rossellini-et-al-2025}, and truthfulness of online
decision-theoretic measures \citep{qiao-zhao-2025}.  These objectives are
distinct from the integrality difference for one fixed sequence.

\section{Model and Main Results}\label{sec:preliminaries}

Write $[n]=\{1,\ldots,n\}$.  An input of length $T\geq1$ consists of
outcomes $x\in\{0,1\}^T$ and forecasts $p\in[0,1]^T$.
Every sample has unit weight and retains its identity; all distances
are sums, not averages.

\begin{definition}[Deterministic calibration]
For a vector $q\in[0,1]^T$, let $I_\alpha(q)=\{t:q_t=\alpha\}$.  We call $q$ calibrated for $x$ if, for every $\alpha$ with $I_\alpha(q)\neq\varnothing$,
\[
  \sum_{t\in I_\alpha(q)}(x_t-\alpha)=0.
\]
Thus every nonempty equal-value group predicts its empirical one-frequency.
Define
\[
  \CalDist(x,p)=\min_{q\text{ calibrated for }x}\sum_{t=1}^T\abs{p_t-q_t}.
\]
\end{definition}

The minimum exists because there are finitely many partitions of $[T]$,
and each group's calibrated value is fixed by its labels.

\begin{definition}[Fractional calibration]
A finite-support fractional post-processing is a family $D=(D_t)_{t=1}^T$ of probability distributions on $[0,1]$, each with finite support.  It is fractionally calibrated if, for every destination $\alpha$,
\[
  \sum_{t=1}^T (x_t-\alpha)D_t(\alpha)=0,
  \qquad D_t(\alpha):=\Pr_{q\sim D_t}[q=\alpha].
\]
Its cost is
\[
  \cost(D;p)=\sum_{t=1}^T\sum_{\alpha\in\supp(D_t)}D_t(\alpha)\abs{p_t-\alpha},
\]
and
\[
  \LowerCalDist(x,p)=\inf_D\cost(D;p),
\]
where the infimum ranges over finite-support fractionally calibrated post-processings.
\end{definition}

Deterministic post-processing is a special case, so
\begin{equation}\label{eq:basic-order}
  0\leq \CalDist(x,p)-\LowerCalDist(x,p).
\end{equation}

For all-zero labels, calibration forces every positive-mass destination
to zero; both distances equal $\sum_t p_t$.  For all-one labels they
equal $\sum_t(1-p_t)$.  Upper-bound proofs may thus assume both labels occur.

Let $m=m(p)=|\{p_t:t\in[T]\}|$ count the distinct original forecasts,
and define the worst-case gap
\[
  \Delta_T:=\sup_{x\in\{0,1\}^T,\ p\in[0,1]^T}
  \bigl(\CalDist(x,p)-\LowerCalDist(x,p)\bigr).
\]

\begin{theorem}[Growing offline integrality gap]\label{thm:growing-lower}
For every integer $T\geq216$, there exist $x\in\{0,1\}^T$ and a rational vector $p\in[0,1]^T$ such that
\[
  \CalDist(x,p)-\LowerCalDist(x,p)\geq\frac{T^{1/3}}{1000}.
\]
Equivalently, $\Delta_T\geq T^{1/3}/1000$ for every integer $T\geq216$.
\end{theorem}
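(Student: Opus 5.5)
The plan is to build a rational input made of $K\asymp T^{1/3}$ independent ``blocks'', each contributing an $\Omega(1)$ gap, and then pad to exactly length $T$.

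\emph{Centers.} I would fix well-separated rational centers $c_j=a_j/b_j$, $j\in[K]$, with denominators $b_j\asymp K$, for instance distinct reduced fractions in $[1/4,3/4]$. Pairwise separation should be at least of order $K^{-2}$.

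\emph{Blocks.} Block $j$ gets $N\asymp T/K\asymp T^{2/3}$ samples, all with forecast exactly $c_j$. The label count is chosen so that the empirical frequency equals $c_j$ up to a ``defect'' that is not a multiple of the denominator. Concretely, the number of ones is $c_j(N-r_j)+s_j$, where the defect $(r_j,s_j)$ is of size $O(1)$ and does not by itself have mean $c_j$. Padding units with $x_t=0$ and $p_t=0$ contribute $0$ to both distances: they can stay in an all-zero group, and padding is only ever added at the end. So only $KN\le T$ is needed, and $K=\lfloor T^{1/3}/c\rfloor$ works once $T\ge216$.

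\emph{Upper bound on $L$.} In block $j$, keep most units at $c_j$. The defect forces a net imbalance. Fractionally, this imbalance is absorbed by splitting an $O(1/N)$-fraction of mass from each of many units to a destination $c_j\pm\delta$ whose group mean is exactly that value. The tradeoff should make the block's fractional cost $O(1/N)\cdot N\cdot\delta+O(\delta)$. With $\delta\asymp 1/N$-scale choices this is $o(1)$ per block, so $L\le K\cdot o(1)$. In short, I would write an explicit fractionally calibrated $D$ with two destinations per block and compute its cost directly from the definition.

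\emph{Lower bound on $C$ (the main obstacle).} The deterministic repair must handle groups that mix units from several blocks or mix padding. I would use the integer denominator argument. A calibrated group $S$ has value $\alpha=k/|S|$. Suppose a unit with forecast $c_j$ sits in $S$ and $\alpha\ne c_j$. Then either $|\alpha-c_j|\ge 1/(b_j|S|)$, or $\alpha$ is far because $S$ is close to another center, at distance $\gtrsim K^{-2}$.

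I would charge each block separately. Consider the units of block $j$ placed exactly at value $c_j$. Because $c_j$ is reduced with denominator $b_j$, any group with value $c_j$ has size divisible by $b_j$ and label count in exact proportion. Hence the defect forces at least one unit of block $j$ to leave value $c_j$, or forces a unit from elsewhere to come in. I would split into two cases.
\begin{itemize}
\item If the moved units go to values within $\asymp 1/K^{2}$ of $c_j$, the group they land in has denominator $|S|$. Exact calibration at a value distinct from every center then forces $\Omega(|S|\cdot 1/(b_j|S|))$ total movement in that group, i.e.\ $\Omega(1/K)$. I would then need to amplify this to $\Omega(1)$ per block, by choosing the defect size proportional to $b_j$, so that $\Omega(b_j)$ units must each move $\Omega(1/b_j)$.
\item Otherwise, the moved units travel at least the separation distance. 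The separation is tuned so that the total is again $\Omega(1)$ per block.
\end{itemize}

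The accounting must avoid double charging when one group absorbs defects of several blocks. I would assign each group's cost to the block of its nearest center, and verify that the cost $\sum_{t\in S}|p_t-\alpha|$ of a mixed group is at least the sum of the per-block charges via the triangle inequality.

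Summing gives $C\ge cK$ while $L=o(K)$. Choosing constants conservatively then yields $C-L\ge T^{1/3}/1000$. Getting the parameter balance right (defect size versus $b_j$ versus separation) and handling mixed groups rigorously is where I expect the real work to lie.
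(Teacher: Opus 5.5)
\textbf{The construction itself fails, so the accounting cannot be completed.} In your blocks every sample has forecast exactly $c_j$. The defect $d_j=(\#\text{ones in block }j)-c_jN=s_j-c_jr_j\neq0$ is therefore a first-moment imbalance, and the relaxation pays for it at the same linear rate as a deterministic repair.

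Sum the calibration identities over any destinations used only by block $j$, as in your two-destinations-per-block witness. This gives $\sum_\alpha\alpha M_\alpha=c_jN+d_j$ with $\sum_\alpha M_\alpha=N$. So that block's fractional cost is at least $\bigl|\sum_\alpha M_\alpha(\alpha-c_j)\bigr|=|d_j|$. Concretely, destinations $c_j\pm\delta$ must receive masses with $\delta(M_+-M_-)=d_j$. The moved mass is of order $|d_j|/\delta$, not $O(1)$, so the claimed $o(1)$ block cost is impossible.

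Conversely, one group containing all of block $j$ at its empirical mean $c_j+d_j/N$ is calibrated and costs exactly $|d_j|$. Hence $C\le\sum_j|d_j|$, and your witness certifies no gap at all. Your integer-denominator case analysis can only ever recover this same $|d_j|$: with $N\asymp T^{2/3}\gg b_j$ samples at one forecast, the block itself reaches calibrated values within $O(1/N)$ of $c_j$.

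Neither cross-block fractional transport nor cleverer labels can rescue this architecture. It has only $m\le K+1\asymp T^{1/3}$ distinct forecasts. Theorem~\ref{thm:joint-upper} then gives $C-L\le36\sqrt m\,(1+\ln T)=O(T^{1/6}\log T)=o(T^{1/3})$ for every such input.

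\textbf{The missing idea is to perturb the forecasts, not the labels, and to keep each cluster too small to reach fine calibrated values by itself.} The paper's construction is as follows.
\begin{itemize}
\item Take $k=\lfloor T^{1/3}\rfloor$ and all $\Theta(k^2)$ reduced fractions $c=a/b$ with $k/2<b\le k$; these are pairwise at least $k^{-2}$ apart.
\item At each center, place $b$ samples at each of $c\pm\eps$, each side with exactly $a$ ones, where $\eps=1/(6k^2)$.
\item The two sides' first-moment errors cancel. So the relaxation only swaps mass $h=b\eps<1$ of one positive and one zero sample across distance $2\eps$, at quadratic cost $4b\eps^2$ per center.
\item A deterministic group has value $u/N$. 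If $0<|u/N-c|<2\eps$, then $1/(kN)\le|u/N-c|<2\eps=1/(3k^2)$ forces $N>3k$.
\item At most $2b\le2k<2N/3$ members of such a group come from $c$. So more than $N/3$ members, from other centers or from padding, pay at least $3\eps$ each.
\item Hence every group pays at least $\eps$ per original member, which is a linear cost.
\end{itemize}
Each center thus contributes only $\Theta(1/k)$ to the gap. The $\Theta(k)$ total comes from $\Theta(k^2)\asymp T^{2/3}$ distinct forecasts with $O(k)$ samples each, not from $T^{1/3}$ blocks contributing $\Omega(1)$ apiece.

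The same mixed-group argument is also what justifies padding in the lower bound on $C$. Your remark that padding is ``added at the end'' does not address padding samples joining calibrated groups near a center.
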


\begin{theorem}[Sparse fractional-to-deterministic comparison]\label{thm:sparse-upper}
For every finite binary unit-weight input $(x,p)$, if $p$ has $m$ distinct values, then
\[
  \CalDist(x,p)\leq \LowerCalDist(x,p)+m.
\]
This is an existence statement.  It does not assert that the proof supplies an efficient algorithm for arbitrary real-valued input.
\end{theorem}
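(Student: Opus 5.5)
We follow the route sketched in the introduction: start from an optimal fractional repair, reduce it to a structured form with few destinations ("modes") without increasing cost, and then round it to a deterministic repair, losing at most $m$ in total.

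\textbf{Step 1: types and an optimal fractional plan.} Group the samples into source types $(v,b)$, where $v$ is one of the $m$ distinct forecasts and $b\in\{0,1\}$ is a label. Write $n_{v,b}$ for the multiplicity of type $(v,b)$; there are at most $2m$ types. Because cost and constraints depend only on types, we may symmetrize a fractional plan over the identities of each type. The problem then becomes a transport problem: send mass $n_{v,b}$ from each type to destinations $\alpha$. Each destination must satisfy $\sum_{v,b}(b-\alpha)f_{v,b}(\alpha)=0$, where $f_{v,b}(\alpha)$ is the mass of type $(v,b)$ sent to $\alpha$. We argue that an optimal plan exists. Cost-nonincreasing compression (below) bounds the number of destinations by $2m$, and the feasible set over a fixed number of destinations is compact. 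Hence we may take an $f$ attaining $L$.

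\textbf{Step 2: compression and pairing.} A destination $\alpha$ holding total mass $M_\alpha$ receives one-label mass $\alpha M_\alpha$. Let $P$ be the set of positive-mass destinations, and view the plan as a matrix with types as rows and $P$ as columns. We show that whenever $|P|$ exceeds the number of types, some destination can be removed by a cost-nonincreasing move. Take two destinations $\alpha<\beta$ receiving mass from a common type. Shifting mass along a cycle in the bipartite support graph, while preserving each column's label balance, gives a direction along which cost is linear. Moving in the non-increasing direction eventually empties some entry. Iterating, the support graph becomes a forest with at most $2m$ columns, and each destination is still calibrated.

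Separately, we pair opposite-label source types sharing a forecast $v$. If $(v,0)$ and $(v,1)$ both send mass to some destinations, we may assume they are coupled as much as possible. A destination at $v$ receiving equal-rate mixtures costs nothing and gives integer slack. This pairing is what keeps the eventual rounding loss at one per forecast rather than one per type.

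\textbf{Step 3: marginal-preserving row rounding.} Now round each row of $f$ to integers. We use a dependent rounding on the forest: Birkhoff-style, moving along paths, keeping row sums $n_{v,b}$ exact and each column's label counts within $<1$ of fractional. Rounding generally breaks exact calibration at a destination. We therefore then move each destination value to its new empirical frequency $\alpha'=k_1/(k_0+k_1)$, where $k_b$ counts the integer label-$b$ mass arriving. This changes the cost by at most $|\alpha'-\alpha|\,(k_0+k_1)\leq|k_1-\alpha(k_0+k_1)|$. That quantity is the label imbalance created by rounding, which is bounded by the fractional parts moved at that column. Cost changes from rounding itself are bounded similarly by total rounded mass times distance.

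The accounting target is: total extra cost $\leq$ number of rows with a fractional entry, at most half of which are unpaired. In each forest component, the rounding error is charged to the component's non-integral edges, whose count is at most one less than its number of vertices. Pairing opposite labels per forecast gives a charge of at most one unit per distinct forecast $v$, hence $C\leq L+m$.

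\textbf{Main obstacle.} The hard part is the charging in Step 3: obtaining coefficient exactly one per forecast rather than per type (which would give $2m$), and ensuring that recalibrating one destination does not cascade to others. First I would try rounding so that each column's imbalance has absolute value at most one, with a total over columns bounded by the number of forest leaves. I would then use the pairing of $(v,0)$ with $(v,1)$ to show that the two fractional rows of one forecast contribute imbalances of opposite signs that cancel up to one unit of cost.
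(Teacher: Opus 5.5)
\textbf{There is a genuine gap in Step~3.} This is where the constant $m$ has to be earned, and it gives an accounting target but no argument that reaches it. The mechanism you propose for reaching it does not work.

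A deterministic rounding that keeps each destination's label-$b$ count within $<1$ of its fractional value bounds the recalibration loss $\abs{k_1-\alpha(k_0+k_1)}=\abs{(1-\alpha)\delta_1-\alpha\delta_0}$ only by $<1$ per destination, i.e.\ by $<2m$ in total. The transport change (``rounded mass times distance'') is a further loss of order the number of fractional entries, unless the rounding is unbiased or only moves in cost-nonincreasing directions.

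Coupling $(v,0)$ with $(v,1)$ cannot repair this. Two types with the same forecast need not share any destination. A destination at $v$ is calibrated only when its label frequency is exactly $v$, so a cost-free ``equal-rate mixture'' there is generally unavailable. And nothing makes the rounding errors of two different rows cancel at a common column.

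Two smaller points, neither fatal. A vertex's support graph is a pseudoforest rather than a forest: two types and two interior destinations already give a basic $4$-cycle. Also, shifts along plain cycles do not preserve calibration, whose coefficients are $b-\alpha$. The count of at most $2m$ destinations at a vertex nevertheless survives.

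\textbf{The missing idea} is to get the factor $\tfrac12$ from unbiased rounding, applied to objects that each carry one row per label. In the paper the coefficient one is $\tfrac12$ per mode times at most $2m$ modes, not one unit per forecast.

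The paper's pairing is across forecasts at each destination. For interior $q$, the mass at $q$ is split into modes consisting of one zero-label type $i$ and one one-label type $j$, with weight $u_{i,q}v_{j,q}/(q(1-q)M_q)$, so every mode is itself calibrated. The LP over mode weights then has only the $2m$ row-sum constraints, and a basic optimum keeps $J\leq2m$ modes at cost $F'\leq F$.

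Each source row is then rounded independently with $\mathbb{E}N_{r,k}=w_{r,k}$ and $\mathbb{E}\abs{N_{r,k}-w_{r,k}}=2f(1-f)\leq\tfrac12$. This keeps the expected transport cost equal to $F'$. Since a mode has one row per label, $\mathbb{E}R_k\leq(1-q_k)/2+q_k/2=\tfrac12$. So some realization costs at most $F'+J/2\leq F+m$. Near-optimal finite witnesses then remove any need for the optimal fractional plan you construct in Step~1.

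Your destination-based setup can be fixed in the same spirit. Round the zero-label and one-label bipartite graphs separately by a degree- and marginal-preserving dependent rounding. Then each destination's label-$b$ error has mean zero and lies in $\{-f,1-f\}$, giving expected loss at most $\tfrac12$ per destination. Combine this with the fact that a vertex of the fixed-destination LP has at most $2m$ destinations: each interior destination needs two support entries, while the support rank is at most $2m$ plus the number of interior destinations.
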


Theorem~\ref{thm:growing-lower} is proved in Section~\ref{sec:lower-bound},
and Theorem~\ref{thm:sparse-upper} in Section~\ref{sec:sparse-rounding}.
These two comparisons concern different parameters.  The matching
length-dependent upper bound is Theorem~\ref{thm:matching-global-upper},
giving $\Delta_T=\Theta(T^{1/3})$.  The matching sparse lower bound is
Theorem~\ref{thm:sparse-order-tight}, giving $G(m)=\Theta(m)$ when $T$
is unrestricted.  The two matching constructions need not realize the
joint extremum at every fixed pair $(T,m)$.

For the capped problem there is also the all-input upper envelope
\begin{equation}\label{eq:joint-upper-summary}
 \Delta^{\leq}_{T,m}
 \leq \min\!\left\{m,\,41T^{1/3},\,36\sqrt m\,(1+\ln T)\right\}.
\end{equation}
The third term rules out a uniform lower bound of order
$\min\{m-1,T^{1/3}\}$: taking $T=n^3$ and $m=n+1$ makes its ratio
to $\min\{m-1,T^{1/3}\}=n$ tend to zero.  The exact joint order
remains open; the proof is given in Appendix~\ref{app:joint-bounds}.

\section{A Growing Lower Bound}\label{sec:lower-bound}

The construction exploits the integer denominator of every calibrated
group mean.  Fix an integer $k\geq6$ and set
\begin{equation}\label{eq:farey-set}
 \mathcal F_k=\{a/b:1\leq a<b\leq k,\ \gcd(a,b)=1,\ b>k/2\},
 \qquad K_k=|\mathcal F_k|.
\end{equation}
At each $c=a/b\in\mathcal F_k$, place $b$ unit samples at each of
$p_-=c-\eps$ and $p_+=c+\eps$, with $a$ ones and $b-a$ zeros at each
forecast, where
\begin{equation}\label{eq:farey-epsilon}
 \eps=\frac1{6k^2}.
\end{equation}
The length and number of distinct forecasts are
\begin{equation}\label{eq:Tk-mk}
 T_k=2\sum_{a/b\in\mathcal F_k}b,\qquad m_k=2K_k.
\end{equation}

\begin{lemma}[Separation and feasibility of the centers]\label{lem:center-separation}
All forecasts lie in $(0,1)$ and are distinct across clusters.
Distinct centers satisfy $|c-c'|\geq k^{-2}=6\eps$, and
$h=b\eps\leq1/(6k)<1$.
\end{lemma}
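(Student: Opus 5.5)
The plan is to verify the three assertions directly from the definitions in \eqref{eq:farey-set} and \eqref{eq:farey-epsilon}: the forecasts lie in $(0,1)$, the centers are separated, and $h=b\eps$ is small. I would prove the separation first, since the other two claims use it.

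\textbf{Separation of centers.} Take distinct $c=a/b$ and $c'=a'/b'$ in $\mathcal F_k$. Then
\[
 |c-c'|=\frac{|ab'-a'b|}{bb'}.
\]
The numerator is a nonzero integer because the fractions are distinct and in lowest terms, so it is at least $1$. Since $b,b'\leq k$, we get $|c-c'|\geq 1/(bb')\geq k^{-2}$. By \eqref{eq:farey-epsilon}, $k^{-2}=6\eps$.

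\textbf{Forecasts in $(0,1)$.} Every center satisfies $1\leq a<b\leq k$, so $c\in[1/k,\,1-1/k]$. The forecasts are $c\pm\eps$ with $\eps=1/(6k^2)<1/k$, so $c-\eps>0$ and $c+\eps<1$.

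\textbf{Distinctness across clusters.} A cluster at center $c$ occupies the forecasts $c\pm\eps$, which lie within distance $\eps$ of $c$. Two clusters at distinct centers $c,c'$ could share a forecast only if $|c-c'|\leq 2\eps$. This contradicts $|c-c'|\geq 6\eps$. Within a single cluster, $p_-\neq p_+$ because $\eps>0$.

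\textbf{The bound on $h$.} Since $b\leq k$, we have $h=b\eps\leq k/(6k^2)=1/(6k)$. Because $k\geq 6$, this gives $1/(6k)\leq 1/36<1$.

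There is no real obstacle in this lemma. The only step with content is recognizing that $|ab'-a'b|$ is a nonzero integer; it is what yields the $k^{-2}$ spacing. The hypothesis $b>k/2$ is not needed for any of these claims and presumably enters later in the lower-bound argument.
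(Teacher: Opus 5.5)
Your proposal is correct and matches the paper's proof step for step. The nonzero integer numerator $|ab'-a'b|$ gives the $k^{-2}=6\eps$ spacing, $c,1-c\geq1/k>\eps$ keeps the forecasts in $(0,1)$, the separation prevents different clusters from sharing a forecast, and $b\leq k$ bounds $h$.
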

\begin{proof}
Both $c$ and $1-c$ are at least $1/k>\eps$.  For distinct reduced
fractions, $|c-c'|=|ab'-a'b|/(bb')\geq k^{-2}$, because the numerator
is a nonzero integer.  The two forecasts at one center differ by
$2\eps$, and those at different centers are at least $4\eps$ apart.
Finally $b\leq k$ gives the bound on $h$.
\end{proof}

\begin{lemma}[Finite fractional witness]\label{lem:fractional-witness}
The construction satisfies $\LowerCalDist\leq2T_k\eps^2$.
\end{lemma}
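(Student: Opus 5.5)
We will exhibit an explicit finite fractional post-processing that acts separately on each cluster and uses only the two original forecasts $c\pm\eps$ of that cluster as destinations. Every unit moves at most distance $2\eps$, and the split mass per cluster will turn out to be exactly $2b\eps$. Since $\LowerCalDist$ is an infimum over fractionally calibrated post-processings, exhibiting one witness of cost at most $2T_k\eps^2$ suffices.

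Fix a center $c=a/b\in\mathcal F_k$. Keep every unit at its original forecast, except for two changes. First, a total mass $w$ of one-labeled units at $p_-=c-\eps$ is sent to destination $c+\eps$. Second, a total mass $z$ of zero-labeled units at $p_+=c+\eps$ is sent to destination $c-\eps$. The split may be spread evenly over the relevant units, so each $D_t$ has support of size at most two. Destination $c-\eps$ then carries $a-w$ ones out of total mass $b-w+z$, and destination $c+\eps$ carries $a+w$ ones out of total mass $b+w-z$. Calibration at these two points reads
\[
 a-w=(c-\eps)(b-w+z),\qquad a+w=(c+\eps)(b+w-z).
\]

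Using $a=cb$ and writing $d=z-w$, the two equations become $-w=cd-\eps b-\eps d$ and $w=-cd+\eps b-\eps d$. Adding them gives $2\eps d=0$, so $d=0$ and hence $w=z=b\eps=h$. Feasibility requires $w\leq a$ and $z\leq b-a$. Both follow from Lemma~\ref{lem:center-separation}, since $h\leq1/(6k)<1\leq\min\{a,b-a\}$.

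Distinct clusters use disjoint destinations, again by Lemma~\ref{lem:center-separation}, so no cross-cluster calibration constraint arises. Every other point receives no mass. Each moved unit of mass travels exactly $2\eps$, so the cost of cluster $c$ is $2\eps(w+z)=4b\eps^2$. Summing over clusters gives $\sum 4b\eps^2=2T_k\eps^2$ by \eqref{eq:Tk-mk}. This proves the bound.

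The only step needing care is solving the linear system. A naive one-directional transfer, for example moving only zeros, leads to mismatched requirements of order $b\eps/c$, which are far too large. It is the symmetric two-way exchange that makes the required masses equal to exactly $b\eps$.
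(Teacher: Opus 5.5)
Your proof is correct and follows the paper's argument: at each center, send mass $h=b\eps$ of one-labeled units from $c-\eps$ to $c+\eps$ and the same mass of zero-labeled units the other way, for cost $4b\eps^2$ per center and $2T_k\eps^2$ in total. The differences are cosmetic. The paper uses a single sample of each label and simply checks that each destination keeps mass $b$ with positive mass $a\pm h$, whereas you derive $w=z=b\eps$ from the calibration equations and allow the split to be spread over several units. (Your closing remark is slightly off: a one-directional transfer fails because its two calibration equations are inconsistent, not because the required masses are too large. It plays no role in the proof.)
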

\begin{proof}
At each center, move mass $h=b\eps$ of one positive-label sample
from $p_-$ to $p_+$ and mass $h$ of one zero-label sample in the
opposite direction; leave all other mass fixed.  Both labels are available
and $0<h<1$.  Each destination retains mass $b$, while its positive
mass becomes $a-h$ or $a+h$.  Its mean is therefore
$(a\pm h)/b=c\pm\eps$, as required.  The cost is
$2h(2\eps)=4b\eps^2$ per center, hence $2T_k\eps^2$ in total.
\end{proof}

\begin{lemma}[All-group lower bound before padding]\label{lem:group-certificate}
Every nonempty deterministic calibrated group $G$, with destination
$q_G$, satisfies $\sum_{t\in G}|p_t-q_G|\geq\eps|G|$.
Consequently $\CalDist\geq\eps T_k$.
\end{lemma}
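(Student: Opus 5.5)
The plan is to argue pointwise around the destination $q_G$. Write $n=|G|$ and let $s$ be the number of positive labels in $G$. Calibration forces $q_G=s/n$, a rational with denominator at most $n$.

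The key arithmetic fact is as follows. If $c=a/b\in\mathcal F_k$ and $q_G\neq c$, then
\[
|q_G-c|=\frac{|sb-an|}{nb}\geq\frac1{nb}\geq\frac1{nk}.
\]
The geometric facts come from Lemma~\ref{lem:center-separation}. The forecasts are the points $c\pm\eps$, and distinct centers are at least $6\eps$ apart.

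Let $c^*$ be the center nearest to $q_G$ and set $d=|q_G-c^*|$. I would split into three cases.

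\emph{Case $d=0$.} Every sample of $G$ from cluster $c^*$ is at distance exactly $\eps$. Every sample from another cluster is at distance at least $6\eps-\eps=5\eps$. Hence the sum is at least $\eps n$.

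\emph{Case $d\geq2\eps$.} Every forecast is at distance at least $\eps$ from $q_G$. For the cluster $c^*$ this gives at least $d-\eps\geq\eps$. For any other center $c'$ it gives at least $|c'-q_G|-\eps\geq3\eps-\eps$, since $q_G$ is nearer to $c^*$ than to $c'$. Again the sum is at least $\eps n$.

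\emph{Case $0<d<2\eps$.} Let $v$ be the number of samples of $G$ from cluster $c^*$ and $u=n-v$ the number from other clusters. Each of the $u$ outside samples is at distance at least $6\eps-2\eps-\eps=3\eps$.
\begin{itemize}
\item If $u$ is at least a fixed fraction of $n$, the outside samples alone give total cost at least $\eps n$.
\item Otherwise $v>n/2$ (or a similar fraction), and $v\leq 2b\leq2k$. This bounds $n$ by $O(k)$, so the arithmetic fact gives $d\geq1/(nb)=\Omega(1/k^2)=\Omega(\eps)$.
\end{itemize}
The goal in the second subcase is to show $d$ is large enough, namely comparable to $3\eps$ (for example, $n\leq2b$ gives $d\geq1/(2b^2)\geq3\eps$). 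Then the cluster samples at $c^*\pm\eps$ are each at least about $\eps$ away.

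The main obstacle is this last subcase, where the constants are tight. Threshold choices like $u<v$ only yield $n<4b$ and $d\geq1.5\eps$, which leaves the near-side forecast $c^*\pm\eps$ at distance only $0.5\eps$. To close it I would combine two lower bounds rather than use either alone:
\begin{itemize}
\item the convexity bound $\sum_{t\in G}|p_t-q_G|\geq\bigl|\sum_{t\in G}(p_t-q_G)\bigr|$, using that the mean of $p$ over $G$ must differ from $s/n$ unless the cluster composition is balanced;
\item the pairing bound, under which each opposite pair $c^*-\eps,c^*+\eps$ contributes at least $2\eps$.
\end{itemize}
Only the unpaired surplus on the near side can be cheap. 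That surplus then shifts the forecast mean of $G$, and the denominator condition prevents it from matching $s/n$ too closely. Concretely, I would write the cluster part as $P$ pairs plus $w$ unpaired samples on one side. The pairs give $2\eps P$. For the $w$ surplus samples and the $u$ outsiders I would use the denominator bound on $d$, tuning the split threshold between $u$ and $v$ to reach the constant $1$.

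The consequence $\CalDist\geq\eps T_k$ follows by summing the group bound over the groups of an optimal calibrated $q$.
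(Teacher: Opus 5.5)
\textbf{Gap: the last subcase is never proved.} Your cases $d=0$ and $d\geq2\eps$ are correct. So is the first bullet of the case $0<d<2\eps$, once the fraction is fixed at $u\geq n/3$: each outsider costs more than $3\eps$, so the outsiders alone cost at least $\eps n$. The remaining subcase is the gap. You call it ``the main obstacle'' and offer only an unexecuted convexity-plus-pairing plan, so the proposal does not yet prove the lemma.

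\textbf{The subcase is empty.} Your diagnosis that the constants are tight there is mistaken. With the threshold your first bullet already forces, the subcase is vacuous. Suppose $u<n/3$. Then $v>2n/3$, and $v\leq2b$ gives $n<3b$. Your denominator bound then yields $d\geq1/(nb)>1/(3b^2)\geq1/(3k^2)=2\eps$, which contradicts $d<2\eps$. Your trial split $u<v$ was simply too weak: it gives only $n<4b$ and $d>1.5\eps$.

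\textbf{How the paper closes it.} The paper runs the same observation in the forward direction. From $1/(kn)\leq d<2\eps=1/(3k^2)$ it gets $n>3k$. The at most $2b\leq2k$ members from $c^*$ are then fewer than $2n/3$. The remaining more than $n/3$ outsiders each sit at distance greater than $6\eps-2\eps-\eps=3\eps$, so they already cost more than $\eps n$. The cluster members can be bounded below by zero, so no pairing of the $c^*\pm\eps$ samples or mean-shift estimate is needed.
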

\begin{proof}
Write $N=|G|$ and $q_G=u/N$, where $u$ counts its ones.  If
$|q_G-c|\geq2\eps$ for every center, each member costs at least $\eps$.
Otherwise there is a unique center $c=a/b$ with $|q_G-c|<2\eps$.
At most $g\leq2b\leq2k$ members come from that center.  If $q_G=c$,
these members cost $\eps$ each and all other members cost at least
$5\eps$.  If $q_G\neq c$, integer separation gives
\[
 |q_G-c|=\frac{|bu-aN|}{bN}\geq\frac1{kN},
 \qquad N>3k,\qquad g<2N/3.
\]
Every member from another center then costs more than
$6\eps-\eps-2\eps=3\eps$.  There are more than $N/3$ such members,
so the group cost exceeds $\eps N$.  Summing over the partition into
equal-value groups proves the assertion.
\end{proof}

Conversely, placing all $2b$ samples of each center at $c=a/b$ is
calibrated and costs $2b\eps$.  Thus
\begin{equation}\label{eq:exact-C-family}
 \CalDist=\eps T_k.
\end{equation}

\begin{lemma}[Farey counting bounds]\label{lem:farey-count}
For every integer $k\geq6$,
\[
 \frac{k^2}{72}\leq K_k\leq\frac{k^2}{2},\qquad
 \frac{k^3}{72}\leq T_k\leq k^3,\qquad
 \frac{k^2}{36}\leq m_k\leq k^2.
\]
\end{lemma}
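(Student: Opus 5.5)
Since $m_k=2K_k$ and $T_k=2\sum_{a/b\in\mathcal F_k}b$, the plan is to bound $K_k$ first and then derive the other two pairs of bounds from it. The denominators in $\mathcal F_k$ satisfy $k/2<b\le k$, so
\[
 kK_k< T_k\le 2kK_k .
\]
It therefore suffices to show $k^2/72\le K_k\le k^2/2$. The bounds $T_k\ge k^3/72$ and $m_k\ge k^2/36$ then follow directly, and the upper bounds give $T_k\le k^3$ and $m_k\le k^2$.

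\emph{Upper bound on $K_k$.} Drop the coprimality condition and count all pairs $1\le a<b$ with $k/2<b\le k$. This gives
\[
 K_k\le\sum_{b=\lfloor k/2\rfloor+1}^{k}(b-1)\le \tfrac{k}{2}\cdot(k-1)\le \tfrac{k^2}{2}.
\]
Here there are at most $\lceil k/2\rceil$ values of $b$, and we only need the crude form $\sum_{b\le k}(b-1)=k(k-1)/2\le k^2/2$.

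\emph{Lower bound on $K_k$.} This is the only nontrivial step. We have $K_k=\sum_{k/2<b\le k}\varphi(b)$, because the admissible $a$ for a fixed $b$ are exactly those coprime to $b$ in $[1,b-1]$. To avoid analytic estimates on $\varphi$, I would use an elementary coprimality count. Among pairs $(a,b)$ with $1\le a<b$ and $b\in(k/2,k]$, discard those with a common prime factor $p$. For each prime $p$, the number of such pairs is at most (number of multiples of $p$ in the $b$-range)$\times b/p$, which is roughly $(k/(2p)+1)(k/p)$. Summing over primes, $\sum_p 1/p^2<0.46$, so the pairs removed are at most about $0.46\cdot k^2/2$ plus lower-order terms. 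Meanwhile the total number of pairs is about $3k^2/8$, which leaves a positive constant times $k^2$, comfortably above $k^2/72$ once the lower-order terms are controlled.

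The main obstacle is making those lower-order terms explicit for small $k\ge6$. I would handle this in one of two ways. The first is to use the simpler bound $\varphi(b)\ge$ (number of primes in $(b/2,b)$) and Bertrand-type counts, but that only yields $k^2/\log k$, which is too weak. So I would instead go with the pair-counting sieve above, bounding the $+1$ error by $\sum_{p\le k}k/p\le k(1+\ln\ln k)$-type terms. Alternatively I would check $6\le k\le k_0$ by direct computation (for example, $K_6=\varphi(4)+\varphi(5)+\varphi(6)=8\ge 36/72$), and apply the asymptotic sieve bound for $k>k_0$. The slack factor of $72$ leaves ample room for either route.
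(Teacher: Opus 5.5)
Your reduction to $K_k$ through $kK_k\le T_k\le 2kK_k$ and $m_k=2K_k$, and your upper bound $K_k\le k(k-1)/2$, are the same as the paper's. Your argument for the lower bound is correct in outline but differs from the paper's.

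You sieve prime by prime inside the strip $k/2<b\le k$. Because that strip does not start at $0$, every prime carries a boundary term (the $+1$ in $k/(2p)+1$). This creates an error of size about $k\sum_{p\le k}1/p$, which you must control with an explicit Mertens-type bound (or the cruder harmonic bound $\ln k$) together with a direct check of small $k$. That check is necessary, not just convenient. With your bounds exactly as written, at $k=6$ the strip contains $3+4+5=12$ pairs while your removal estimate is about $13.4$, and the estimate also fails at $k=7,8$. With the $0.46$ and $k(1+\ln\ln k)$ forms you quote, you would need to tabulate $K_k=\sum_{k/2<b\le k}\varphi(b)$ into the high teens. All of this is routine, so your argument closes once written out.

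The paper avoids boundary terms by running the union bound on the full square $[k]^2$. There the pairs with $d\mid a$ and $d\mid b$ number $\lfloor k/d\rfloor^2\le k^2/d^2$, with no additive error. Summing over all $d\ge2$ with $\sum_{d\ge2}d^{-2}\le 25/36$ leaves at least $11k^2/36$ coprime ordered pairs. Removing $(1,1)$, halving by symmetry, and discarding the at most $\binom{\lfloor k/2\rfloor}{2}\le k^2/8$ pairs with $b\le\lfloor k/2\rfloor$ gives $K_k\ge k^2/36-1/2$. This is at least $k^2/72$ precisely when $k\ge6$.

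Your prime-only sieve gives a much better constant (roughly $0.15k^2$), but the lemma does not need it. The paper's version needs no prime-sum estimates or case checks, and it shows where the hypothesis $k\ge6$ comes from.
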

\begin{proof}
The number of noncoprime pairs in $[k]^2$ is at most
$k^2\sum_{d=2}^\infty d^{-2}\leq25k^2/36$, using
$1/4+1/9+\int_3^\infty x^{-2}dx=25/36$.
There are therefore at least $11k^2/36$ coprime ordered pairs.
Remove the sole coprime diagonal pair $(1,1)$, divide by two, and
remove at most $k^2/8$ pairs with $a<b\leq\lfloor k/2\rfloor$:
\[
 K_k\geq\frac{11k^2}{72}-\frac12-\frac{k^2}{8}
       =\frac{k^2}{36}-\frac12\geq\frac{k^2}{72}.
\]
The upper bound follows by counting all $a<b\leq k$.
Since $k/2<b\leq k$, we have $kK_k\leq T_k\leq2kK_k$;
the remaining bounds follow from \eqref{eq:Tk-mk}.
\end{proof}
Combining these lemmas yields
\begin{equation}\label{eq:family-gap}
 \CalDist-\LowerCalDist\geq T_k(\eps-2\eps^2)
 \geq\frac{k}{432}\left(1-\frac1{3k^2}\right)
 \geq\frac{k}{500}\geq\frac{\sqrt{m_k}}{500}.
\end{equation}

\subsection{Padding to Every Length}\label{subsec:padding}
For $T\geq216$, take $k=\lfloor T^{1/3}\rfloor\geq6$ and append
$T-T_k\geq0$ samples with $(x_t,p_t)=(0,0)$.  Leaving their mass at
zero extends the fractional witness at no cost, so
\begin{equation}\label{eq:padded-L}
 \LowerCalDist\leq2T_k\eps^2.
\end{equation}
We must also allow the appended samples in arbitrary calibrated groups.
In such a group let $N$ be its total size, $q_G=u/N$ its mean, and
$r$ its number of original samples.  A group with $r=0$ has nonnegative
cost.  If $q_G$ is at least $2\eps$ from every center, all $r$
original samples cost at least $\eps$ each.  If $q_G=c$, the same
holds by separation.  In the remaining case $0<|q_G-c|<2\eps$,
the preceding integer argument still gives $N>3k$ and at most $2k$
members from $c$.  Original members from other centers cost more than
$3\eps$; each appended member costs
$q_G\geq c-2\eps\geq1/k-2\eps\geq3\eps$.
More than $N/3$ members therefore cost at least $3\eps$, and the
group cost exceeds $\eps N\geq\eps r$.  Summation gives
\begin{equation}\label{eq:padded-C}
 \CalDist\geq\eps T_k.
\end{equation}
Finally $k\geq T^{1/3}/2$ and $1-1/(3k^2)\geq107/108$, whence
\begin{equation}\label{eq:all-T-gap}
 \CalDist-\LowerCalDist
 \geq\frac{k}{432}\left(1-\frac1{3k^2}\right)
 \geq\frac{107}{93312}T^{1/3}>\frac{T^{1/3}}{1000}.
\end{equation}
This proves Theorem~\ref{thm:growing-lower}, without a monotonicity
assumption on deterministic cost under padding.

\section{A Sparse Fractional-to-Deterministic Upper Bound}\label{sec:sparse-rounding}

Write the distinct forecasts as $s_1,\ldots,s_m$, and let
$n_{i,y}=|\{t:p_t=s_i,\ x_t=y\}|$ for each source type $(i,y)$.
We pair types at each fractional destination, compress the modes to
$2m$, and round each source row before recalibrating.

\begin{lemma}[Paired-mode representation]\label{lem:paired-modes}
Suppose both labels occur.  Every finite fractional witness of cost $F$
has a finite mode representation $(q_k,\lambda_k,a_k)_{k\in\Modes}$,
where $q_k\in[0,1]$, $\lambda_k\geq0$, and $a_k\in\R^{2m}$ has
at most one zero-label and one one-label coordinate, of values
$1-q_k$ and $q_k$.  Moreover,
\[
 \sum_k\lambda_ka_k=n,\quad \sum_k\lambda_k=T,\quad
 \sum_k\lambda_kc_k=F,\qquad
 c_k=(1-q_k)|s_i-q_k|+q_k|s_j-q_k|,
\]
where $(i,0),(j,1)$ are the source types of mode $k$.
\end{lemma}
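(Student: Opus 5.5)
Fix a finite fractional witness $D$ of cost $F$, and aggregate it by source type and destination. For each destination $\alpha$ in the finite union of supports and each type $(i,y)$, let $w_{i,y}(\alpha)=\sum_{t:p_t=s_i,x_t=y}D_t(\alpha)$. Write $W_y(\alpha)=\sum_iw_{i,y}(\alpha)$ and $W(\alpha)=W_0(\alpha)+W_1(\alpha)$. Fractional calibration at $\alpha$ reads $W_1(\alpha)=\alpha W(\alpha)$, so $W_0(\alpha)=(1-\alpha)W(\alpha)$. Row sums recover the type counts, $\sum_\alpha w_{i,y}(\alpha)=n_{i,y}$, and the cost is $F=\sum_\alpha\sum_{i,y}w_{i,y}(\alpha)\abs{s_i-\alpha}$. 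The mode representation will be built destination by destination, with $q_k=\alpha$ for every mode created at $\alpha$.

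The main step is a pairing at each fixed $\alpha$ with $W(\alpha)>0$. I would treat the zero-label masses $w_{i,0}(\alpha)$ as a distribution of total $(1-\alpha)W(\alpha)$ over indices $i$, and the one-label masses as a distribution of total $\alpha W(\alpha)$ over $j$. Normalize them to probability vectors $\mu_0(i)=w_{i,0}(\alpha)/((1-\alpha)W(\alpha))$ and $\mu_1(j)=w_{j,1}(\alpha)/(\alpha W(\alpha))$, and take the product coupling. For each pair $(i,j)$ in the supports, create a mode with $q_k=\alpha$, $\lambda_k=W(\alpha)\mu_0(i)\mu_1(j)$, and $a_k=(1-\alpha)e_{(i,0)}+\alpha e_{(j,1)}$. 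Then:
\begin{itemize}
\item Summing over $j$ gives $\sum\lambda_k(1-\alpha)=W(\alpha)(1-\alpha)\mu_0(i)=w_{i,0}(\alpha)$, and symmetrically for the one-label coordinates.
\item The total of $\lambda_k$ at $\alpha$ is $W(\alpha)$.
\item The costs satisfy $\sum\lambda_kc_k=\sum_iw_{i,0}(\alpha)\abs{s_i-\alpha}+\sum_jw_{j,1}(\alpha)\abs{s_j-\alpha}$, which is the cost contributed by $\alpha$.
\end{itemize}
Summing over $\alpha$ yields $\sum_k\lambda_ka_k=n$, $\sum_k\lambda_k=\sum_\alpha W(\alpha)=T$ (each $D_t$ is a probability distribution), and $\sum_k\lambda_kc_k=F$.

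The only obstacle is the degenerate endpoints, where one normalization is undefined. If $\alpha=0$, calibration forces $W_1(0)=0$. Each mode is then $a_k=e_{(i,0)}$ with weight $w_{i,0}(0)$. This has a zero-label coordinate of value $1-q_k=1$ and no one-label coordinate, which the statement allows, and its cost $c_k=\abs{s_i-0}$ is consistent with the formula when the absent term is weighted by $q_k=0$. The case $\alpha=1$ is symmetric. For $0<\alpha<1$ both marginals are positive because $W_1(\alpha)=\alpha W(\alpha)>0$ and $W_0(\alpha)=(1-\alpha)W(\alpha)>0$, so the normalization is valid.

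Finiteness of $\Modes$ follows from finite support and the $2m$ types, so at most $m^2$ modes arise per destination. The hypothesis that both labels occur is not needed for the construction. It only ensures that the representation is consistent with both coordinate blocks being present somewhere, and it is what later compression steps rely on.
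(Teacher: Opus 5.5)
Your proof is correct and is essentially the paper's argument: your product-coupling weight $W(\alpha)\mu_0(i)\mu_1(j)$ simplifies to the paper's $u_{i,q}v_{j,q}/(q(1-q)M_q)$, and you treat the endpoints separately, as the paper does. The one difference is cosmetic: at $q\in\{0,1\}$ the paper uses the both-labels hypothesis to attach an existing opposite-label type as a zero-coefficient partner, so that the pair $(i,0),(j,1)$ in $c_k$ is literally defined, whereas you allow one-sided modes (giving the same vectors $a_k$) and read the absent cost term as having weight zero. That is also why your closing speculation that compression needs the hypothesis is unnecessary.
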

\begin{proof}
Aggregate the mass sent to destination $q$ by zero-label type as
$u_{i,q}$ and one-label type as $v_{j,q}$, and put
$M_q=\sum_i u_{i,q}+\sum_jv_{j,q}$.  Discard $M_q=0$.
Calibration says
\begin{equation}\label{eq:aggregate-calibration}
 \sum_i u_{i,q}=(1-q)M_q,\qquad \sum_jv_{j,q}=qM_q.
\end{equation}
For $0<q<1$, use the mode weights
\begin{equation}\label{eq:pair-weight}
 \lambda_{ij,q}=\frac{u_{i,q}v_{j,q}}{q(1-q)M_q}.
\end{equation}
Then $\sum_j(1-q)\lambda_{ij,q}=u_{i,q}$,
$\sum_iq\lambda_{ij,q}=v_{j,q}$ and
$\sum_{i,j}\lambda_{ij,q}=M_q$.  These identities preserve source
mass, destination mass and transport cost.  At $q=0$ only zero-label
mass occurs; use one mode of weight $u_{i,0}$ per type and any existing
one-label type as its zero-coefficient partner.  Treat $q=1$ symmetrically.
Summing over the finite destination set proves all assertions.
\end{proof}

\subsection{Compression}
For these modes consider the finite linear program
\begin{equation}\label{eq:mode-lp}
 \min_{\theta\geq0}\sum_k\theta_kc_k
 \quad\text{subject to}\quad\sum_k\theta_ka_k=n.
\end{equation}
It is feasible, and every feasible point has $\sum_k\theta_k=T$.
Its feasible region is consequently compact.
\begin{lemma}[Cost-nonincreasing compression]\label{lem:mode-compression}
There is an optimum $\theta^*$ of \eqref{eq:mode-lp} with
$J=|\{k:\theta_k^*>0\}|\leq2m$ and cost $F'\leq F$.
\end{lemma}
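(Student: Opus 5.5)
The plan is to combine existence of an optimum over the compact feasible region with a standard basic-feasible-solution argument. The $2m$ count comes from the fact that the equality system $\sum_k\theta_ka_k=n$ lives in $\R^{2m}$.

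\emph{Existence and cost comparison.} The mode family $\Modes$ is finite, so \eqref{eq:mode-lp} is a finite linear program. Its feasible region is a nonempty polytope: the vector $\theta=\lambda$ from Lemma~\ref{lem:paired-modes} satisfies $\sum_k\lambda_ka_k=n$. It is also bounded, as noted before the statement. Every feasible $\theta$ has $\sum_k\theta_k=T$. To see this, sum the coordinates of $\sum_k\theta_ka_k=n$: each $a_k$ has coordinate sum $(1-q_k)+q_k=1$, and $n$ has coordinate sum $T$. At $q_k\in\{0,1\}$ one coefficient is zero, but the sum is still $1$. Hence boundedness holds, and the linear objective attains a minimum $F'$. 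Since $\lambda$ is feasible with objective value $\sum_k\lambda_kc_k=F$, we get $F'\leq F$.

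\emph{Sparsity.} Among the optimal solutions, choose a vertex of the optimal face; such a vertex exists because the optimal face is a nonempty face of a polytope. Equivalently, start from any optimum $\theta$ and apply the usual support-reduction step. Suppose the columns $\{a_k:\theta_k>0\}$ are linearly dependent, say $\sum_k\mu_ka_k=0$ with $\mu\ne0$ supported on $\supp\theta$. Then $\theta\pm t\mu$ stays feasible for small $t$. Optimality forces $\sum_k\mu_kc_k=0$, since otherwise one of the two directions would strictly decrease the cost. Now move along the direction $\pm\mu$ that has a negative coordinate until some coordinate hits zero. The cost is unchanged and the support strictly shrinks; boundedness of the feasible region guarantees the step is finite. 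Iterate until the support columns are linearly independent. They lie in $\R^{2m}$, so $J\leq 2m$.

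\emph{Remaining checks.} The only point needing care is that the ambient dimension is $2m$, with one coordinate per source type $(i,y)$. Possibly some $n_{i,y}=0$; the independent columns still number at most the dimension, so this costs nothing. I expect no real obstacle here. The mildly delicate step is verifying that $\sum_k\theta_k=T$ holds for all feasible points, including the boundary modes at $q\in\{0,1\}$, because this gives boundedness and the existence of an optimum. After that, the argument is Carath\'eodory's theorem applied to the cone generated by the $a_k$.
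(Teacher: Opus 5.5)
Your proposal is correct and follows essentially the paper's own argument. Both proofs get an optimum from compactness (via $\sum_k\theta_k=T$), obtain $F'\leq F$ from the feasibility of $\lambda$, and then move along a kernel direction of the support columns until a coordinate vanishes, so that the remaining columns are linearly independent in $\R^{2m}$. The only difference is cosmetic: the paper takes a minimal-support optimum and uses the unit column sums to guarantee that the kernel direction has both signs, whereas you get a zero cost derivative from two-sided feasibility and then choose whichever of $\pm\mu$ has a negative coordinate.
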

\begin{proof}
Choose an optimum with smallest positive support.  If its columns are
dependent, choose nonzero $h$ on that support with $\sum_kh_ka_k=0$.
Column sums imply $\sum_kh_k=0$, so $h$ has both signs.  Orient it so
$\sum_kh_kc_k\leq0$ and increase $\tau>0$ in $\theta^*+\tau h$ until
one coordinate vanishes.  A negative cost derivative contradicts
optimality; a zero derivative contradicts minimal support.  Thus the
positive columns are independent and number at most $2m$.
\end{proof}
The compressed representation is a genuine fractional witness: set
$w_{r,k}=\theta_k^*(a_k)_r$, and give each sample of every nonempty
type $r$ the distribution $D'_t(q)=n_r^{-1}\sum_{k:q_k=q}w_{r,k}$.
Row sums are $n_r$, each mode is calibrated, and the cost is $F'$.
Empty types have only zero entries and require no division.

\subsection{Rounding and Recalibration}
Apply Lemma~\ref{lem:row-rounding} independently to the source rows.
The resulting finite random integer array satisfies
\begin{equation}\label{eq:rounding-properties}
 \sum_kN_{r,k}=n_r,\qquad \mathbb EN_{r,k}=w_{r,k},\qquad
 w_{r,k}=0\Longrightarrow N_{r,k}=0.
\end{equation}
Assign actual identities of type $r$ to these integer slots.
A mode contains at most one type per label; write its label counts as
$z_k,o_k$, with an absent row interpreted as zero.  For $z_k+o_k>0$,
give the mode destination $q'_k=o_k/(z_k+o_k)$.  Discard empty modes;
coincident destinations may be merged because each mode is calibrated.
Define its residual cost for every realization by
\begin{equation}\label{eq:mode-residual-definition}
 R_k=\begin{cases}
 (z_k+o_k)|q'_k-q_k|,&z_k+o_k>0,\\
 0,&z_k+o_k=0.
 \end{cases}
\end{equation}
Put $Z_k=(1-q_k)\theta_k^*$, $O_k=q_k\theta_k^*$ and
$\delta_{0,k}=z_k-Z_k$, $\delta_{1,k}=o_k-O_k$.
For a nonempty mode,
\begin{equation}\label{eq:recalibration-cost}
 R_k=|(1-q_k)o_k-q_kz_k|
     =|(1-q_k)\delta_{1,k}-q_k\delta_{0,k}|.
\end{equation}
The empty-mode definition therefore also satisfies
\begin{equation}\label{eq:recalibration-cost-all}
 R_k\leq(1-q_k)|\delta_{1,k}|+q_k|\delta_{0,k}|.
\end{equation}
If $f_{r,k}$ is the fractional part of $w_{r,k}$, row rounding gives
$\mathbb E|N_{r,k}-w_{r,k}|=2f_{r,k}(1-f_{r,k})\leq1/2$.
There is at most one row per label in each mode, hence
\begin{equation}\label{eq:expected-mode-residual}
 \mathbb ER_k\leq(1-q_k)/2+q_k/2=1/2.
\end{equation}
This includes endpoint modes: their absent opposite-label row stays
zero, so their nonempty realizations retain the endpoint destination.
The triangle inequality, row marginals and $J\leq2m$ now yield
\begin{equation}\label{eq:expected-final-cost}
 \mathbb E\sum_t|p_t-q'_t|
 \leq\mathbb E\sum_{r,k}N_{r,k}|s_i-q_k|+\sum_k\mathbb ER_k
 \leq F'+J/2\leq F+m.
\end{equation}
Some realization of this finite random construction costs at most $F+m$.
For every $\eta>0$, take a finite witness with
$F<\LowerCalDist+\eta$ and let $\eta\downarrow0$.
This proves Theorem~\ref{thm:sparse-upper}.  Pure-label inputs have
$\CalDist=\LowerCalDist$ by Section~\ref{sec:preliminaries}.
No fractional optimum or finite encoding of real forecasts was assumed.

\section{A Grid-Free Rational Algorithmic Reduction}
\label{sec:gridfree-algorithm}

This section gives the rational bit-model algorithmic consequence of the
finite binary model.  Multiplicities are binary encoded, so the input is
compressed but the underlying observations remain separately assignable unit
identities.  All optimization and repair operations below are exact rational
operations; the only external complexity parameters are those of the standard
GLS weak-optimization reduction~\citep{groetschel-lovasz-schrijver-1988}.

\subsection{Input Model and Statement}

Let $a_1,\ldots,a_m\in[0,1]$ be the distinct rational forecasts, and let
$n_i^0,n_i^1\in\mathbb Z_{\geq0}$ be the numbers of unit observations with
forecast $a_i$ and label $0$ or $1$.  Put
\[
 T:=\sum_{i=1}^m(n_i^0+n_i^1).
\]
The counts are binary encoded and denote separately assignable identities.
For a self-delimiting rational encoding, let $N_\eta$ be the length of
$(\eta,(a_i,n_i^0,n_i^1)_{i=1}^m)$; thus
\begin{equation}
 N_\eta=\operatorname{bits}(\eta)+\sum_{i=1}^m
 \bigl(\operatorname{bits}(a_i)+\operatorname{bits}(n_i^0)
       +\operatorname{bits}(n_i^1)\bigr)+O(m\log m).
 \label{eq:alg-input-length}
\end{equation}
The expanded sequence has $n_i^0$ copies of $(x,p)=(0,a_i)$ and $n_i^1$
copies of $(1,a_i)$, in any fixed order, and defines
$L=\LowerCalDist(x,p)$.  Delete zero-count rows and retain the same notation
for the remaining rows.  Write $\eta_0:=\min\{\eta,1\}>0$.

\begin{theorem}[Grid-free rational reduction]
\label{thm:gridfree-algorithm}
For every compressed binary unit-weight input with $T\geq1$, rational
forecasts $a_i\in[0,1]$, and rational $\eta>0$, a deterministic rational
bit-model procedure returns rational certificates $B,U$ and a compact integer
mode list such that
\begin{align}
 B&\leq L:=\LowerCalDist(x,p)\leq U, & U-B&<\eta,
 \label{eq:alg-cert-interval}\\
 C_{\mathrm{out}}&\leq U+m\leq L+m+\eta. &&
 \label{eq:alg-output-bound}
\end{align}
The list is exactly calibrated, uses every source-row count, and has at most
$2m$ nonempty modes.  Both the running time and output encoding size are
bounded by a fixed polynomial in
$N_\eta,m,\log(T+1),\log(1+1/\eta_0)$; explicitly expanding it costs
$\Omega(T)$ output records.  The procedure is a GLS rational
weak-separation-to-weak-optimization reduction, not an exact algorithm for
$L$ or $\CalDist$.  The full-vector bound $P_{\rm sep}$, body bound
$P_{\rm body}$, and local bit-length recurrence in
Lemma~\ref{lem:alg-oracle-encoding} compose with the oracle-answer
hypothesis of GLS~\S1.2 and Corollary~4.2.7; the latter's fixed
external coefficients remain symbolic.
\end{theorem}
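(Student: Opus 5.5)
The plan is to compute $L$ to additive accuracy through a convex formulation that fixes no destination grid, and then to reuse the compression and rounding pipeline of Section~\ref{sec:sparse-rounding} in a derandomized, exact-rational form.

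\textbf{Step 1: a grid-free convex program for $L$.}
By Lemma~\ref{lem:paired-modes}, every fractional witness is a nonnegative combination of paired modes. A mode is a triple $(i,j,q)$ with $q\in[0,1]$. It has source vector $(1-q)e_{(i,0)}+qe_{(j,1)}$ and cost $c_{ij}(q)=(1-q)|a_i-q|+q|a_j-q|$. Hence
\[
 L=\min\Bigl\{\sum\theta_kc_k:\ \sum\theta_ka_k=n,\ \theta\geq0\Bigr\}
\]
over this infinite family of columns. I would pass to the LP dual:
\[
 \max\ \langle y,n\rangle
 \quad\text{subject to}\quad
 (1-q)y_{i,0}+qy_{j,1}\leq c_{ij}(q)
 \ \text{for all } i,j \text{ and } q\in[0,1].
\]
This dual has only $2m$ variables. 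For fixed $(i,j)$, the constraint compares a linear function of $q$ with a piecewise-quadratic one whose breakpoints are $a_i$ and $a_j$. Separation therefore reduces to minimizing finitely many quadratics on intervals with rational endpoints. This gives an exact rational weak separation oracle, and no grid on $q$ is needed.

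I would bound the dual variables by $|y|\leq P_{\rm body}$; costs are at most $1$, so the box has polynomial bit size. I would then run the GLS weak-separation-to-weak-optimization reduction (Corollary~4.2.7). Its output yields $U$ as the primal cost of a finite set of violated-constraint columns collected during the run, solved as a finite LP. It yields $B$ as a certified dual value, and accuracy $\eta/2$ gives $U-B<\eta$. Weak duality then gives $B\leq L\leq U$.

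\textbf{Step 2: compact output.}
The columns from Step 1 form a finite mode LP of polynomial size. I would solve it exactly and then take a basic optimal solution. As in Lemma~\ref{lem:mode-compression}, this has at most $2m$ positive modes and cost at most $U$.

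\textbf{Step 3: deterministic rounding.}
The bound (\ref{eq:expected-final-cost}) is an expectation. I would derandomize it row by row using the method of conditional expectations over Lemma~\ref{lem:row-rounding}, or alternatively round each row so that only a bounded number of entries are fractional. Because each mode has one row per label, each mode's residual is at most $\tfrac12$ in expectation, and the pessimistic estimator $\sum_k\mathbb E R_k$ is computable in rational arithmetic. This yields integer counts with cost at most $U+J/2\leq U+m$. Recalibrating by $q'_k=o_k/(z_k+o_k)$ keeps everything rational, preserves every row count, and gives an exactly calibrated list. Finally, $U<B+\eta\leq L+\eta$ gives (\ref{eq:alg-output-bound}).

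\textbf{Main obstacle.}
The hardest part is the bit-complexity bookkeeping that the GLS reduction requires. One must bound the encoding length of each separation answer (the full-vector bound $P_{\rm sep}$ and the local recurrence), because the oracle's minimizing $q$ are roots of quadratics with rational data. These are in fact rational: the minimizer of a quadratic is the vertex or an endpoint of an interval. That size control needs checking carefully, so I would first verify this step before relying on the GLS corollary.
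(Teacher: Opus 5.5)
Your route differs from the paper's, and the step your route depends on is left unproved. The paper runs GLS on two bodies: the dual $K_D$, and a primal epigraph body $K_P$ built from the perspective program of Lemma~\ref{lem:alg-representation}. It obtains $U$ by an explicit truncate--normalize--$Q$-grid repair of the primal output, compresses the $2m^2$ lifted modes by kernel pivots, and proves strong duality separately via supporting potentials. You call GLS only on the dual and read the primal off the cuts generated during the run. A basic optimum of that finite LP then gives at most $2m$ modes directly. If completed, this avoids the primal body, the convex-minorant analysis, the repair map and the separate duality proof.

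However, the sentence ``accuracy $\eta/2$ gives $U-B<\eta$'' does not follow. The GLS guarantee concerns the output $y$ relative to $K_D$. It says nothing by itself about the LP restricted to the collected columns, which could a priori be infeasible (if few continuous cuts were returned) or far from optimal. Nothing you wrote bounds $U$ from above. This is the real obstacle in your plan, not the rationality of the stationary points, which is routine.

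The missing argument runs as follows. Let $K'$ be the polyhedron cut out by the full box together with the constraints returned during the run, on the active rows. Three facts make the transcript valid for $K'$:
\begin{itemize}
\item every returned cut is an exact constraint violated at its query;
\item every membership answer was given at a point of $K_D\subseteq K'$;
\item $K'$ has the same outer radius and contains the same inner ball. You still have to exhibit that ball, e.g.\ radius $1/2$ about $-\mathbf 1$ in $[-2T,1]^{d_D}$.
\end{itemize}
Since an oracle algorithm's output depends only on its transcript, $y$ is also a weak optimum for $K'$. Then \eqref{eq:alg-gls-loss} gives $\max_{K'}n^{\mathsf T}z\leq n^{\mathsf T}y+\tau(1+2TR/r)$.

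Next, add the endpoint columns $e_{(i,0)}$ at cost $a_i$ and $e_{(j,1)}$ at cost $1-a_j$, so the restricted primal is always feasible; box cuts need not become columns. The restricted dual then has $y\leq\mathbf 1$. A coordinate below $-T$ would force its objective below $-1<0$, while $0$ is feasible with value $0$. Hence it has an optimizer in the box, so in $K'$, and finite LP duality gives $U\leq\max_{K'}n^{\mathsf T}z$.

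On the other side, $y\in S(K_D,\tau)$ and constraint normals of Euclidean norm at most one give a uniform shift $\gamma\leq\tau$, so $B\geq n^{\mathsf T}y-T\tau$. Therefore $U-B\leq\tau(1+T+2TR/r)$, and a small rational $\tau$ closes the step. This never assumes strong duality for the semi-infinite program; it yields it.

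Separately, in Step~3 your estimator $\sum_k\mathbb E R_k$ omits the transport term $\sum_{r,k}N_{r,k}|s_i-q_k|$. That term varies across realizations, so it must be included in the estimator. The paper instead uses the exact state costs $H_k$ and picks a minimizing vertex of each row face by sorting, rather than enumerating the row law of Lemma~\ref{lem:row-rounding}, which can have exponentially many atoms.
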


The proof has five ingredients: destination pairing, a finite convex
representation, exact rational certificate repair, support compression, and
legal deterministic row rounding.  The two pure-label cases are explicit
endpoint assignments.

\subsection{Finite Paired Representation}

Assume both labels occur and put
\[
 I_0:=\{i:n_i^0>0\},\qquad I_1:=\{j:n_j^1>0\}.
\]
For $i\in I_0$, $j\in I_1$, and $q\in[0,1]$, define
\begin{equation}
 c_{ij}(q):=(1-q)|a_i-q|+q|a_j-q|.
 \label{eq:alg-pair-cost}
\end{equation}

\begin{lemma}[Destination pairing]
\label{lem:alg-pairing}
Every finite fractional repair can be represented by paired modes with the
same source masses and cost.  At an interior destination $q$, if $U_i(q)$ and
$V_j(q)$ are the transported zero- and one-label masses and
$H_q:=\sum_iU_i(q)+\sum_jV_j(q)$, use
\begin{equation}
 \lambda_{ijq}:=\frac{U_i(q)V_j(q)}{q(1-q)H_q}.
 \label{eq:alg-product-pairing}
\end{equation}
At $q=0$ or $1$, the absent label is only a zero-mass partner.
\end{lemma}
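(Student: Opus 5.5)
The plan is to verify the claimed identities directly, one destination at a time. This is the same computation as in Lemma~\ref{lem:paired-modes}, rewritten in the notation of the compressed rational input.

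Fix a finite fractional repair and a destination $q$ with $H_q>0$; destinations with $H_q=0$ are discarded. Fractional calibration at $q$ gives the aggregate identities
\[
\sum_iU_i(q)=(1-q)H_q,\qquad \sum_jV_j(q)=qH_q,
\]
since the positive-label mass at $q$ must equal $q$ times the total mass there.

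For interior $q\in(0,1)$, define $\lambda_{ijq}$ as in \eqref{eq:alg-product-pairing}. Each mode $(i,j,q)$ sends mass $(1-q)\lambda_{ijq}$ from type $(i,0)$ and $q\lambda_{ijq}$ from type $(j,1)$ to $q$, so each mode is calibrated by construction. The key steps are the three marginal checks:
\[
\sum_j(1-q)\lambda_{ijq}=\frac{U_i(q)\sum_jV_j(q)}{qH_q}=U_i(q),\qquad
\sum_iq\lambda_{ijq}=V_j(q),
\]
and, by the same identities,
\[
\sum_{i,j}\lambda_{ijq}=\frac{(1-q)H_q\cdot qH_q}{q(1-q)H_q}=H_q.
\]
The cost is then preserved because it is linear in transported mass. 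The mode's cost $\lambda_{ijq}c_{ij}(q)$ is exactly the cost of its two source contributions, and summing over $i,j$ reproduces $\sum_iU_i(q)|a_i-q|+\sum_jV_j(q)|a_j-q|$ via the marginals above. Only indices in $I_0$ and $I_1$ carry positive mass, so the modes are indexed within $I_0\times I_1$.

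The endpoints are the only delicate point, since the formula divides by $q(1-q)$. At $q=0$, calibration forces $V_j(0)=0$, so only zero-label mass arrives. I would use one mode per $i$ with weight $\lambda=U_i(0)$ and an arbitrary $j\in I_1$ as partner. This partner exists because both labels occur, and its coefficient $q=0$ means it contributes no mass and no cost, since $c_{ij}(0)=|a_i|$. The case $q=1$ is symmetric, using a partner from $I_0$.

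Finally, summing over the finite support of destinations preserves every source mass and the total cost, which completes the argument.
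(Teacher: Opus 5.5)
Your proposal is correct and follows the paper's proof essentially step for step. You discard destinations with $H_q=0$, use the two calibration identities to check the product-weight marginals and total, invoke linearity for the cost, and handle $q\in\{0,1\}$ with a fixed zero-mass partner of the opposite label, adding only the explicit marginal and cost computations that the paper leaves to the reader.
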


\begin{proof}
Discard destinations with $H_q=0$.  Calibration gives $\sum_iU_i(q)=(1-q)H_q$ and
$\sum_jV_j(q)=qH_q$, so summing the displayed product weights over one
index recovers the corresponding source mass.  Linearity preserves cost.
At $q=0$, all $V_j(0)$ vanish and the positive $U_i(0)$ can use one fixed
one-row partner; $q=1$ is symmetric.  Thus no endpoint division is made.
\end{proof}

For $a,b\in[0,1]$, let $\overline c_{ab}$ be the largest convex minorant of
$c_{ab}$.  Direct expansion gives, for $a\leq b$,
\begin{equation}
\overline c_{ab}(q)=
\begin{cases}
a+(b-a-1)q,&0\leq q\leq a,\\
2ab-a+(1-a-b)q,&a\leq q\leq b,\\
-a+(1+a-b)q,&b\leq q\leq1.
\end{cases}
\label{eq:alg-convex-envelope}
\end{equation}
Indeed $c_{ab}$ exceeds its middle endpoint chord by
$2(q-a)(b-q)$, and the three slopes are nondecreasing.  If $a>b$, the
middle quadratic and the joins make $c_{ab}$ convex, so
$\overline c_{ab}=c_{ab}$.  Define its perspective by
\begin{equation}
 g_{ab}(u,v):=
 \begin{cases}(u+v)\overline c_{ab}(v/(u+v)),&u+v>0,\\0,&u=v=0.
 \end{cases}
 \label{eq:alg-perspective}
\end{equation}

\begin{lemma}[Finite convex representation]
\label{lem:alg-representation}
Writing $L=\LowerCalDist(x,p)$,
\begin{equation}
L=\min\left\{\sum_{i\in I_0,j\in I_1}g_{a_i a_j}(u_{ij},v_{ij}):
\begin{array}{l}
u_{ij},v_{ij}\geq0,\\
\sum_j u_{ij}=n_i^0\ (i\in I_0),\quad
\sum_i v_{ij}=n_j^1\ (j\in I_1)
\end{array}\right\}.
\label{eq:alg-finite-representation}
\end{equation}
The minimum is attained, and every feasible point lifts to a finite calibrated
transport of the same cost with at most two destinations per source pair.
\end{lemma}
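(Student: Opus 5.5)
We prove \eqref{eq:alg-finite-representation} by establishing two inequalities and then attainment. Write $\Phi$ for the right-hand side.

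\emph{First, $\Phi\le L$.} Start from a finite fractional repair of cost $F$. Apply Lemma~\ref{lem:alg-pairing} to get paired modes $\lambda_{ijq}$.

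For each source pair $(i,j)$, aggregate over destinations:
\[
u_{ij}=\sum_q(1-q)\lambda_{ijq},\qquad v_{ij}=\sum_q q\lambda_{ijq}.
\]
The pairing identities give the row constraints $\sum_ju_{ij}=n_i^0$ and $\sum_iv_{ij}=n_j^1$. Endpoint modes are handled the same way: at $q=0$ they contribute only to $u$, and the zero-mass partner carries no $v$.

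The cost contributed by pair $(i,j)$ is $\sum_q\lambda_{ijq}c_{ij}(q)\ge\sum_q\lambda_{ijq}\overline c_{ij}(q)$. Put $M=\sum_q\lambda_{ijq}=u_{ij}+v_{ij}$, so that $\sum_q\lambda_{ijq}q=v_{ij}$. By convexity (Jensen),
\[
\sum_q\lambda_{ijq}\overline c_{ij}(q)\ge M\,\overline c_{ij}(v_{ij}/M)=g_{a_ia_j}(u_{ij},v_{ij}).
\]
Hence every $F$ is at least $\Phi$, and taking the infimum over repairs gives $\Phi\le L$.

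\emph{Second, $L\le\Phi$ via the lift.} Fix a feasible $(u,v)$ and a pair with $M=u_{ij}+v_{ij}>0$, and set $\bar q=v_{ij}/M$.

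The convex minorant of a continuous function on $[0,1]$ that is piecewise quadratic with finitely many pieces agrees with $c$ at finitely many points. Any value $\overline c(\bar q)$ is therefore a convex combination of at most two such contact points $q_1\le\bar q\le q_2$, with $\overline c(q_\ell)=c(q_\ell)$ and weights $\mu_1+\mu_2=1$, $\mu_1q_1+\mu_2q_2=\bar q$. The explicit formula \eqref{eq:alg-convex-envelope} makes this concrete:
\begin{itemize}
\item when $a\le b$, the contact points are $0,a,b,1$, because $c_{ab}$ is linear on $[0,a]$ and on $[b,1]$ and equals the chord at $a$ and $b$;
\item when $a>b$, we have $\overline c=c$, so $q_1=q_2=\bar q$ works.
\end{itemize}
I would verify the equalities at $0$ and $1$ directly from \eqref{eq:alg-pair-cost}.

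Create modes at $q_\ell$ with weights $M\mu_\ell$. Each mode sends zero-label mass $(1-q_\ell)M\mu_\ell$ from row $i$ and one-label mass $q_\ell M\mu_\ell$ from row $j$. Each mode is then calibrated at its own destination, and the two modes together use exactly $u_{ij}$ and $v_{ij}$. The cost is $M\sum_\ell\mu_\ell c(q_\ell)=g(u_{ij},v_{ij})$.

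Merging modes that share a destination preserves calibration, since calibration is linear. Dividing each row's outflow by its count $n_i^0$ or $n_j^1$ gives per-sample distributions $D_t$. This yields a finite calibrated transport of cost equal to the objective, so $L\le\Phi$ and the lift claim holds with at most two destinations per pair.

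\emph{Attainment.} The feasible set is a product of scaled simplices, hence compact. Each $g_{ab}$ is continuous on the orthant: it is positively homogeneous and $\overline c$ is bounded, so $|g|\le u+v\to0$ near the origin. A continuous function on a compact set attains its minimum, so the minimum exists and equals $L$.

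\emph{Main obstacle.} The main obstacle is the lift step: justifying that $\overline c_{ab}$ is attained by at most two genuine contact points, including at the endpoints. I would handle this by reading the contact points off \eqref{eq:alg-convex-envelope} case by case, rather than invoking a general Carathéodory argument.
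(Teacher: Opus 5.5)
Your proposal is correct and follows the paper's proof. You aggregate the paired modes and apply Jensen to the convex minorant for $\Phi\le L$, lift each feasible pair to at most two calibrated modes realizing $g_{ab}$ for the reverse inequality, and get attainment from compactness and continuity of $g$ at the origin. The differences are minor: where you split into consecutive points of $\{0,a,b,1\}$, the paper uses a single mode at $q=v/(u+v)$ whenever $a\ge b$ or $q\notin(a,b)$, since $c_{ab}=\overline c_{ab}$ there. Also, your aside that $c_{ab}$ and $\overline c_{ab}$ agree at only finitely many points is false in general (they agree on $[0,a]\cup[b,1]$ when $a\le b$ and everywhere when $a>b$), but your explicit case analysis never relies on it.
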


\begin{proof}
For paired modes from Lemma~\ref{lem:alg-pairing}, aggregate
$u_{ij}=\sum_q(1-q)\lambda_{ijq}$ and
$v_{ij}=\sum_q q\lambda_{ijq}$.  Jensen's inequality for the convex minorant
gives the displayed objective as a lower bound.  Conversely, for
$\theta=u+v>0$, set $q=v/\theta$.  Use one mode at $q$ if $a\geq b$ or
$q\notin(a,b)$.  If $a<b$ and $a<q<b$, use endpoints $a,b$ with weights
\[
 \theta_a=\theta\frac{b-q}{b-a},\qquad
 \theta_b=\theta\frac{q-a}{b-a}.
\]
They preserve $u,v$ and attain the chord $g_{ab}(u,v)$.  Divide incident
masses by each positive integer source count and assign the resulting
probability vector to every identity of that source type.  The feasible set is
compact and $g$ is continuous at the origin, so the minimum is attained.
If $I_1=\varnothing$, send all zero-label identities to $0$ and both distances
equal $\sum_t p_t$; if $I_0=\varnothing$, send all one-label identities to
$1$ and both equal $\sum_t(1-p_t)$.
\end{proof}

\subsection{Rational Primal and Dual Bodies}

For $a>b$, put
\begin{equation}
 d_{ab}(u,v):=au-(1-b)v,\qquad
 \phi_{ab}(u,v):=au-(1+b)v+\frac{2v^2}{u+v}\quad(u+v>0).
 \label{eq:alg-perspective-quadratic}
\end{equation}
Then
\begin{equation}
 g_{ab}(u,v)=
 \begin{cases}\max\{d_{ab}(u,v),-d_{ab}(u,v),\phi_{ab}(u,v)\},&u+v>0,\\0,&u=v=0,
 \end{cases}
 \label{eq:alg-perspective-ab}
\end{equation}
and the quadratic piece is active exactly when $b\leq v/(u+v)\leq a$.
With $\theta=u+v$ and $w=z-au+(1+b)v$, its epigraph is exactly
\begin{equation}
 z\geq d_{ab}(u,v),\quad z\geq-d_{ab}(u,v),\quad
 \theta\geq0,\quad w\geq0,\quad \theta w\geq2v^2.
 \label{eq:alg-rotated-cone}
\end{equation}
For $a\leq b$, define
\begin{align}
 A_{ab}^{(1)}(u,v)&:=au+(b-1)v,\\
 A_{ab}^{(2)}(u,v)&:=(2ab-a)u+(2ab+1-2a-b)v,\\
 A_{ab}^{(3)}(u,v)&:=-au+(1-b)v.
 \label{eq:alg-affine-perspectives}
\end{align}
On the whole nonnegative quadrant, including the origin,
\begin{equation}
 g_{ab}(u,v)=\max\{A_{ab}^{(1)}(u,v),A_{ab}^{(2)}(u,v),A_{ab}^{(3)}(u,v)\}.
 \label{eq:alg-perspective-affine}
\end{equation}

\begin{lemma}[Primal and dual separation]
\label{lem:alg-bodies}
Eliminate one $u$ coordinate per active zero row and one $v$ coordinate per
active one row.  With $0\leq h\leq T+1$, the primal epigraph is a bounded
full-dimensional rational body $K_P$.  A uniform positive flow and
$h^0=T+1/2$ give the inner radius $r_P:=1/(100m^3)$, and
$R_P:=4m(T+1)$ is an outer radius.  In active coordinates the dual value is
\begin{equation}
 L=\max\left\{\sum_{i\in I_0}n_i^0\alpha_i+
 \sum_{j\in I_1}n_j^1\beta_j:
 (1-q)\alpha_i+q\beta_j\leq c_{ij}(q)\ \forall i,j,q\right\}.
 \label{eq:alg-dual}
\end{equation}
The dual body is the full-dimensional active product body
\begin{equation}
 K_D:=\left\{(\alpha,\beta)\in[-2T,1]^{I_0}\times[-2T,1]^{I_1}:
 (1-q)\alpha_i+q\beta_j\leq c_{ij}(q)\ \forall i,j,q\right\},
 \label{eq:alg-active-dual-body}
\end{equation}
in $\mathbb R^{I_0}\times\mathbb R^{I_1}$, of dimension
$d_D=|I_0|+|I_1|$.  It has inner radius $\rho_D:=1/2$ and outer radius
$R_D:=4T(m+1)$.  The exact strong separation oracles and their complete
encoding bounds are rational and are accounted for by
Lemma~\ref{lem:alg-oracle-encoding} and Appendix~\ref{app:algorithm-encoding}.
\end{lemma}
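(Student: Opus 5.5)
We must prove Lemma alg-bodies: the primal epigraph $K_P$ is bounded, full-dimensional, with inner radius $r_P$ and outer radius $R_P$; the dual formula \eqref{eq:alg-dual} holds; and $K_D$ is full-dimensional with radii $\rho_D=1/2$ and $R_D$. I would handle the primal body, then the dual value, then the dual body.

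\textbf{Primal body.} Fix, for each active zero row $i$, one eliminated coordinate $u_{i,j_0(i)}=n_i^0-\sum_{j\neq j_0(i)}u_{ij}$, and similarly for each active one row $j$. The remaining free coordinates, together with an epigraph height $h$ satisfying $h\geq\sum g_{a_ia_j}(u_{ij},v_{ij})$ and $h\leq T+1$, define $K_P$. The constraints are the following.
\begin{itemize}
\item The free coordinates are nonnegative.
\item The eliminated coordinates are nonnegative (these are linear inequalities).
\item The convex epigraph pieces come from \eqref{eq:alg-perspective-affine} and \eqref{eq:alg-rotated-cone}.
\end{itemize}
So $K_P$ is convex, closed and rational.

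For the outer bound, every coordinate lies in $[0,T]$ and $0\leq h\leq T+1$. The number of coordinates is at most $2m^2+1$, but the coordinates sum to at most $2T$ plus $h$. This gives Euclidean norm at most $3T+1\leq 4m(T+1)$.

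For the inner ball, take the uniform flow $u_{ij}=n_i^0/|I_1|$ and $v_{ij}=n_j^1/|I_0|$. Every free and eliminated coordinate is then at least $1/m$, since counts are at least $1$. Its cost is at most $T$, because every $c\leq1$ per unit mass and $g\leq\overline c$. Choosing $h^0=T+1/2$ leaves slack $1/2$ below the cap.

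A perturbation of size $r$ in the free coordinates changes the eliminated ones by at most $mr$. It changes the objective by at most the Lipschitz constant of $\sum g$ times $mr\cdot m$. Each $g_{ab}$ is Lipschitz with constant $\leq 3$ on the quadrant, as a max of affine maps with coefficients bounded by $2$ plus the quadratic piece, whose gradient is bounded. Taking $r=1/(100m^3)$ keeps all coordinates positive and keeps the height inside $(\text{cost},T+1)$.

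\textbf{Dual value.} I would obtain \eqref{eq:alg-dual} by LP duality on the paired-mode LP. Lemma~\ref{lem:alg-pairing} and Lemma~\ref{lem:alg-representation} show that $L$ is the infimum of $\sum\lambda_{ijq}c_{ij}(q)$ over nonnegative finitely supported $\lambda$ with marginals $\sum_{j,q}(1-q)\lambda_{ijq}=n_i^0$ and $\sum_{i,q}q\lambda_{ijq}=n_j^1$.

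Weak duality is immediate by summing the constraint against $\lambda$. For strong duality on this semi-infinite LP, I would use the finite representation: for each pair $(i,j)$, the convex minorant $\overline c_{ij}$ is piecewise defined with the breakpoints in \eqref{eq:alg-convex-envelope}. The dual constraint $(1-q)\alpha_i+q\beta_j\leq c_{ij}(q)$ for all $q$ is equivalent to the affine function lying below $\overline c_{ij}$. That in turn is equivalent to $\alpha u+\beta v\leq g_{ij}(u,v)$ on the quadrant. Hence the dual is the Fenchel/Lagrangian dual of the finite convex program \eqref{eq:alg-finite-representation}. Slater holds, since the uniform flow is strictly feasible in the relative interior, so strong duality gives equality.

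Finally, dual optima can be taken in $[-2T,1]$. Constraints at $q=0$ and $q=1$ give $\alpha_i\leq a_i\leq1$ and $\beta_j\leq 1-a_j\leq1$. Lower truncation is harmless: each $\alpha_i$ can be raised to $\min_j\min_q\ldots$ without lowering the objective. Alternatively, $\alpha_i\geq-2T$ is forced for optimality, since the objective is at least $0$ at $\alpha=\beta=0$ and the other terms are at most $T$.

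\textbf{Dual body.} The point $\alpha=\beta=-1/2$ satisfies $(1-q)\alpha+q\beta=-1/2\leq c-1/2$. So the ball of radius $1/2$ around it satisfies all constraints, since $|(1-q)\delta_\alpha+q\delta_\beta|\leq\max|\delta|$. It also lies in the box $[-2T,1]$ when $T\geq1$, giving $\rho_D=1/2$. The outer radius follows from the box: the norm is at most $2T\sqrt{2m}\leq4T(m+1)$.

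\textbf{Main obstacle.} I expect the main obstacle to be the strong duality step, specifically identifying the semi-infinite constraint family with the perspective-domination constraint so that the dual optimum is attained with no gap. I would first try proving it directly: take an optimal primal $(u,v)$, obtain subgradients of $g$ via KKT, and check that they satisfy all $q$-constraints.

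The separation-oracle encoding claims are deferred to Lemma~\ref{lem:alg-oracle-encoding}, so I would not prove them here.
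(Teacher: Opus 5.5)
Your proposal is correct. The radii and the box follow the paper; the strong-duality step takes a genuinely different route. You also omit the oracle construction, discussed at the end.

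\textbf{What matches.} Your treatment of the following agrees with the paper up to inessential choices:
\begin{itemize}
\item the primal radii;
\item the box $[-2T,1]$: the $q=0,1$ constraints give $\alpha_i\le a_i$ and $\beta_j\le 1-a_j$, and a coordinate below $-2T$ would make the objective negative, contradicting $L\ge0$;
\item the dual radii.
\end{itemize}
Your center $-\tfrac12\mathbf 1$, with slack at least $\tfrac12$, works as well as the paper's all-$(-1)$ vector. Your $\ell_1$-based outer bound $3T+1$ is tighter than needed.

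\textbf{Your duality route.} You dualize the finite convex program \eqref{eq:alg-finite-representation}. Positive homogeneity of $g_{a_ia_j}$ collapses each Lagrangian inner infimum to $0$ or $-\infty$. The surviving condition $\alpha_iu+\beta_jv\le g_{a_ia_j}(u,v)$ on the quadrant is equivalent to the semi-infinite constraints, because an affine function lies below $c_{ij}$ iff it lies below the largest convex minorant $\overline c_{ij}$. The Slater condition at the strictly positive uniform flow then gives zero gap and dual attainment. You need the relative-interior, affine-constraint version of Slater (e.g.\ Rockafellar's Theorem~28.2), since $g$ is only defined on the quadrant.

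\textbf{The paper's duality route.} The paper never forms a Lagrangian. It uses the value function $V(n)$, which is convex, finite and positively homogeneous on the open orthant. It takes a subgradient $\lambda$ at $n$ and gets $\lambda^{\mathsf T}n=V(n)$ from homogeneity. It then obtains each constraint $(1-q)\alpha_i+q\beta_j\le c_{ij}(q)$ directly from $V(n+ta_{ijq})\le V(n)+t\,c_{ij}(q)$, by adding one mode of weight $t$.

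\textbf{Trade-off.} The paper's argument is self-contained: it needs only a subgradient at an interior point, and neither a constraint qualification nor the convex-minorant identification. Yours rests on a standard theorem. In exchange it makes explicit that $K_D$ is the set of linear minorants of the perspective epigraphs, which is the structural fact the oracles exploit.

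\textbf{What you skip.} You defer every oracle claim. The paper's proof defers only the encoding lengths and argues exact rational strong separation itself.
\begin{itemize}
\item For $K_D$ this needs a sentence, since there is a continuum of constraints. After the box check, observe that on each interval cut by $0,a_i,a_j,1$ the slack $c_{ij}(q)-(1-q)\alpha_i-q\beta_j$ is affine or quadratic. Its exact minimum is therefore at an endpoint or a rational stationary point, which yields an exact rational cut.
\item For $K_P$, take the rational gradient of the maximizing affine or quadratic piece for each pair. Summed over pairs and pulled back through the eliminated coordinates, these give one valid violated cut.
\end{itemize}
Adding these observations completes your argument.
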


\begin{proof}
The displayed affine gradients and the quadratic gradient below have
coordinate magnitudes at most $2$ on the nonnegative quadrant.  Thus
\begin{equation}
 |g_{ab}(u,v)-g_{ab}(u',v')|
 \leq5(|u-u'|+|v-v'|).
 \label{eq:alg-g-lipschitz}
\end{equation}
The uniform feasible flow has objective at most $T$ and entries at least
$1/m$.  Its reconstructed perturbation bounds, both inner balls, and
strong duality are proved in Appendix~\ref{app:algorithm-geometry}.

For completeness, the primal oracle first checks nonnegativity, row sums, and
$h$ bounds.  For $a>b$, at $\theta=u+v>0$ the complete quadratic gradient is
\begin{equation}
 \left(a-2(v/\theta)^2,
 -(1+b)+4v/\theta-2(v/\theta)^2\right),
 \qquad \theta=u+v;
 \label{eq:alg-quadratic-gradient}
\end{equation}
the affine pieces use their displayed constant gradients.  At $\theta=0$ use
the zero subgradient.  Select a maximizing candidate for every pair and sum
the supporting inequalities.  If the query $(u^0,v^0,h^0)$ violates the
epigraph, the single pulled-back aggregate cut is
\[
 h\geq F(u^0,v^0)+\sum_{i,j}\langle s_{ij},
 (u_{ij},v_{ij})-(u^0_{ij},v^0_{ij})\rangle,
 \qquad F:=\sum_{i,j}g_{a_i a_j},
\]
after substituting all eliminated row and column coordinates.  It is valid
for $K_P$ and violated at the query.  The rotated-cone test in
\eqref{eq:alg-rotated-cone} is exact rational arithmetic.

For the dual, the oracle checks every coordinate of the auxiliary box first:
it returns $-z_\ell\leq2T$ if $y_\ell<-2T$ and $z_\ell\leq1$ if
$y_\ell>1$.  Only after this box check it scans the continuous slack
$s_{ij}(q)=c_{ij}(q)-(1-q)\alpha_i-q\beta_j$.  On each interval cut by
$0,a_i,a_j,1$, the slack is affine or quadratic, so its exact minimum is at an
endpoint or a rational stationary point.  A negative minimum returns the
two-coordinate constraint cut with normal divided by its largest absolute
coordinate; otherwise membership is exact.
At $q=0,1$, feasibility gives $\alpha_i\leq a_i$ and
$\beta_j\leq1-a_j$.  The supporting-potential argument in
Appendix~\ref{app:algorithm-geometry} shows that $[-2T,1]$ retains a
dual optimizer, and that deleted-row potentials can be extended by $-1$.
The box tests and slack scan together therefore constitute exact separation
of the stated active body.
\end{proof}

\subsection{Weak Optimization and Exact Certificates}

We use the following GLS interface for both bodies.
\begin{lemma}[GLS weak-optimization interface and error transfer]
\label{lem:alg-gls-interface}
Let $K\subseteq\mathbb R^d$ be a convex body with rational description,
$K\subseteq B_2(0,R)$ and $B_2(a_0,r)\subseteq K$ for rational
$a_0,r,R$.  Given a rational objective $c\in\mathbb Q^d$ and
rational $0<\tau<r$, set
\begin{equation}
 \begin{gathered}
 \|z\|_2=(\sum_jz_j^2)^{1/2},\qquad \|z\|_\infty=\max_j|z_j|,\\
 S(K,\sigma)=\{z:\operatorname{dist}_2(z,K)\leq\sigma\},\qquad
 S(K,-\sigma)=\{z:B_2(z,\sigma)\subseteq K\}.
 \end{gathered}
 \label{eq:alg-gls-norms}
\end{equation}
The GLS body input is
\begin{equation}
 (K;d,R,\mathcal O_{\rm WSEP}),\qquad K\subseteq B_2(0,R),\qquad0<\tau<r.
 \label{eq:alg-gls-interface-body}
\end{equation}
A rational WSEP query $(z,\sigma)$, $\sigma>0$, returns membership in $S(K,\sigma)$ or a rational
$v$ with $\|v\|_\infty=1$ such that
\begin{equation}
 v^\mathsf Tx<v^\mathsf Tz+\sigma\quad(x\in S(K,-\sigma)).
 \label{eq:alg-gls-wsep-output}
\end{equation}
WOPT returns rational $y\in S(K,\tau)$ satisfying
\begin{equation}
 c^\mathsf Tx<c^\mathsf Ty+\tau\quad(x\in S(K,-\tau)),
 \label{eq:alg-gls-wopt-output}
\end{equation}
or reports an empty interior kernel; the latter is impossible here.  If
$\|c\|_2\leq M$ and $x^*$ maximizes $c^\mathsf Tx$ over $K$, then
\begin{equation}
 x^*_{\tau}:=(1-\tau/r)x^*+(\tau/r)a_0\in S(K,-\tau),\qquad
 c^\mathsf Tx^*-c^\mathsf Tx^*_{\tau}\leq2MR\tau/r.
 \label{eq:alg-gls-interior-compensation}
\end{equation}
Consequently,
\begin{equation}
 \max_{x\in K}c^\mathsf Tx-c^\mathsf Ty
 \leq\tau(1+2MR/r),
 \label{eq:alg-gls-loss}
\end{equation}
and the rational choice
\begin{equation}
 \tau=\frac{\varepsilon r}{2(r+2MR)}
 \label{eq:alg-gls-delta}
\end{equation}
loses at most $\varepsilon/2$.  For the primal and dual calls this gives
\begin{equation}
 \delta_P=\frac{\varepsilon_P r_P}{2(r_P+2R_P)},\qquad
 \delta_D=\frac{\varepsilon_D\rho_D}{2(\rho_D+2TR_D)},
 \quad \varepsilon_P=\frac{\eta_0}{100m^3},\quad
 \varepsilon_D=\frac{\eta_0}{4(T+1)}.
 \label{eq:alg-gls-body-deltas}
\end{equation}
\end{lemma}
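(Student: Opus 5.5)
The plan is to prove the two quantitative claims directly. The existence of a WOPT oracle built from WSEP is taken from GLS \S1.2 and Corollary~4.2.7, as the lemma indicates. So only three things need checking: the interior-point compensation \eqref{eq:alg-gls-interior-compensation}, the loss bound \eqref{eq:alg-gls-loss}, and the arithmetic behind \eqref{eq:alg-gls-delta}--\eqref{eq:alg-gls-body-deltas}.

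\textbf{Step 1: the compensated point is deep.} Set $x^*_\tau=(1-\tau/r)x^*+(\tau/r)a_0$. By convexity of $K$, $K$ contains the scaled ball $(1-\tau/r)x^*+(\tau/r)B_2(a_0,r)=B_2(x^*_\tau,\tau)$. Hence $x^*_\tau\in S(K,-\tau)$.

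\textbf{Step 2: it loses little objective.} We have $c^\mathsf T x^*-c^\mathsf T x^*_\tau=(\tau/r)c^\mathsf T(x^*-a_0)$. Cauchy--Schwarz bounds this by $(\tau/r)M\|x^*-a_0\|_2$. Both $x^*$ and $a_0$ lie in $K\subseteq B_2(0,R)$, so $\|x^*-a_0\|_2\leq2R$. This gives the bound $2MR\tau/r$.

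\textbf{Step 3: the loss bound.} Apply \eqref{eq:alg-gls-wopt-output} at $x=x^*_\tau$, which is allowed by Step~1. It gives $c^\mathsf T x^*_\tau<c^\mathsf Ty+\tau$. Adding Step~2 yields \eqref{eq:alg-gls-loss}.

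\textbf{Step 4: the choice of $\tau$.} With \eqref{eq:alg-gls-delta}, $\tau(1+2MR/r)=\tau(r+2MR)/r=\varepsilon/2$. We also need $\tau<r$, which I must check. It holds whenever $\varepsilon\leq 2(r+2MR)$; in particular it holds if $\varepsilon<2r$. The required $\varepsilon$ values are tiny, so this is the regime we are in. The rationality of $\tau$ is immediate because it is built from rational quantities.

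\textbf{Step 5: specializing to the two calls.} The delicate bookkeeping is the Lipschitz parameter $M$ for each body. For the primal, the objective is minimizing the single height coordinate $h$, so $c=-e_h$ and $M=1$. Substituting $r=r_P$ and $R=R_P$ gives $\delta_P$. For the dual, the objective has coefficients $n_i^0,n_j^1$, and $\|c\|_2\leq\|c\|_1=T$, so $M=T$. Substituting $\rho_D,R_D$ gives $\delta_D$ as displayed. Finally confirm $\delta_P<r_P$ and $\delta_D<\rho_D$: we have $\varepsilon_P\leq 1/(100m^3)<2r_P$ and $\varepsilon_D\leq1/4<1=2\rho_D$.

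\textbf{Main obstacle.} The hard part is not the inequalities but the interface: the black-box WOPT guarantee must apply to these bodies. Concretely, the WSEP answers must satisfy the GLS oracle-answer encoding hypothesis. I would defer that to Lemma~\ref{lem:alg-oracle-encoding}, consistent with the theorem statement. Here I would only note that the interior kernel $S(K,-\tau)$ is nonempty because it contains $a_0$, since $\tau<r$. This rules out the ``empty kernel'' output.
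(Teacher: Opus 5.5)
Your proposal is correct and follows the paper's proof: the paper writes $x^*_\tau+e=(1-\tau/r)x^*+(\tau/r)(a_0+re/\tau)$ for $\|e\|_2\leq\tau$, which is your Minkowski-combination argument, and then gets the loss bound from $\|x^*-a_0\|_2\leq2R$ and the WOPT guarantee exactly as in your Steps 2--4. The paper adds one point you deferred: an exact strong cut, normalized to $\|v\|_\infty=1$, is automatically a valid WSEP answer for every $\sigma>0$, because $S(K,-\sigma)\subseteq K\subseteq S(K,\sigma)$.
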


\begin{proof}
For $\|e\|_2\leq\tau$, write
$x^*_\tau+e=(1-\tau/r)x^*+(\tau/r)(a_0+re/\tau)\in K$.
Thus $x^*_\tau$ lies in the interior kernel.  The objective bounds follow
from $\|x^*-a_0\|_2\leq2R$ and the WOPT guarantee.  A strong exact
cut is also a valid WSEP answer for every signed neighborhood.  This is the
only GLS error-transfer argument used in the proof.
\end{proof}

Put
\begin{equation}
 S:=N_\eta+m+\lceil\log_2(T+1)\rceil+
 \lceil\log_2(1/\eta_0)\rceil+1.
 \label{eq:alg-composition-size}
\end{equation}

\begin{lemma}[Fraction-free kernel-bit bound]
\label{lem:alg-kernel-bits}
For $r$ active source coordinates, $r\leq2m$, select $r+1$ positive
paired source columns.  An exact integer kernel direction, including
rank-deficient cases, and the common-denominator pivot update can be computed with
the integer-payload bounds $K_{2m}(h_0)$ and $R_{\rm B}(h_0)$ of
Appendix~\ref{app:algorithm-kernel}; in particular
$K_{2m}(h_0)\leq(2m+2)h_{\rm ker}$.  The direction preserves the full active
source vector in $\mathbb R^{I_0}\times\mathbb R^{I_1}$ and has both signs.
\end{lemma}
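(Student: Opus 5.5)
We argue with fraction-free Gaussian elimination (Bareiss) on the integer-scaled source-incidence matrix. Each positive paired mode column is a vector in $\mathbb R^{I_0}\times\mathbb R^{I_1}$ with at most one zero-label coordinate $1-q_k$ and one one-label coordinate $q_k$. Here $r=|I_0|+|I_1|\leq 2m$.

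\emph{Integer columns.} Take rational destinations $q_k=\nu_k/\mu_k$ whose bit length is at most $h_0$. Multiply column $k$ by $\mu_k$. The column becomes $(\mu_k-\nu_k,\nu_k)$ placed on rows $(i,j)$, every entry has bit length at most $h_0+1$, and this rescaling does not change which columns are linearly dependent. With $r+1$ columns in $\mathbb R^r$ a nontrivial kernel vector must exist, and this holds in the rank-deficient case as well.

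\emph{Kernel direction.} Run Bareiss elimination on the $r\times(r+1)$ integer matrix until it is in echelon form with rank $\rho\leq r$. Choose one free column. Cramer's rule gives a kernel vector whose coordinates are signed $\rho\times\rho$ minors, up to sign. Sylvester's identity guarantees that each intermediate Bareiss entry is itself a minor of the original matrix, so it is an integer. Hadamard's inequality bounds every minor by $\rho!\,2^{\rho(h_0+1)}$, and each column has at most two nonzero entries. Together these give a per-coordinate bit bound of the form $(2m+2)h_{\rm ker}$ once $h_{\rm ker}$ is defined as $h_0$ plus logarithmic terms. This is the bound $K_{2m}(h_0)$. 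Finally, rescale back by the factors $\mu_k$ to get the direction $h$ on the original columns.

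\emph{Both signs.} The kernel property is $\sum_k h_k a_k=0$. Adding the coordinates of every column $a_k$ gives $(1-q_k)+q_k=1$, exactly as in Lemma~\ref{lem:mode-compression}. So $\sum_k h_k=0$, and since $h\neq 0$ it has entries of both signs. The relation $\sum_k h_k a_k=0$ also means that moving along $h$ leaves the full active source vector unchanged.

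\emph{Pivot update.} Move to $\theta+\tau h$, where the step is the ratio $\tau=\min_{h_k<0}\theta_k/|h_k|$. Put $\theta$ over a common denominator $D$ with numerators $\Theta_k$. The updated numerators are $|h_{k^*}|\Theta_k-\Theta_{k^*}h_k$ over the denominator $D|h_{k^*}|$. Their bit length is at most that of $\Theta$ plus $K_{2m}(h_0)+1$, which is $R_{\rm B}(h_0)$.

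\emph{Main obstacle.} The hard part is keeping the determinant bound uniform when the columns are rank-deficient or when rows are shared. I would handle this by always restricting to a maximal nonsingular minor selected by Bareiss pivoting. Any remaining dependence then appears as an explicit free column, and no division beyond the exact Bareiss quotients is ever performed.
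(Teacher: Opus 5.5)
Your route is the paper's own. You clear each column's denominator separately and eliminate fraction-free on $\rho$ independent rows and $\rho+1$ columns of rank $\rho$, so the omitted rows, and hence the full active source vector, are automatically preserved. You then take signed Cramer minors as kernel coordinates, rescale by the column denominators, use the unit column sums to get $\sum_k h_k=0$ and both signs, and pivot over a common denominator. Two points need repair before this proves the statement as written.

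First, your update numerator has the wrong sign. With $h_{k^*}<0$ and $\tau=\Theta_{k^*}/(D|h_{k^*}|)$, the point $\theta+\tau h$ has numerators $|h_{k^*}|\Theta_k+\Theta_{k^*}h_k$ over $D|h_{k^*}|$; this is the paper's $W'_\ell=W_\ell|d^{\rm ker}_k|+W_kd^{\rm ker}_\ell$. Your formula gives $2|h_{k^*}|\Theta_{k^*}\neq0$ at the pivot, so no mode is removed.

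Second, you have misassigned $R_{\rm B}(h_0)$, and as a result one of the two claimed bounds is never established. The per-pivot register growth is $K_\rho(h_0)$, giving lengths at most $p_j+K_r(h_0)$. By contrast, $R_{\rm B}(h_0)=2M_{2m}(h_0)+\lceil\log_2(2m+1)\rceil+8$ is an absolute bound on the \emph{undivided} elimination workspace. Sylvester's identity only controls the stored Bareiss quotients, which are minors. The numerators $a^{(s)}_{ss}a^{(s)}_{ij}-a^{(s)}_{is}a^{(s)}_{sj}$ formed before each exact division are not covered by it. Neither are the sums used to produce the Cramer coordinates; the paper back-substitutes $z_i=(-\Delta g_i-\sum_{j>i}U_{ij}z_j)/U_{ii}$. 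All of these are sums of at most $\rho+1$ products of two minors, so they need roughly twice the minor length plus $\lceil\log_2(\rho+1)\rceil$ bits. Stating that estimate closes the gap.

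Three smaller remarks:
\begin{itemize}
\item The bound $\rho!\,2^{\rho(h_0+1)}$ is the Leibniz expansion bound (the source of the paper's $M_s(h_0)$), not Hadamard's inequality.
\item Since $0\leq\nu_k\leq\mu_k$, the integerized entries fit in $h_0$ bits. With your looser $h_0+1$, the minors overshoot $M_\rho(h_0)$ by about $\rho$ bits, which the slack $\lceil\log_2(T+1)\rceil+8$ in $K_\rho(h_0)$ need not absorb.
\item $h_{\rm ker}$ is fixed by the paper, so the final claim is the arithmetic check $K_{2m}(h_0)=(2m+1)h_0+\lceil\log_2((2m)!)\rceil+\lceil\log_2(T+1)\rceil+10\leq(2m+2)h_{\rm ker}$.
\end{itemize}
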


\begin{proof}
Columnwise denominator clearing, fraction-free elimination, exact
back-substitution, and the ratio pivot are proved in full in
Appendix~\ref{app:algorithm-kernel}; rank-deficient selected columns are
handled there by selecting independent rows and extending by zero.
\end{proof}

\begin{lemma}[Fixed-polynomial oracle encoding bound]
\label{lem:alg-oracle-encoding}
For $S$ in \eqref{eq:alg-composition-size}, the exact primal and dual oracles
have answer length at most $P_{\rm sep}(S,Q)$ and time
$\operatorname{Time}_{\rm oracle}(S,Q)$ for a query of length $Q$; body
descriptions have length at most $P_{\rm body}(S)$.  The two GLS calls use
\[
 N_*\leq2P_{\rm calls}(\Lambda_S),\qquad
 Q_{\max}\leq P_{\rm qry}(\Lambda_S),
\]
and all post-GLS operands have length at most $P_{\rm int}(S)$.  The local
step bound is $P_{\rm steps}(S)$.  The locally explicit polynomials
$P_{\rm sep},P_{\rm body},P_{\rm int}$, their composition with the
fixed external GLS polynomials, and $P_{\rm steps}$ are defined in
Appendix~\ref{app:algorithm-encoding}.  Thus
\begin{equation}
\begin{split}
 \operatorname{Time}_{\rm compact}
 &\leq P_{\rm setup}(S)+2P_{\rm ext}(\Lambda_S)
       +N_*[Q_{\max}+P_{\rm sep}(S,Q_{\max})+1]\\
 &\quad+N_*\operatorname{Time}_{\rm oracle}(S,Q_{\max})
       +A_{\rm post}(S)\beta_{\rm bit}(P_{\rm int}(S))\\
 &\leq P_{\rm steps}(S).
\end{split}
\label{eq:alg-total-time-main}
\end{equation}
The external GLS polynomials have symbolic coefficients
$c_C,c_Q,c_W$ and degrees $d_C,d_Q,d_W$ from the cited oracle-machine
theorem.  No numerical GLS constants are asserted.
The parameter $\Lambda_S$ is defined in \eqref{eq:alg-gls-composed-bound}.
$P_{\rm setup}$ and $\beta_{\rm bit}$ are the one-time construction bound
and reference-machine primitive charge proved in Appendix~\ref{app:algorithm-encoding}.
\end{lemma}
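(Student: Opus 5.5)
This lemma is a bookkeeping statement, and I would prove it by bounding the bit length of every object the procedure touches, in the order the procedure creates them, and then substituting these bounds into the GLS oracle-machine running-time theorem. \textbf{First}, I would bound the body descriptions $P_{\rm body}(S)$. The bodies $K_P$ and $K_D$ of Lemma~\ref{lem:alg-bodies} are specified by the rational forecasts $a_i$, the counts $n_i^0,n_i^1$, and the rational radii $r_P=1/(100m^3)$, $R_P=4m(T+1)$, $\rho_D=1/2$, $R_D=4T(m+1)$. The GLS accuracies $\delta_P,\delta_D$ of Lemma~\ref{lem:alg-gls-interface} are formed from these quantities and from $\eta_0$. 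Each of these numbers is a bounded-degree rational expression in quantities of bit length at most $S$, so its length is $O(S)$. The bodies have dimension $O(m^2)$, so the total description length is a fixed polynomial in $S$. This also gives $P_{\rm setup}$.

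\textbf{Second}, the main local step is the separation-answer bound $P_{\rm sep}(S,Q)$ and the oracle time. I would treat the two oracles separately.

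\emph{Primal oracle.} For a query $(u^0,v^0,h^0)$ of length $Q$, the oracle evaluates every $g_{a_ia_j}$ through \eqref{eq:alg-perspective-ab} and \eqref{eq:alg-perspective-affine}. Each candidate is affine, or quadratic over $\theta=u+v$, in the query entries and the $a$'s. The gradients \eqref{eq:alg-quadratic-gradient} are rational functions of degree at most $2$ in $v/\theta$. The oracle then sums $O(m^2)$ of these and back-substitutes the eliminated coordinates. Each individual operation multiplies at most a constant number of operands, so each resulting entry has length $O(S+Q)$. The sum contributes only $O(\log m)$ extra bits after common denominators are taken. I would state the bound as $P_{\rm sep}(S,Q)=c(S+Q)\cdot\mathrm{poly}(m)$ with the constants written out explicitly.

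\emph{Dual oracle.} The box checks output unit vectors, which have constant length. The slack scan evaluates $s_{ij}(q)$ only at $q\in\{0,a_i,a_j,1\}$ and at stationary points of a quadratic whose coefficients have length $O(S+Q)$. Each such stationary point is a ratio of linear forms, so its length is again $O(S+Q)$. The oracle then normalizes the cut by its largest coordinate, which keeps the length linear. The time bound follows by multiplying $O(m^2)$ intervals, a constant number of operations per interval, and bit-operation cost polynomial in $O(S+Q)$.

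\textbf{Third}, I would compose these local bounds with the GLS theorem (\S1.2, Corollary~4.2.7). That theorem states that, given the body description length $P_{\rm body}(S)$, an objective and an accuracy $\tau$ of length $O(S)$, and an oracle whose answers have length bounded by a polynomial in its query length, the number of calls $N_*$ and the maximal query length $Q_{\max}$ are bounded by the symbolic polynomials $c_C\Lambda^{d_C}$ and $c_Q\Lambda^{d_Q}$. Here $\Lambda_S$ collects the dimension, the encoding length of $R$, $\tau$ and $c$, and the answer-polynomial parameters. This yields $N_*\leq2P_{\rm calls}(\Lambda_S)$, with the factor two accounting for the two calls, and $Q_{\max}\leq P_{\rm qry}(\Lambda_S)$.

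\textbf{Fourth}, I would bound the post-GLS operands. These arise in certificate repair, in the kernel compression of Lemma~\ref{lem:alg-kernel-bits}, and in row rounding. Their inputs are the GLS outputs, of length $\leq Q_{\max}$ or a polynomial in $\Lambda_S$, together with the input data. Lemma~\ref{lem:alg-kernel-bits} bounds each compression pivot by $R_{\rm B}$ and $K_{2m}\leq(2m+2)h_{\rm ker}$. The key point is that the number of pivots is at most the initial support size, which is $O(m^2)$, and that every pivot is recomputed from a common-denominator representation rather than compounded. This gives a polynomial $P_{\rm int}(S)$. The count of arithmetic operations, $A_{\rm post}(S)$, is polynomial, and each operation costs $\beta_{\rm bit}(P_{\rm int}(S))$.

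\textbf{Finally}, the total time is the sum of five terms: setup, the external GLS overhead, writing queries and answers ($N_*(Q_{\max}+P_{\rm sep}+1)$), oracle time, and post-processing. This is exactly \eqref{eq:alg-total-time-main}. Closure of polynomials under composition then gives $P_{\rm steps}(S)$.

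The main obstacle is the fourth step, controlling bit growth through the iterated kernel pivots. A naive bound would compound the lengths across $O(m^2)$ rounds. I would handle this by showing that each new support vector is an exact solution of a linear system of size at most $2m+1$ built from the original column data. Cramer/Bareiss bounds then apply afresh at every round, rather than accumulating from round to round.
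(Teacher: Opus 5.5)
Your outline follows the same ledger as Appendix~\ref{app:algorithm-encoding}: body length, per-query answer length and time, GLS composition, post-GLS operand lengths, and the five-term sum. However, the argument you give for the step you flag as the main obstacle does not work.

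\textbf{Compression bit growth.} In the compression of Lemma~\ref{lem:alg-compression}, the only object to which Cramer/Bareiss bounds apply afresh is the kernel direction. It solves a system built from at most $2m+1$ integerized original columns, and that is what $K_r(h_0)\leq(2m+2)h_{\rm ker}$ controls. The weight vector is different. While more than $r$ modes are positive, the current weights are not determined by any small system in the column data, for three reasons. First, the $r+1$ columns are chosen arbitrarily, so the set of previously modified modes can exceed the rank and be dependent. Second, modes outside the current selection keep values produced in earlier rounds. Third, the step $t=w_k/|d_k^{\rm ker}|$ uses an already updated weight.

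So your claim that each new weight vector is an exact solution of a small system of original column data is false for these intermediate, non-basic vectors. It cannot bound $W_\ell$, $H_j$, or the ratio cross-products, which do accumulate from round to round. The paper's point is that this accumulation is \emph{additive}. With one common denominator, $H_{j+1}=H_j|d_k^{\rm ker}|$ and $W'_\ell=W_\ell|d_k^{\rm ker}|+W_kd_\ell^{\rm ker}$, including modes outside the kernel support. Each pivot therefore adds at most the length of one fresh kernel coordinate, giving $p_{j+1}=p_j+(2m+2)h_*+8$, and at most $2m^2$ pivots yield the polynomial $q_{{\rm comp},2m^2}$. A genuinely non-accumulating bound would need a different pivot rule, for example basis exchange that keeps the modified columns independent. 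Even then the right-hand side $n-\sum_{\text{untouched }k}w^{(0)}_kA_k$ involves the original weights, not only column data.

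\textbf{GLS answer-length hypothesis.} In your third step you invoke GLS for an oracle whose answer length is bounded by a polynomial in the query length. What you actually prove is $P_{\rm sep}(S,Q)=c(S+Q)\,\mathrm{poly}(m)$, which depends on the instance through $S$ and $m$, so it is not a fixed polynomial in $Q$. Putting ``answer-polynomial parameters'' into $\Lambda_S$ does not fix this: the constants $c_C,c_Q,c_W$ and degrees $d_C,d_Q,d_W$ are uniform only once a single answer polynomial is fixed for all instances. The paper handles this by prefixing every request with the body name, which contains the encoded input. Then $S\leq5D\leq5Q$, and the single polynomial $\Psi(q)=P_{\rm sep}(5(q+1),q)$ bounds every answer.

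A smaller inaccuracy: summing the $O(m^2)$ pair contributions does not cost only $O(\log m)$ extra bits. Their denominators differ, for instance through the factors $u+v$ of the quadratic pieces and through the reconstructed coordinates, and forming a common denominator adds the summands' denominator lengths. Your $\mathrm{poly}(m)$ factor absorbs this, but the stated reason is wrong.
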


\begin{proof}
The displayed bound includes body setup and the exact oracle ledger,
the two external GLS transcripts, query/answer communication, and the
post-processing closure.  Its fixed local factors, including all cut-vector
coordinates, kernel registers, denominator recurrences, and rational-record
operations, are proved in Appendix~\ref{app:algorithm-bits}.
\end{proof}

\begin{lemma}[Exact rational certificate extraction]
\label{lem:alg-certificates}
The two WOPT outputs can be repaired to exact rational certificates satisfying
\begin{equation}
 B\leq L\leq U,\qquad U-B<\eta.
 \label{eq:alg-certificate-width}
\end{equation}
The upper certificate is a finite calibrated transport and the lower
certificate satisfies every dual inequality for every $q\in[0,1]$.
\end{lemma}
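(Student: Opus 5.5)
The plan is to turn each approximate WOPT output into an exact certificate by a rational repair step. I then set $U$ to the exact primal objective of the repaired flow and $B$ to the exact dual objective of the repaired potentials. Weak duality gives $B\leq L\leq U$ automatically. The width bound $U-B<\eta$ follows because each repair loses only a controlled amount relative to the GLS guarantees of Lemma~\ref{lem:alg-gls-interface}: $\varepsilon_P/2$ plus repair loss on the primal side, and $\varepsilon_D/2$ plus repair loss on the dual side. With $\varepsilon_P=\eta_0/(100m^3)$ and $\varepsilon_D=\eta_0/(4(T+1))$, the total stays below $\eta_0\leq\eta$.

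\emph{Primal repair.} The WOPT output $y=(u^0,v^0,h^0)$ lies only in $S(K_P,\delta_P)$. After reconstructing the eliminated coordinates, its row sums are exact, but some entries may be negative by up to about $\delta_P$. I would mix $y$ with the uniform positive flow $y_0$ (entries at least $1/m$) used for the inner radius: take $y'=(1-\lambda)y+\lambda y_0$ with rational $\lambda$ of order $m\delta_P$. This makes every entry nonnegative and keeps the equality constraints exact, since both points satisfy them. The objective changes by at most $5\cdot(\text{coordinate change})$ by \eqref{eq:alg-g-lipschitz}, which is $O(m^2 R_P\lambda)$. I then set $U:=\sum g_{a_ia_j}(u'_{ij},v'_{ij})$. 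This is a rational number, but $\phi_{ab}$ contains the term $2v^2/(u+v)$, so it must be evaluated exactly instead of being taken from $h$. The lift in Lemma~\ref{lem:alg-representation} turns $(u',v')$ into a finite calibrated transport of cost exactly $U$, which gives $L\leq U$ and the primal-certificate claim. The approximation claim $U\leq L+\varepsilon_P/2+O(m^2R_P\lambda)$ comes from \eqref{eq:alg-gls-loss} applied to the objective $-h$, where $\|c\|_2=1$.

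\emph{Dual repair, the main obstacle.} The WOPT output $(\alpha,\beta)$ lies only within $\delta_D$ of $K_D$, so the continuum of constraints indexed by $q\in[0,1]$ may be violated by up to roughly $2\delta_D$. A uniform shift fixes this: set $\alpha'_i=\alpha_i-\gamma$ and $\beta'_j=\beta_j-\gamma$ with $\gamma:=2\delta_D$ rational. Because $(1-q)+q=1$, every constraint gains slack exactly $\gamma$. To make the violation bound rigorous, I would take a point $z\in K_D$ with $\|z-(\alpha,\beta)\|_2\leq\delta_D$. Then $(1-q)\alpha_i+q\beta_j\leq(1-q)z_i+qz_j+\delta_D\leq c_{ij}(q)+\delta_D$. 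So $\gamma=\delta_D$ already suffices, and I take $2\delta_D$ for strictness. The exact slack scan from Lemma~\ref{lem:alg-bodies} then verifies all $q$ in exact rational arithmetic, by checking endpoints and stationary points of each affine or quadratic piece, so the certificate is checkable rather than merely asserted. Deleted rows receive potential $-1$ as in the appendix, which keeps feasibility for the inactive constraints. I set $B:=\sum n_i^0\alpha'_i+\sum n_j^1\beta'_j$, and weak duality from \eqref{eq:alg-dual} gives $B\leq L$. The loss is at most $\varepsilon_D/2+T\gamma$, and $T\gamma=2T\delta_D\leq\varepsilon_D/2$ by the choice of $\delta_D$ in \eqref{eq:alg-gls-body-deltas}. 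For the dual the objective norm is $M\leq T$, which is why $\delta_D$ carries the factor $2TR_D$.

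\emph{Closing the argument.} Summing the two losses gives $U-B\leq \varepsilon_P/2+O(m^2R_P\lambda)+\varepsilon_D<\eta_0$. This requires $\lambda$ to be chosen so that $m^2R_P\lambda\leq\varepsilon_P/4$, which is compatible with $\lambda\gtrsim m\delta_P$ because $\delta_P\approx\varepsilon_P r_P/(4R_P)$ and $r_P=1/(100m^3)$. I expect the bookkeeping of $\lambda$ against $\delta_P$ to be the delicate point. If the constants do not close, I would fall back to taking $\lambda$ as the exact minimal ratio that restores nonnegativity, computed from the actual negative entries. This ratio is at most $m\delta_P$ because the $\ell_2$ distance bound controls each entry.
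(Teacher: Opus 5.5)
Your proof is correct, and it differs from the paper's on the primal side. The paper reconstructs the flow from $\bar y_P$, truncates negative entries, and renormalizes each active row to its integer mass, using an $\ell_1$ estimate of $4d$ per row. It then recomputes the epigraph value exactly and applies a $Q$-grid largest-remainder rounding with $Q=\lceil40m^2/\eta_0\rceil$, which costs at most $\eta_0/4$. Because it uses the crude bound $\delta_P\leq\varepsilon_P$, it ends with $U<L+\tfrac{67}{100}\eta_0$. You instead take a convex combination with the explicit interior flow $y_0$. This keeps the row equalities exact for free and needs no renormalization estimate. By using the actual size of $\delta_P$, you get a much smaller loss.

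The paper's route buys not the inequality but the fixed lattice denominator $Q$. The appendix bit-length bounds, namely the register bound $q_{\rm rep}$ and the compression recurrence, are built on that denominator. Your mixed flow inherits the denominators of the GLS output. These are still polynomial, but that downstream accounting would have to be redone.

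On the dual side the two arguments are essentially the same. The paper shifts by the exact negative minimum slack found by the scan, and shows this is at most $\delta_D$ because every constraint normal $(1-q,q)$ has Euclidean norm at most one. You shift by the a priori amount $2\delta_D$, using the same norm bound through a nearest point of $K_D$. Both give an exactly feasible vector, possibly outside the box, and weak duality then gives $B\leq L$.

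Two bookkeeping steps need tightening.

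\emph{Comparing through a point of $K_P$.} The GLS guarantee bounds $h(\bar y_P)$, not $F$ at the reconstructed flow $\bar y$, because $\bar y_P\notin K_P$ in general. Also, \eqref{eq:alg-g-lipschitz} is only available on the nonnegative quadrant. So compare your mixed flow $(u',v')$ with a point $(u^*,v^*,h^*)\in K_P$ within $\delta_P$ of $\bar y_P$. This gives $U\leq h(\bar y_P)+\delta_P+5\bigl(2m^3\delta_P+\lambda\|\bar y-y_0\|_1\bigr)$.

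\emph{Size of $\lambda$.} An eliminated coordinate is a row count minus up to $m-1$ free coordinates. Its negativity can therefore reach $\sqrt m\,\delta_P$, not just $\delta_P$, so $\lambda$ of order $m\delta_P$ is not enough. Taking $\lambda=m^2\delta_P$ works. The objective change is then at most $5\lambda\|\bar y-y_0\|_1\leq10\lambda(T+1)$, which is sharper than your $O(m^2R_P\lambda)$.

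With $\delta_P=\varepsilon_P/(2(1+800m^4(T+1)))$, all the extra primal terms together are below $\varepsilon_P/50$. Combined with $2T\delta_D\leq\varepsilon_D/2$, this gives $U-B<\varepsilon_P+\varepsilon_D<\eta_0\leq\eta$, with ample room.
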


\begin{proof}
Run the primal call with $c_P=-e_h$ and the dual call with
$c_D=((n_i^0)_{i\in I_0},(n_j^1)_{j\in I_1})$.
The interface gives a rational point $\bar y_P$ within
$\delta_P\leq\varepsilon_P$ of $K_P$, with
$h(\bar y_P)<L+\varepsilon_P/2$.  Apply the explicit rational repair map
of Lemma~\ref{lem:alg-primal-repair-map}: reconstruct the full flow,
truncate negative entries, normalize each active row to its prescribed
integer mass, recompute the epigraph value exactly, and then apply the
$Q$-grid largest-remainder map.  No feasible point is selected by an oracle.
The lemma proves that all denominators are positive, that the repaired flow
is feasible, and that its cost before the $Q$-grid step is below
$L+21\eta_0/50$.  With
$Q=\lceil40m^2/\eta_0\rceil$, floor $Q$ times each row and ceil only positive
fractional entries until its exact row sum is restored.  This changes each
entry by less than $1/Q$.  There are at most $2m^2$ entries, so the
additional cost is at most $10m^2/Q\leq\eta_0/4$.
The lifted finite transport consequently has exact cost
\begin{equation}
 L\leq U<L+\frac{21}{50}\eta_0+\frac14\eta_0
   =L+\frac{67}{100}\eta_0.
 \label{eq:alg-upper-certificate}
\end{equation}

The dual objective has Euclidean norm at most $T$, so the interface gives
$c_D^{\mathsf T}\bar y_D\geq L-\varepsilon_D/2$.
For this output, scan the exact rational minimum of every continuous
slack, even if the near-feasible point lies outside the auxiliary box.
Set $\gamma$ to zero if this minimum is nonnegative and to its opposite
otherwise, and subtract $\gamma$ from every potential.  The
normal of each constraint has Euclidean norm at most one, so
$0\leq\gamma\leq\delta_D\leq\varepsilon_D$.  The shifted vector is an
exact feasible certificate, possibly outside the auxiliary box.  Its
objective falls by $T\gamma$, so
\begin{equation}
 B\geq L-(T+1)\varepsilon_D=L-\frac14\eta_0.
 \label{eq:alg-lower-certificate}
\end{equation}
Weak duality gives $B\leq L$, hence
$0\leq U-B<(67/100+1/4)\eta_0
 =\frac{23}{25}\eta_0<\eta$.  The exact box check precedes the
dual slack scan during GLS, but the final repair always scans the full
$q\in[0,1]$ interval.

For the final record bounds, let
\begin{equation}
 Q_*:=P_{\rm int}(S),\qquad d_P\leq2m^2+1,\qquad d_D\leq2m,
 \label{eq:alg-final-object-length}
\end{equation}
and let $N_*$ be bounded as in Lemma~\ref{lem:alg-oracle-encoding}.  Every
transcript, repair, pivot, rounding state, and compact record has length at
most $Q_*$, and the exact local time is
\begin{equation}
 \operatorname{Time}_{\rm compact}
 \leq P_{\rm setup}(S)+\mathcal T_{\rm GLS}(S)
          +A_{\rm post}(S)\beta_{\rm bit}(Q_*).
 \label{eq:alg-final-bit-time}
\end{equation}
The external quantities in this display are precisely the symbolic GLS
quantities bounded in Appendix~\ref{app:algorithm-encoding}.
\end{proof}

\subsection{Support Compression and Deterministic Integerization}

The finite lift of the upper certificate has at most $2m^2$ paired modes.
A positive mode $k$ has masses $u_k,v_k$, weight $w_k=u_k+v_k$, destination
$q_k=v_k/w_k$, source column $A_k$ with entries $1-q_k,q_k$ at its two source
rows, and cost $c_k$.  Thus $\sum_kA_kw_k=n$ and $F=\sum_kc_kw_k=U$.

\begin{lemma}[Exact mode compression]
\label{lem:alg-compression}
There is a rational cost-nonincreasing replacement with at most
$J\leq r\leq2m$ positive modes, where $r=|I_0|+|I_1|$ is the number of
active source rows.  Starting from at most $2m^2$ modes, all intermediate
records have polynomial bit length in the compact input and certificate
lengths.
\end{lemma}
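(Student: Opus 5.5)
We make the support-reduction argument of Lemma~\ref{lem:mode-compression} constructive and exact, using the kernel routine of Lemma~\ref{lem:alg-kernel-bits}.

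Start from the lifted upper certificate: a list of at most $2m^2$ positive modes $(w_k,A_k,c_k)$ with $\sum_kA_kw_k=n$ and $\sum_kc_kw_k=U$. Each $A_k$ lies in $\mathbb R^{I_0}\times\mathbb R^{I_1}$, which has dimension $r=|I_0|+|I_1|\leq2m$. While more than $r$ modes are positive, do the following.

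\textbf{Pivot step.} Select $r+1$ positive columns. Lemma~\ref{lem:alg-kernel-bits} supplies an exact integer kernel direction $h$ on them, with $\sum_kh_kA_k=0$ and $h$ taking both signs.

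Every column $A_k$ has coordinate sum $(1-q_k)+q_k=1$. Summing coordinates of $\sum_kh_kA_k=0$ therefore gives $\sum_kh_k=0$. Consequently every point $w+\tau h$ preserves both the full source vector $n$ and the total mass $T$.

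Orient $h$ so that $\sum_kh_kc_k\leq0$. Then move by the ratio-test step
\[
 \tau^*=\min\{w_k/(-h_k):h_k<0\}>0,
\]
which exists because $h$ has a negative entry. The new weight vector is nonnegative and rational. At least one selected coordinate becomes zero, and the cost changes by $\tau^*\sum_kh_kc_k\leq0$.

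\textbf{What is preserved.} No mode's destination or column is altered; only the weights change. Each surviving mode therefore remains a calibrated paired mode with entries $1-q_k,q_k$ at its two source rows. Because the row sums are exactly $n$, the result is still a finite calibrated transport, now of cost at most $U$.

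\textbf{Termination.} Each pivot removes at least one positive mode, so at most $2m^2-r$ pivots occur. The process ends with $J\leq r\leq2m$ positive modes.

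\textbf{Bit-length growth (main obstacle).} Naively, repeated exact pivots could compound denominators, since each new weight is $w_k+\tau^*h_k$ with $\tau^*$ a ratio of previous weights. We avoid this with a common-denominator representation.

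Keep all current weights over a single common denominator $D$. Because the columns are fixed, their entries have lengths bounded by those of the certificate's $q_k$. The kernel entries are integer minors of the cleared $r\times(r+1)$ column matrix, so their bit length is bounded by $K_{2m}(h_0)\leq(2m+2)h_{\rm ker}$, independently of the iteration number.

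A pivot multiplies $D$ by at most one kernel entry $|h_{k^*}|$, and the numerators are updated by the integer formula $N_k|h_{k^*}|-N_{k^*}h_k\operatorname{sign}$. A naive bound of this kind grows additively by $O(K_{2m})$ bits per pivot. Over at most $2m^2$ pivots, this gives a total increase of $O(m^2K_{2m}(h_0))$, which is still polynomial in the compact input and certificate lengths. This is exactly the recurrence $R_{\rm B}(h_0)$ of Appendix~\ref{app:algorithm-kernel}.

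The step that needs care is confirming that this recurrence, applied sequentially, does not feed back into $h_{\rm ker}$. It does not, because kernels are always computed from the fixed original columns, never from the weights. I would verify this explicitly before closing the lemma.
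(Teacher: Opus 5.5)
Your proposal is correct and follows essentially the paper's argument. Both use the same $(r+1)$-column kernel pivot with $\sum_k h_k=0$ from unit column sums, cost-sign orientation and the ratio test, at most $2m^2-r$ pivots, and a single common denominator updated by $H_{j+1}=H_j|d_k^{\rm ker}|$, which grows additively per pivot.

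One small correction: the per-pivot increment is $K_\rho(h_0)$, i.e.\ the recurrence $p_{j+1}=p_j+(2m+2)h_*+8$, not $R_{\rm B}(h_0)$, which only bounds the Bareiss elimination workspace. Your closing worry is already settled: compression never changes the destinations $q_k$, so the columns, and hence $h_{\rm ker}$, stay fixed.
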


\begin{proof}
If more than $r$ modes remain, choose $r+1$ columns and a nonzero rational
kernel vector $d$ with $Ad=0$.  Since every column sums to one, $\sum_kd_k=0$;
orient it so $\sum_kc_kd_k\leq0$, and set
\[
 t:=\min_{d_k<0}\frac{w_k}{-d_k},\qquad w_k\leftarrow w_k+td_k.
\]
Row totals and nonnegativity are preserved, cost does not increase, and a
mode disappears.  Repeat at most $2m^2-r$ times.  Exact rank-deficient
kernel construction, sign orientation, ratio comparison, and denominator
growth are those of Lemma~\ref{lem:alg-kernel-bits} and
Appendix~\ref{app:algorithm-denominators}.
\end{proof}

\begin{lemma}[Deterministic row rounding]
\label{lem:alg-rounding}
For $J\leq2m$ paired rational modes of cost $F'$, there is a deterministic
integer assignment preserving every source row and calibrating every nonempty
mode such that
\begin{equation}
 C_{\mathrm{out}}\leq F'+\frac J2.
 \label{eq:alg-rounding-bound}
\end{equation}
\end{lemma}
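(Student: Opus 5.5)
The plan is to derandomize the row-rounding step of Section~\ref{sec:sparse-rounding}.  The estimate there, \eqref{eq:expected-final-cost}, already gives $\mathbb E\,C_{\mathrm{out}}\leq F'+J/2$.  Two things remain: replace the averaging argument with an explicit choice, and do so with exact rational arithmetic.  I will not use conditional expectations; instead I bound the cost deterministically, mode by mode, using the same decomposition.

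Fix the $J\leq2m$ modes with weights $w_k$, destinations $q_k$ and two source rows each.  Put $w_{r,k}=(A_k)_rw_k$, so that $\sum_kw_{r,k}=n_r\in\mathbb Z_{\geq0}$ for every active row $r$.

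\textbf{Step 1: rounding each row.}  For each row, take floors $\lfloor w_{r,k}\rfloor$.  The deficit $n_r-\sum_k\lfloor w_{r,k}\rfloor$ is an integer equal to the sum of the fractional parts $f_{r,k}$.  Distribute it by adding $1$ to that many positive-fraction entries, chosen in any fixed order (the largest-remainder rule is convenient).  The result $N_{r,k}$ satisfies $|N_{r,k}-w_{r,k}|<1$, keeps the row sum, and is zero wherever $w_{r,k}=0$.

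\textbf{Step 2: bounding the cost.}  Assign identities, set $q'_k=o_k/(z_k+o_k)$, and reuse \eqref{eq:recalibration-cost-all}:
\[
R_k\leq(1-q_k)|\delta_{1,k}|+q_k|\delta_{0,k}|<1 .
\]
Together with the transport term this gives only $C_{\mathrm{out}}\leq F'+\sum_k|N-w|\,|s-q_k|+J$.  That is too weak by a factor of $2$, and the transport term also drifts.

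The main obstacle is recovering the constant $1/2$ and controlling the transport drift.  My first attempt is the following.  For each row $r$, the rounded vector $N_{r,\cdot}$ ranges over a set of feasible choices: each entry is either the floor or the ceiling, subject to the row sum.  The total cost, transport plus residual, is a function of all rows jointly.  The residual $R_k$ is convex (an absolute value of an affine function) in $(N_{0,k},N_{1,k})$, and transport is linear.  So the total cost is convex in the joint integer vector relaxed to the product of row polytopes $\{x:\lfloor w\rfloor\leq x\leq\lceil w\rceil,\ \sum x=n_r\}$.  The randomized rounding of Lemma~\ref{lem:row-rounding} has expectation $w$ and expected cost at most $F'+J/2$.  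Hence some vertex of its support, which is a legal integer rounding, achieves cost at most $F'+J/2$.

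To make this deterministic, I run the method of conditional expectations row by row.  Conditioned on the choices already made for the other rows, the expected cost is an explicit rational expression: for each mode, $\mathbb E R_k$ involves only the joint law of two integer-valued entries, each taking two values.  These expressions are polynomially computable.  Fixing each row in turn to a realization that does not increase the conditional expectation ends at a deterministic assignment with $C_{\mathrm{out}}\leq F'+J/2$.

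If computing the conditional expectation for a dependent row law proves awkward, I fall back on pairwise pipage-style moves within a row.  Each move shifts fractional mass between two entries, the cost is concave along that line segment, and so one endpoint is no worse.  This also ends with an integer row and preserves the bound.

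\textbf{Step 3: calibration.}  Finally, every nonempty mode is exactly calibrated by construction, coincident destinations can be merged, and every source-row count is used.  This completes the proof of \eqref{eq:alg-rounding-bound}.
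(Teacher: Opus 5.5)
Your main line is correct and is essentially the paper's argument. Because the lemma is existential, averaging the bound \eqref{eq:expected-final-cost} over independent row rounding already yields the assignment; the convexity remark is not what justifies that step. The paper likewise derandomizes by conditional expectations, fixing one source row at a time.

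What the paper adds is an efficient row step. Each mode meets the current row in only one coordinate, and the other rows remain independent, so the conditional expected cost is affine in that row's ceiling indicators. Sorting these coefficients and ceiling the coordinates with the $R_r=\sum_k f_{r,k}$ smallest ones therefore gives a minimizing vertex of the legal face, without enumerating the row law's possibly exponential support.

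Your pipage fallback is valid only for this affine expected cost. It fails for the convex surrogate you describe, which can be smaller at an interior point of a segment than at both endpoints.
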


\begin{proof}
Write each incident mass as $u_k=\ell_{0k}+f_{0k}$ or
$v_k=\ell_{1k}+f_{1k}$, with $0\leq f_{yk}<1$.  For source row $r$, the
positive fractional support and legal number of ceilings are
\begin{equation}
 S_r:=\{k:f_{rk}>0\},\qquad R_r:=\sum_{k\in S_r}f_{rk}\in\mathbb Z.
 \label{eq:alg-rounding-face}
\end{equation}
Use the finite row distributions of Lemma~\ref{lem:row-rounding},
independently across rows, as the virtual initial distribution.
The legal face is $P_r=\{z\in[0,1]^{S_r}:\sum_{k\in S_r}z_k=R_r\}$;
coordinates outside $S_r$ are fixed at their floors, so a zero fractional
coordinate is never ceiled.

For forecasts $a,b$, define the four-state cost, including all zero-mass
cases, by
\[
 H_k(Z,O):=
 \begin{cases}
 Z|a-O/(Z+O)|+O|b-O/(Z+O)|,&Z,O>0,\\
 Z|a|,&O=0<Z,\\
 O|b-1|,&Z=0<O,\\0,&Z=O=0.
 \end{cases}
\]
Thus a one-sided mode has destination $0$ or $1$, and $0/0$ is never formed.
For
\begin{equation}
 C_k[s,t]:=H_k(\ell_{0k}+s,\ell_{1k}+t),\qquad s,t\in\{0,1\},
 \label{eq:alg-four-state-cost}
\end{equation}
omit the impossible side of a one-sided mode.  If the current row is the
zero side, let $\pi_k$ be the conditional probability that the one side is
ceiled: it is $0$ or $1$ once that row is fixed and otherwise is $f_{1k}$.
The current coefficient is
\begin{equation}
 d_k^{(0)}=(1-\pi_k)(C_k[1,0]-C_k[0,0])
 +\pi_k(C_k[1,1]-C_k[0,1]).
 \label{eq:alg-rounding-coeff-zero}
\end{equation}
If the current row is the one side, $\pi_k$ is instead the conditional
probability that the zero side is ceiled: it is $0$ or $1$ after that row is
fixed and otherwise is $f_{0k}$.  Then
\begin{equation}
 d_k^{(1)}=(1-\pi_k)(C_k[0,1]-C_k[0,0])
 +\pi_k(C_k[1,1]-C_k[1,0]).
 \label{eq:alg-rounding-coeff-one}
\end{equation}
The one-sided formula is the difference of its two single-sided values.
Fixing a row means replacing its law by a point mass, while leaving the
remaining row laws independent; no zero-probability event is conditioned on.
The resulting expectation is affine on $P_r$; sort only $S_r$ by these rational
coefficients and ceil the $R_r$ smallest coordinates.  This chooses a
minimizing vertex without materializing the product distribution.

Every source total is preserved and every nonempty mode receives prediction
$O/(Z+O)$, or its endpoint in the one-sided case.  For
$q_k=v_k/(u_k+v_k)$, the triangle inequality gives
\[
 H_k(Z,O)\leq Z|a-q_k|+O|b-q_k|
 +|q_k(Z-u_k)-(1-q_k)(O-v_k)|.
\]
Each positive fractional coordinate has expected absolute error
$2f(1-f)\leq1/2$, including one-sided modes by the endpoint formula.  Hence
the expected total is at most $F'+J/2$, and the deterministic conditional
choices attain no larger cost.  Empty modes are discarded before evaluating
$O/(Z+O)$; equal final predictions may then be merged.
\end{proof}

\subsection{Procedure, Output, and Complexity}

Delete zero rows.  If $I_1=\varnothing$, assign every zero-label identity to
$0$; if $I_0=\varnothing$, assign every one-label identity to $1$.  In
either case $B=U=L$ and the endpoint records are returned.  Otherwise:
\begin{enumerate}
\item construct $K_P,K_D$ and the exact oracles of Lemma~\ref{lem:alg-bodies},
      with the dual box precheck before its continuous slack scan;
\item invoke GLS with the tolerances in \eqref{eq:alg-gls-body-deltas};
\item perform the exact primal row and $Q$-grid repairs and the exact dual
      slack shift, obtaining $B,U$; retain the repaired primal flow and
      shifted dual potentials as their rational feasibility witnesses;
\item lift, compress by Lemma~\ref{lem:alg-compression}, and round by
      Lemma~\ref{lem:alg-rounding};
\item output every nonempty mode as its two source identifiers (or a null
      endpoint marker), nonnegative integers $(Z,O)$ with $Z+O>0$, and
      destination $O/(Z+O)$ or the appropriate endpoint.
\end{enumerate}
The compact list has at most $2m$ records and size
$O(m(\log(m+1)+\log(T+1)))$.  Its row totals equal the input multiplicities;
the accompanying primal and dual witnesses contain $O(m^2)$ rational
entries of polynomial bit length.  Expansion assigns exactly $T$
identities and costs $\Omega(T)$ record writes,
or $O(T(\log(m+1)+\log(T+1)))$ additional bit operations.
Combining the three lemmas gives
\[
 C_{\mathrm{out}}\leq F'+J/2\leq U+m\leq L+m+\eta,
\]
which proves Theorem~\ref{thm:gridfree-algorithm}.  The named fixed
polynomials and all rational bit-length claims are inherited from the
algorithm-details appendix; no measured software performance or optimality
claim is made.

\section{Sharp Order of the Sparse Additive Term}
\label{sec:sparse-tightness}

The linear dependence on the number of original forecast values in
Theorem~\ref{thm:sparse-upper} is necessary when the sample size is
unrestricted.  This section gives a geometric-multiplicity construction;
all its forecasts lie close to the same rational center.  The proof
controls every calibrated group, including groups that mix all scales.
Define
\[
 G(m):=\sup_{\substack{T\geq1,\ x\in\{0,1\}^T,\ p\in[0,1]^T\\
                        |\{p_t:t\in[T]\}|\leq m}}
       \bigl(\CalDist(x,p)-\LowerCalDist(x,p)\bigr).
\]

\begin{lemma}[A counting criterion for exact deterministic cost]
\label{lem:counting-lock}
Let the empirical label mean of a binary unit-weight sequence be
$r=a/b\in(0,1)$ in lowest terms, and put $e_t=|p_t-r|$.  If
\begin{equation}
 s\,\bigl|\{t:e_t>s\}\bigr|\leq\frac1{12b}
 \qquad\text{for every }s>0,
 \label{eq:counting-lock-condition}
\end{equation}
then $\CalDist(x,p)=\sum_t e_t$.
\end{lemma}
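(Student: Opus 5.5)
The upper bound $\CalDist\leq\sum_te_t$ is immediate: put every sample in one group at $r$. This is calibrated because $r$ is the empirical label mean of the whole sequence, and its cost is exactly $\sum_te_t$. So the plan is to show that every calibrated partition costs at least $\sum_te_t$. We argue group by group, as in Lemma~\ref{lem:group-certificate}: for each nonempty calibrated group $G$ with destination $q_G$ we aim to prove
\[
\sum_{t\in G}|p_t-q_G|\geq\sum_{t\in G}e_t .
\]
Summing over the partition then gives the claim.

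Fix a group $G$ of size $N$ with $u$ ones, so $q_G=u/N$. If $q_G=r$ the group inequality holds with equality. Otherwise integer separation gives
\[
\delta:=|q_G-r|=\frac{|bu-aN|}{bN}\geq\frac1{bN}.
\]
The triangle inequality gives $|p_t-q_G|\geq\delta-e_t$ and also $|p_t-q_G|\geq e_t-\delta$. We split the members into small-error members, with $e_t\leq\delta/2$, and large-error members, with $e_t>\delta/2$. A small-error member pays at least $\delta-e_t\geq e_t+(\delta-2e_t)$, an excess of $\delta-2e_t\geq0$ over $e_t$. A large-error member pays at least $e_t-\delta$, a deficit of at most $2\delta$. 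It therefore suffices to show that the total excess beats $2\delta$ times the number of large-error members.

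The counting hypothesis controls the large-error members. Taking $s=\delta/2$, the number of indices $t$ in the whole sequence with $e_t>\delta/2$ is at most $2/(12b\delta)=1/(6b\delta)$. Since $\delta\geq1/(bN)$, this is at most $N/6$. Hence at least $5N/6$ members of $G$ have $e_t\leq\delta/2$. Applying the hypothesis again with $s=\delta/4$, at most $N/3$ indices have $e_t>\delta/4$. So at least $2N/3$ members have $e_t\leq\delta/4$, and each of those contributes excess at least $\delta/2$. The total excess is then at least $N\delta/3$. The large-error members, at most $N/6$ of them, lose at most $2\delta\cdot N/6=N\delta/3$. The excess covers the loss, so $\sum_{t\in G}|p_t-q_G|\geq\sum_{t\in G}e_t$.

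The main obstacle is making these constants close. The constant $1/12$ in \eqref{eq:counting-lock-condition} suggests the intended thresholds are of this kind. If the margins turn out to be tight, I would instead bound the deficit by integrating the tail, $\sum_{t\in G,\,e_t>\delta/2}(\ldots)$, via the layer-cake formula with the hypothesis applied at every $s$. The one-sided triangle bound is also loose: a large-error member on the same side as $q_G$ pays $e_t-\delta$, but one on the opposite side pays $e_t+\delta$. Using this would gain some slack, but the crude version above should already suffice.
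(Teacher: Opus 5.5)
Your argument is correct and is essentially the paper's proof. Both use integer separation $|q_G-r|\geq 1/(bN)$, the reverse triangle inequality in both directions, and the global counting hypothesis to bound how many members of one group can have large $e_t$. The paper streamlines this by noting that a high-error member's deficit is at most $\delta$ rather than $2\delta$, so the single threshold $\delta/4$ suffices and gives group cost at least $\sum_{t\in G}e_t+\frac{\delta}{2}(N-3g)$ with $g\leq N/3$; your two-threshold bookkeeping is valid but unnecessary.
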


\begin{proof}
Assigning every sample to $r$ is calibrated and has cost $\sum_t e_t$.
For the opposite inequality, consider any nonempty calibrated group of
size $N$ with $u$ positive labels and destination $q=u/N$.  If $q=r$,
its cost equals the sum of its $e_t$.  Otherwise set $d=|q-r|$.
Since $bu-aN$ is a nonzero integer,
\[
 d=\frac{|bu-aN|}{bN}\geq\frac1{bN}.
\]
Let $g$ be the number of members of this group with $e_t>d/4$.
The global counting assumption at $s=d/4$ implies
\[
 g\leq\frac1{3bd}\leq\frac N3.
\]
For each such member the reverse triangle inequality gives
$|p_t-q|\geq e_t-d$.  For each of the remaining $N-g$ members it gives
\[
 |p_t-q|\geq d-e_t\geq e_t+d/2.
\]
Consequently the group cost is at least
\[
 \sum_{t\text{ in group}}e_t-dg+\frac d2(N-g)
 =\sum_{t\text{ in group}}e_t+\frac d2(N-3g)
 \geq\sum_{t\text{ in group}}e_t.
\]
This argument allows arbitrary members from different forecast values.
Summing over all groups proves the asserted lower bound.
\end{proof}

\begin{theorem}[Order-tight sparse comparison]
\label{thm:sparse-order-tight}
For every pair of integers $K,R\geq1$, there is a binary unit-weight
input with exactly $2K$ distinct rational forecasts and
\[
 T=\frac{4R(8^K-1)}7
\]
such that
\begin{equation}
 \CalDist=\frac K{32},\qquad
 0\leq\LowerCalDist\leq\frac{1-8^{-K}}{1792R},\qquad
 \CalDist-\LowerCalDist>\frac K{64}.
 \label{eq:geometric-sparse-gap}
\end{equation}
In particular, $G(1)=0$, and for every integer $m\geq2$,
\begin{equation}
 \frac{\lfloor m/2\rfloor}{32}\leq G(m)\leq m.
 \label{eq:sparse-order-tight}
\end{equation}
Thus $G(m)=\Theta(m)$.  For each fixed $K$, the lower-distance value in
\eqref{eq:geometric-sparse-gap} can be made arbitrarily small by increasing
$R$, while the deterministic distance remains $K/32$.
\end{theorem}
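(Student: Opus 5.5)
The plan is an explicit construction built on a single rational center $r=1/2$ (so $b=2$ in Lemma~\ref{lem:counting-lock}). It has $K$ scales, with geometrically growing multiplicities and geometrically shrinking offsets. For $j=1,\ldots,K$ put
\[
 \eps_j=\frac{1}{128R\,8^{j-1}},\qquad n_j=2R\,8^{j-1}.
\]
At each of the two forecasts $1/2\pm\eps_j$, place $n_j$ unit samples, half with label one and half with label zero. The $\eps_j$ are distinct and lie in $(0,1/128]$, so there are exactly $2K$ distinct rational forecasts in $(0,1)$. The length is $\sum_j 2n_j=4R\sum_{j<K}8^j=4R(8^K-1)/7$, and the empirical mean is exactly $1/2$.

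\emph{Deterministic distance.} Here $e_t=\eps_j$ for every sample of scale $j$, so $\sum_te_t=\sum_j 2n_j\eps_j=\sum_j 1/32=K/32$. To apply Lemma~\ref{lem:counting-lock} we check $s\,|\{t:e_t>s\}|\leq 1/24$ for all $s>0$. The function $s\mapsto s\,|\{e_t>s\}|$ is piecewise linear increasing between consecutive values $\eps_{j+1}<\eps_j$, so it suffices to look at $s\uparrow\eps_j$. There the count is $\sum_{i\geq j}4R8^{i-1}$ if the large-offset scales are counted. The indexing must be arranged so that larger offsets carry smaller multiplicities, which they do: for $s<\eps_j$ the samples with $e_t>s$ are those of scales $i\leq j$, numbering $4R(8^j-1)/7$. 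The product is then
\[
 \eps_j\cdot\frac{4R(8^j-1)}{7}=\frac{8^j-1}{224\cdot 8^{j-1}}<\frac{8}{224}=\frac1{28}\leq\frac1{24}.
\]
Lemma~\ref{lem:counting-lock} therefore gives $\CalDist=K/32$ exactly.

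\emph{Fractional distance.} We copy the witness of Lemma~\ref{lem:fractional-witness} scale by scale. At scale $j$, move mass $h_j=n_j\eps_j=1/64$ of one positive sample from $1/2-\eps_j$ to $1/2+\eps_j$, and the same mass of one zero sample in the opposite direction. Each destination keeps mass $n_j$ with positive mass $n_j/2\pm h_j$, hence mean $1/2\pm\eps_j$, so the post-processing is fractionally calibrated. The cost is $4h_j\eps_j=\eps_j/16$ per scale. Summing the geometric series gives
\[
 \LowerCalDist\leq\frac{1}{2048R}\cdot\frac87(1-8^{-K})=\frac{1-8^{-K}}{1792R}.
\]
Hence $\CalDist-\LowerCalDist\geq K/32-1/1792>K/64$ for $K\geq1$.

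\emph{Consequences for $G$.} For $G(1)=0$, suppose all forecasts equal $s$. Any fractionally calibrated $D$ has $\sum_{t,\alpha}D_t(\alpha)(x_t-\alpha)=0$, so its cost is at least $\bigl|\sum_{t,\alpha}D_t(\alpha)(s-\alpha)\bigr|=|Ts-\sum_tx_t|$. The deterministic repair sending everything to the empirical mean attains exactly this value, so $C=L$. For $m\geq2$ take $K=\lfloor m/2\rfloor$; then $G(m)\geq K/32$ follows from the displayed gap, since $C=K/32$ and $L\geq0$, letting $R\to\infty$. The upper bound $G(m)\leq m$ is Theorem~\ref{thm:sparse-upper}.

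The main obstacle is the uniform counting condition, since it is what controls mixed-scale groups. This is why the multiplicity ratio $8$ and the offset ratio $1/8$ are chosen together: the tail sum stays a bounded multiple of the newest scale, and the constant $1/28$ lands below $1/(12b)=1/24$.
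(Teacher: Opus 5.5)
Your proposal is correct and follows essentially the paper's own proof. It uses the same geometric construction (reindexed from $j=0,\dots,K-1$ to $j=1,\dots,K$), the same application of Lemma~\ref{lem:counting-lock} with $r=1/2$, $b=2$ and the bound $1/28<1/24$, the same scale-by-scale fractional witness with $h=1/64$, and the same mean-destination argument for $G(1)=0$.

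One small wording fix: the lower bound $G(m)\geq K/32$ comes from the upper bound $\LowerCalDist\leq(1-8^{-K})/(1792R)\to0$ as $R\to\infty$. It does not come from $\LowerCalDist\geq0$, which only gives the opposite inequality.
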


\begin{proof}
For $j=0,\ldots,K-1$, put
\[
 n_j=2R8^j,\qquad e_j=\frac1{64n_j},\qquad
 p_j^-=\frac12-e_j,\quad p_j^+=\frac12+e_j.
\]
At each of $p_j^-$ and $p_j^+$ place $n_j$ samples, exactly $n_j/2$
of each label.  All counts are integers, all forecasts lie in $(0,1)$,
and the $2K$ forecasts are distinct.  The sample size is
$2\sum_j n_j=4R(8^K-1)/7$, and the empirical label mean is $1/2$.

For any $s>0$, the indices with $e_j>s$ form an initial segment
$0,\ldots,h$, or the empty set.  In the nonempty case,
\[
 s\,\bigl|\{t:|p_t-1/2|>s\}\bigr|
 =2s\sum_{j=0}^h n_j
 <\frac{2}{64n_h}\frac{8n_h}{7}
 =\frac1{28}<\frac1{24}.
\]
The empty case is immediate.  Lemma~\ref{lem:counting-lock}, with
$r=1/2$ and $b=2$, therefore proves
\[
 \CalDist=\sum_{j=0}^{K-1}2n_j e_j=\frac K{32}.
\]
In particular, this equality already covers every cross-scale regrouping.

To bound the fractional distance, set $h=n_j e_j=1/64$ at every scale.
Move mass $h$ of one positive-label sample from $p_j^-$ to $p_j^+$,
and mass $h$ of one zero-label sample in the opposite direction.
Keep the remaining mass of these two samples, and every other sample,
at its original forecast.  Both selected labels are available because
$n_j/2\geq1$, and $0<h<1$ makes each source distribution valid.
The two destination masses remain $n_j$, and their positive-label masses
are $n_j/2-h$ and $n_j/2+h$.  Their means are exactly $p_j^-$ and $p_j^+$.
The cost at scale $j$ is $4n_j e_j^2$.  Hence this finite, exactly
calibrated witness gives
\[
 \LowerCalDist\leq\sum_{j=0}^{K-1}4n_j e_j^2
 =\frac1{1024}\sum_{j=0}^{K-1}\frac1{2R8^j}
 =\frac{1-8^{-K}}{1792R}.
\]
Since $K,R\geq1$ and $1/1792<1/64$, subtracting this bound from
$K/32$ gives the strict final inequality in
\eqref{eq:geometric-sparse-gap}.

For $m\geq2$, choose $K=\lfloor m/2\rfloor$; the construction has
at most $m$ forecasts for every $R$.  Letting $R$ increase without bound
in \eqref{eq:geometric-sparse-gap} gives $G(m)\geq K/32$, since the
definition of $G(m)$ does not restrict $T$.  The upper bound is
Theorem~\ref{thm:sparse-upper}.  Finally, if there is only one original
forecast $p_0$, every calibrated fractional transport has mean destination
$\bar x=T^{-1}\sum_t x_t$, so its cost is at least $T|p_0-\bar x|$.
The deterministic assignment of every sample to $\bar x$ attains that
cost.  Thus $G(1)=0$.
\end{proof}

The theorem proves the optimal \emph{order} of the sparse additive term;
it does not identify its best constant.  Its sample size is exponential
in $K$ when $R$ is fixed, so it does not give a matching lower bound
for every prescribed pair $(T,m)$.  The separate length-dependent exponent
is determined in Section~\ref{sec:matching-upper}.

\section{A Matching Length-Dependent Upper Bound}
\label{sec:matching-upper}

The length-dependent exponent can be matched by rounding a calibrated
measure on an adaptive rational grid.  The relevant size of the grid is
the sum of its reciprocal denominators, rather than its cardinality.
All measures considered in this section are nonnegative and finitely supported.
Throughout this section, $W_1$ denotes the unnormalized minimum transport
cost with ground distance $|q-r|$ for finite measures of equal mass.

\begin{lemma}[An adaptive rational grid]
\label{lem:adaptive-farey-grid}
For every integer $H\geq1$, there is a finite increasing grid
\[
 0=q_0<q_1<\cdots<q_J=1,\qquad q_i=a_i/b_i
\]
of reduced fractions such that
\begin{equation}
 q_{i+1}-q_i=\frac1{b_i b_{i+1}}\leq H^{-2},
 \qquad \sum_{i=0}^J\frac1{b_i}\leq5H.
 \label{eq:adaptive-grid-bounds}
\end{equation}
The endpoint denominators are $b_0=b_J=1$.
\end{lemma}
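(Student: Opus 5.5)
The plan is to build the grid by Stern--Brocot (mediant) refinement, stopped adaptively by the mesh condition itself. Start from the interval $[0/1,1/1]$. Repeatedly take any current pair of consecutive points $a/b<c/d$ with $bd<H^2$ and insert the mediant $(a+c)/(b+d)$. Two standard Stern--Brocot facts do most of the work. First, by induction each inserted fraction is reduced. Second, consecutive points always satisfy $cb-ad=1$, so the gap is exactly $1/(bd)$. This invariant gives the first half of \eqref{eq:adaptive-grid-bounds} automatically once refinement stops. Refinement stops because each split replaces a product $bd$ by $b(b+d)$ and $(b+d)d$, both strictly larger, and each denominator is at most $H^2$. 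Hence only finitely many fractions can ever be inserted. At termination every consecutive pair has $b_ib_{i+1}\geq H^2$, i.e.\ gap at most $H^{-2}$, and the endpoints have denominator $1$.

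The main obstacle is the reciprocal-denominator sum. I will charge each inserted point $a/n$ to its pair of Stern--Brocot parents, with denominators $b,d$, $b+d=n$, and $bd<H^2$; that product condition is exactly what allowed the insertion. The key counting fact is that for fixed $n$, a reduced fraction $a/n\in(0,1)$ determines its left-parent denominator $b$ uniquely. It is the residue in $[1,n-1]$ with $ab\equiv 1$ or $-1\pmod n$, depending on orientation, coming from $cb-ad=1$. Thus distinct grid points with denominator $n$ give distinct $b\in[1,n-1]$, each satisfying $b(n-b)<H^2$. So the number $N_n$ of grid points with denominator $n$ is at most $\#\{b\in[1,n-1]:b(n-b)<H^2\}$.

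Next I split the sum at $n=2H$. For $2\leq n\leq 2H$, use the trivial bound $N_n\leq n-1$, so these denominators contribute at most $\sum_{n\leq 2H}(n-1)/n\leq 2H$. For $n>2H$, put $s=\min\{b,n-b\}$. Then $s\,n/2\leq s(n-s)<H^2$, so $s<2H^2/n$. This gives $N_n\leq 4H^2/n$ and a contribution of at most $4H^2/n^2$. Summing over $n>2H$ gives at most $4H^2\int_{2H}^\infty x^{-2}\,dx=2H$, up to harmless boundary rounding that I will check with the integral comparison starting at $\lfloor 2H\rfloor$. Adding the two endpoints, each contributing $1/1$, gives $\sum_i 1/b_i\leq 4H+2$. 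This is at most $5H$ for $H\geq2$.

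For $H=1$ the process inserts nothing, since $1\cdot1\not<1$. The grid is then $\{0,1\}$ with gap $1\leq H^{-2}$ and sum $2\leq5$. The step I would verify most carefully is the injectivity $a\mapsto b$. The alternative is to argue directly that each inserted point is created from a unique parent interval, which is immediate because every interval is split at most once and the Stern--Brocot tree is a tree. Under that argument, distinct points of denominator $n$ correspond to distinct parent intervals. Those intervals have distinct left endpoints, and their left-endpoint denominators $b$ determine the point via the unimodular relation.
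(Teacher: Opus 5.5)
Your proof is correct, and it follows the paper's approach up to the last estimate. The shared steps are:
\begin{itemize}
\item mediant insertion while adjacent denominators satisfy $uv<H^2$;
\item the determinant-one invariant, which gives reducedness and the exact gap $1/(b_ib_{i+1})$;
\item termination from the bound $u+v\leq uv+1\leq H^2$ on inserted denominators. You only assert that inserted denominators are at most $H^2$; you should write out this one-line inequality.
\item an injection from inserted points into ordered parent-denominator pairs $(u,v)$ with $uv<H^2$. You prove injectivity through $b\equiv a^{-1}\pmod n$ for the child $a/n$. The paper instead uses $av\equiv-1\pmod u$ to fix the parent numerators. These are the same fact seen from the child and from the parent.
\end{itemize}

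The real difference is how $\sum_{uv<H^2}1/(u+v)$ is bounded.
\begin{itemize}
\item The paper sums along rows $u\leq H$. It uses a logarithmic integral estimate and $\ln(H!)\geq H\ln H-H+1$ to reach $5H-2$.
\item You sum along anti-diagonals $n=u+v$. For $n\leq 2H$ you use the trivial count $n-1$. For $n>2H$ you use $\min\{u,v\}<2H^2/n$, so each such $n$ contributes at most $4H^2/n^2$, and these terms sum to at most $2H$.
\end{itemize}
Your route avoids logarithms and Stirling-type bounds and gives the slightly sharper total $4H+2$ (in fact $4H+1$). The only price is the separate check at $H=1$, which you do.

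Your fallback injectivity argument at the end is not independent. Passing from distinct parent intervals to distinct left-endpoint denominators $b$ still needs the same congruence. The modular-inverse argument is already complete on its own, so nothing is lost.
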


\begin{proof}
Start with $0/1,1/1$.  Whenever two adjacent fractions $a/u<c/v$
satisfy $uv<H^2$, insert their mediant $(a+c)/(u+v)$.
The adjacent determinant $cu-av=1$ is preserved by this operation;
in particular, the mediant is reduced.  Every inserted denominator
satisfies
\[
 u+v\leq uv+1\leq H^2.
\]
Only finitely many reduced fractions with these denominators lie in
$[0,1]$, and each insertion adds a new fraction.  Thus the process
terminates.  At termination the determinant identity gives the asserted
gap formula, and every adjacent denominator product is at least $H^2$.

An inserted fraction determines an ordered pair $(u,v)$ of parent
denominators with $uv<H^2$.  Conversely, at most one inserted fraction
can have a given ordered pair: $cu-av=1$, $0\leq a<u$, and $1\leq c\leq v$
determine the parent numerators uniquely.  Indeed, for $u>1$ the congruence
$av\equiv-1\pmod u$ determines $a$, since $u,v$ are coprime; the case
$u=1$ gives $a=0,c=1$.  Hence
\[
 \sum_{i=0}^J\frac1{b_i}
 \leq2+\sum_{\substack{u,v\geq1\\uv<H^2}}\frac1{u+v}.
\]
For $1\leq u\leq H$, monotonicity and integration give
\[
 \sum_{v=u}^{\lfloor H^2/u\rfloor}\frac1{u+v}
 \leq\frac1{2u}+\ln\!\left(\frac{u+H^2/u}{2u}\right)
 \leq\frac1{2u}+2\ln(H/u).
\]
Using symmetry to restrict the double sum to $u\leq v$, and allowing
extra boundary terms, bounds it by
\[
 \sum_{u=1}^H\frac1u+4\bigl(H\ln H-\ln(H!)\bigr)
 \leq H+4(H-1).
\]
Here $\ln(H!)\geq\int_1^H\ln t\,dt=H\ln H-H+1$.
Adding the two endpoints gives at most $5H-2\leq5H$.
\end{proof}

For a finite measure $\mu$ on forecast values, its calibrated label
measures are
\[
 \mu_0(\{q\})=(1-q)\mu(\{q\}),\qquad
 \mu_1(\{q\})=q\mu(\{q\}).
\]
Thus the mass of $\mu_1$ is the first moment of $\mu$.

\begin{lemma}[Transport between calibrated label measures]
\label{lem:calibrated-transport-lift}
If finite measures $\mu,\nu$ on $[0,1]$ have the same total mass and
the same first moment, then their calibrated label measures satisfy
\begin{equation}
 W_1(\mu_0,\nu_0)+W_1(\mu_1,\nu_1)\leq2W_1(\mu,\nu).
 \label{eq:calibrated-transport-lift}
\end{equation}
\end{lemma}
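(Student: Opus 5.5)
The plan is to work with cumulative distribution functions, since on the line $W_1$ between finite measures of equal mass is the $L^1$ distance of their CDFs. First, the hypotheses give equal masses for the label measures: $\mu_0,\nu_0$ both have mass (total mass minus first moment), and $\mu_1,\nu_1$ both have mass equal to the common first moment. Hence both $W_1$ terms on the left of \eqref{eq:calibrated-transport-lift} are defined, and each equals $\int_0^1|A_y(s)|\,ds$, where $A_y(s)=\mu_y([0,s])-\nu_y([0,s])$. Write $\sigma=\mu-\nu$ (a finite signed measure), $D(s)=\sigma([0,s])$ and $E(s)=\int_0^sD(t)\,dt$, so that $W_1(\mu,\nu)=\int_0^1|D|$.

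The key computation is to express $A_0$ and $A_1$ through $D$ and $E$. This uses a Stieltjes integration by parts for the finitely supported signed measure $\sigma$, which I would verify atom by atom. It gives
\[
 A_1(s)=\int_{[0,s]}q\,d\sigma(q)=sD(s)-E(s),\qquad
 A_0(s)=D(s)-A_1(s)=(1-s)D(s)+E(s).
\]
Equal mass and equal first moment give $D(1)=0$ and $\int_{[0,1]}q\,d\sigma=0$. Applying the formula for $A_1$ at $s=1$ then yields $E(1)=0$, so also $E(s)=-\int_s^1D(t)\,dt$. This two-sided representation of $E$ is the only place where the moment hypothesis enters.

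Next, bound each term separately by $W_1(\mu,\nu)$. For $A_1$, use $|E(s)|\le\int_0^s|D|$ and Fubini:
\[
 \int_0^1|A_1|\le\int_0^1 s|D(s)|\,ds+\int_0^1(1-t)|D(t)|\,dt=\int_0^1|D|.
\]
For $A_0$, use instead $|E(s)|\le\int_s^1|D|$, which turns the second term into $\int_0^1 t|D(t)|\,dt$. Combined with $\int_0^1(1-s)|D(s)|\,ds$ this again gives $\int_0^1|D|$. Adding the two bounds proves \eqref{eq:calibrated-transport-lift}; in fact each label measure separately moves at most $W_1(\mu,\nu)$.

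The main obstacle I expect is bookkeeping rather than ideas. One must check the integration-by-parts identity with right-continuous CDFs and atoms at $0$ and $1$; on each interval between consecutive support points it is a finite telescoping sum. One must also confirm that $E(1)=0$ genuinely uses both hypotheses. If the CDF route proved awkward, a fallback would be to take an optimal coupling $\pi$ of $\mu,\nu$ and transport $(1-q)\pi(dq,dr)$ label-zero mass. That, however, creates the mismatch $(q-r)\pi$ that must be re-routed, and it is exactly this correction that the moment condition controls; the CDF argument avoids it.
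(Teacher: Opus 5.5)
Your proof is correct, and it takes a genuinely different route from the paper's.

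\textbf{The paper's route.} The paper fixes an optimal coupling $\pi$ of $\mu,\nu$. It first matches the common label parts $\min\{q,r\}\pi(q,r)$ and $\min\{1-q,1-r\}\pi(q,r)$, at total cost at most $W_1(\mu,\nu)$, since these two amounts sum to $(1-|q-r|)\pi(q,r)$. The moment condition then makes the leftover positive-label source and target totals equal, $A=\sum(q-r)_+\pi=\sum(r-q)_+\pi=B$. The zero-label leftovers have the same totals with roles swapped. All leftovers are matched at distance at most one, costing at most $A+B=W_1(\mu,\nu)$. This is essentially the fallback you sketch at the end.

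\textbf{Your route.} You use the one-dimensional identity $W_1=\int|F_\mu-F_\nu|$ together with the exact representation $A_1=sD-E$ and $A_0=(1-s)D+E$, where $E(0)=E(1)=0$. I checked the following, and all are right:
\begin{itemize}
\item the integration by parts, via $\int_0^s\mathbf{1}\{q_k\le t\}\,dt=s-q_k$ for $q_k\le s$;
\item the deduction $E(1)=D(1)-A_1(1)=0$, using both hypotheses;
\item both Fubini computations, giving weights $(1-t)$ and $t$ that complement $s$ and $1-s$.
\end{itemize}

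\textbf{What each route buys.} Yours gives a stronger, label-separated conclusion: $W_1(\mu_y,\nu_y)\le W_1(\mu,\nu)$ for each $y\in\{0,1\}$. The coupling argument as written controls only the sum, because its leftover step is charged at distance one. The paper's route, in turn, is fully explicit and needs only an optimal coupling of finitely supported measures and the elementary identity $\min\{q,r\}+\min\{1-q,1-r\}=1-|q-r|$. It never invokes the CDF formula for $W_1$, which you take as a standard fact. Either version suffices for the later use in Theorem~\ref{thm:matching-global-upper}, where only the value bound is composed with the fractional witness.
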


\begin{proof}
Let $\pi(q,r)$ be an optimal coupling of $\mu,\nu$.
For each pair $(q,r)$, match positive-label mass
$\min\{q,r\}\pi(q,r)$ and zero-label mass
$\min\{1-q,1-r\}\pi(q,r)$.  The cost of this common part is at most
$\sum_{q,r}|q-r|\pi(q,r)=W_1(\mu,\nu)$.
The remaining positive-label source and target masses have totals
\[
 A:=\sum_{q,r}(q-r)_+\pi(q,r),\qquad
 B:=\sum_{q,r}(r-q)_+\pi(q,r),
\]
respectively.  Equality of the first moments gives $A=B$.  The remaining
zero-label source and target masses have totals $B,A$, so they can also
be matched without changing labels.  Every remaining transport distance
is at most one.  These two matches cost at most
$A+B=\sum_{q,r}|q-r|\pi(q,r)=W_1(\mu,\nu)$.
If a remaining mass is zero, its matching is empty.  Adding the common
and remaining costs proves the claim.
\end{proof}

\begin{theorem}[Sharp order of the length-dependent gap]
\label{thm:matching-global-upper}
For every binary unit-weight input of length $T\geq1$, and every
integer $H\geq1$,
\begin{equation}
 \CalDist(x,p)\leq\LowerCalDist(x,p)+\frac{T}{H^2}+20H.
 \label{eq:global-H-bound}
\end{equation}
Consequently $\Delta_T\leq41T^{1/3}$ for all $T\geq1$.  Together with
Theorem~\ref{thm:growing-lower}, this gives
\begin{equation}
 \frac{T^{1/3}}{1000}\leq\Delta_T\leq41T^{1/3}
 \qquad(T\geq216),
 \label{eq:matching-global-gap}
\end{equation}
and hence $\Delta_T=\Theta(T^{1/3})$.
\end{theorem}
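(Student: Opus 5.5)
Start from a near-optimal finite fractional witness $D$ of cost $F<\LowerCalDist+\eta$. Aggregate it into a calibrated destination measure $\mu$ on $[0,1]$: $\mu(\{q\})$ is the total mass sent to $q$. Calibration says that at each destination the transported one-label mass is $\mu_1(\{q\})$ and the zero-label mass is $\mu_0(\{q\})$. So $D$ is a pair of transports. The first moves the empirical zero-label measure (unit atoms at $p_t$ with $x_t=0$) onto $\mu_0$. The second moves the one-label empirical measure onto $\mu_1$. The total cost is $F$.

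Next, quantize $\mu$ to the adaptive grid of Lemma~\ref{lem:adaptive-farey-grid} with a mean-preserving rule. Mass at $q\in[q_i,q_{i+1}]$ is split between $q_i$ and $q_{i+1}$ in the proportions that preserve the first moment. This gives a measure $\nu$ on the grid with the same mass and the same first moment. Moving a unit of mass to its two neighbours costs at most $2(q-q_i)(q_{i+1}-q)/(q_{i+1}-q_i)\leq (q_{i+1}-q_i)/2\leq H^{-2}/2$, so $W_1(\mu,\nu)\leq T/(2H^2)$. Lemma~\ref{lem:calibrated-transport-lift} then gives $W_1(\mu_0,\nu_0)+W_1(\mu_1,\nu_1)\leq T/H^2$. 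Composing with $D$ yields a fractional calibrated witness supported on grid points, with cost at most $F+T/H^2$.

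Now round. At grid point $q_i=a_i/b_i$, the fractional witness carries mass $M_i=\nu(\{q_i\})$ with one-label mass $q_iM_i$. A deterministic group at $q_i$ is calibrated exactly when its size is a multiple of $b_i$ and it contains $a_i$ ones per $b_i$ members. These are the ``denominator-sized packets''. I would round each $M_i$ down to a multiple of $b_i$, say $b_i\lfloor M_i/b_i\rfloor$. This keeps $a_i\lfloor M_i/b_i\rfloor$ ones and discards less than $b_i$ total mass, of which less than $a_i$ is ones. To fill these packets with actual identities, I would solve the integral transport problem that sends unit labelled samples to the packet slots. Its fractional version is feasible, by scaling down the grid witness, and its constraint matrix is a bipartite transportation matrix, so it has integral optima. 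The cost of the fractional version is at most the grid witness cost.

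The leftover samples form the residual. Its total size is at most $\sum_i b_i\cdot(\text{fraction})$, which on its own is not small. The count should instead be measured in cost: the slack discarded at $q_i$ costs at most about $O(1)$, not $b_i$. This step, which I expect to be the main obstacle, needs a more careful choice. I would first try randomized or dependent rounding of the packet counts $M_i/b_i$ so that $\sum_i b_i\lfloor\cdot\rceil$ matches $T$ and the positive-label total matches $\sum_t x_t$ exactly. The discrepancy per grid point is then at most one packet in expectation weight $\sim 1/b_i$ of it, in the same spirit as the row-rounding argument of Section~\ref{sec:sparse-rounding}. Any leftover unmatched identities (a bounded number per grid point, $O(1/b_i)$ in expected charged cost) would be put into one extra calibrated group, or used to repair the label count, at total cost $O(\sum_i 1/b_i)\leq O(H)$. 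Tracking the constants should give $20H$ via the bound $\sum_i 1/b_i\leq5H$.

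Finally, let $\eta\downarrow0$ to obtain \eqref{eq:global-H-bound}. Take $H=\lceil T^{1/3}\rceil$, so that $T/H^2\leq T^{1/3}$ and $20H\leq 20T^{1/3}+20\leq40T^{1/3}$. This gives $\Delta_T\leq41T^{1/3}$, and combining with Theorem~\ref{thm:growing-lower} gives \eqref{eq:matching-global-gap}.
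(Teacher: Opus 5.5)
Your first stage is sound and matches the paper: aggregating the witness into $\mu$, mean-preserving quantization to the adaptive grid, the lift of Lemma~\ref{lem:calibrated-transport-lift}, integral assignment of whole identities, and the final choice $H=\lceil T^{1/3}\rceil$. The rounding stage, which you yourself flag as the obstacle, is a genuine gap, and the fix you sketch does not close it.

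Rounding each $M_i$ down to a multiple of $b_i$ leaves up to $b_i-1$ units at $q_i$, and $b_i$ can be of order $H^2$. So the leftover is not ``a bounded number of identities per grid point.'' Pooling the leftovers into one extra calibrated group can cost order one per identity, and up to order $T$ overall. Randomized or dependent rounding of the packet counts $M_i/b_i$ has the same problem. Each rounding decision at $q_i$ creates or removes up to $b_i$ units of mass, and that discrepancy must be compensated somewhere so that $\sum_ib_in_i=T$ and $\sum_ia_in_i=\sum_tx_t$ hold exactly. You give no mechanism that makes this compensation cost $O(1/b_i)$ rather than $O(b_i)$, so the claimed $\sum_i O(1/b_i)\leq O(H)$ charge is unsupported.

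The missing idea is to carry each remainder to the \emph{adjacent} grid point instead of discarding or pooling it.
\begin{itemize}
\item Sweep left to right. Set $N_i=b_i\lfloor (M_i+c_{i-1})/b_i\rfloor$ and pass $c_i=M_i+c_{i-1}-N_i<b_i$ on to $q_{i+1}$.
\item The determinant-one property gives $q_{i+1}-q_i=1/(b_ib_{i+1})$. So carrying $c_i$ costs less than $1/b_{i+1}$, and the total carry cost is $\delta=\sum_{i<J}c_i(q_{i+1}-q_i)\leq\sum_i1/b_{i+1}\leq5H$.
\item The last point $1/1$ has denominator one. It therefore absorbs the remaining mass exactly, because $T$ is an integer.
\item The carry raises the first moment by exactly $\delta$. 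Since $\delta=\sum_ia_iN_i/b_i-\sum_tx_t$ is a difference of integers, it is a nonnegative integer.
\item Restore the label total by turning $\delta$ positive target slots into zero slots. The affected groups move off the grid to $(k_i-d_i)/N_i$, at cost exactly $\delta$.
\end{itemize}
This gives $W_1(\mu,\mu^*)\leq T/(2H^2)+2\delta$, and the lift then yields $T/H^2+4\delta\leq T/H^2+20H$. Exact packets at grid points with the correct label total are never needed: the paper deliberately leaves the grid in this last integer correction.
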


\begin{proof}
Write $S=\sum_t x_t\in\mathbb Z$.  Fix any finite calibrated fractional
witness of cost $F$.  Let $\mu$ be its total destination measure; its
mass is $T$ and its first moment is $S$.  We construct a finite measure
$\mu^*$ whose calibrated label masses are integers, whose total mass
and first moment are $T,S$, and for which
\begin{equation}
 W_1(\mu_0,\mu^*_0)+W_1(\mu_1,\mu^*_1)
 \leq\frac{T}{H^2}+20H.
 \label{eq:integer-calibrated-measure-approximation}
\end{equation}

Choose the grid of Lemma~\ref{lem:adaptive-farey-grid}.
For $q\in[q_i,q_{i+1}]$, split the mass at $q$ between the two endpoints
in proportions
\[
 \frac{q_{i+1}-q}{q_{i+1}-q_i},\qquad
 \frac{q-q_i}{q_{i+1}-q_i}.
\]
Grid points remain fixed.  This preserves both mass and first moment.
Its cost per unit mass is
\[
 \frac{2(q-q_i)(q_{i+1}-q)}{q_{i+1}-q_i}
 \leq\frac{q_{i+1}-q_i}{2}\leq\frac1{2H^2}.
\]
Let $M_i$ be the resulting mass at $q_i$, and write
$\mu^{\rm g}=\sum_i M_i\delta_{q_i}$.  Then
\begin{equation}
 W_1(\mu,\mu^{\rm g})\leq\frac{T}{2H^2},\qquad
 \sum_iM_i=T,\quad\sum_iq_iM_i=S.
 \label{eq:mean-preserving-grid-quantization}
\end{equation}

Now send remainders only to the right.  Starting with $c_{-1}=0$, put
\begin{equation}
 N_i=b_i\left\lfloor\frac{M_i+c_{i-1}}{b_i}\right\rfloor,
 \qquad c_i=M_i+c_{i-1}-N_i
 \quad(0\leq i\leq J).
 \label{eq:denominator-packet-rounding}
\end{equation}
Thus $N_i$ is a nonnegative integer multiple of $b_i$ and
$0\leq c_i<b_i$.  At the final endpoint, $b_J=1$ and
\[
 M_J+c_{J-1}=T-\sum_{i<J}N_i
\]
is an integer.  Therefore $c_J=0$ and $\sum_iN_i=T$.
The measure $\nu=\sum_iN_i\delta_{q_i}$ has integral calibrated label
counts: its positive count at $q_i$ is $k_i=a_iN_i/b_i$.

The rightward remainder coupling transports exactly $c_i$ across
$[q_i,q_{i+1}]$.  Its cost, and the corresponding increase in the first
moment, are both
\begin{equation}
 \delta:=\sum_{i<J}c_i(q_{i+1}-q_i)
        =\sum_i q_iN_i-S
        =S'-S,
 \qquad S':=\sum_i k_i\in\mathbb Z.
 \label{eq:packet-mean-error}
\end{equation}
The equality also follows directly by substituting
$N_i-M_i=c_{i-1}-c_i$ and telescoping.
In particular $\delta$ is a nonnegative integer.  The grid bound gives
\begin{equation}
 W_1(\mu^{\rm g},\nu)\leq\delta
 \leq\sum_{i<J}b_i(q_{i+1}-q_i)
 =\sum_{i<J}\frac1{b_{i+1}}\leq5H.
 \label{eq:packet-transport-bound}
\end{equation}

It remains to restore the prescribed positive-label total $S$.
This adjustment is made to the auxiliary target histogram, not to any
observed input label.  Choose integers $0\leq d_i\leq k_i$ with
$\sum_i d_i=\delta$; such a choice exists because
$0\leq\delta=S'-S\leq S'=\sum_i k_i$.
For every $N_i>0$, give that target group $k_i-d_i$ positive labels,
$N_i-k_i+d_i$ zero labels, and destination
\[
 q_i^*:=\frac{k_i-d_i}{N_i}.
\]
Discard zero-mass groups.  The resulting $\mu^*$ has integral calibrated
label counts, total mass $T$, and positive-label total $S$.
Moving each auxiliary group from $q_i$ to $q_i^*$ costs
\[
 \sum_{i:N_i>0}N_i|q_i-q_i^*|=\sum_i d_i=\delta.
\]
Hence
\[
 W_1(\mu,\mu^*)\leq\frac{T}{2H^2}+2\delta
                     \leq\frac{T}{2H^2}+10H.
\]
The two measures now have the same first moment, so
Lemma~\ref{lem:calibrated-transport-lift} proves
\eqref{eq:integer-calibrated-measure-approximation}.

Finally, the fractional witness already couples each original label class
to $\mu_0$ or $\mu_1$, with total cost $F$.  Compose with the preceding
label-preserving transports to the integral target counts of $\mu^*$.
There is an assignment of whole source identities with no greater cost:
for each label, list the source forecasts and the target forecasts with
their integer repetitions in increasing order and pair them.  This is
optimal for absolute distance, by the usual uncrossing inequality
\[
 |a-c|+|b-d|\leq|a-d|+|b-c|
 \qquad(a\leq b,\ c\leq d).
\]
Each target group receives exactly its prescribed integer label counts
and is therefore calibrated; coincident destinations may be merged.
No original label is changed.  The total cost is at most
$F+T/H^2+20H$.

For every $\varepsilon>0$, take a finite witness with
$F<\LowerCalDist+\varepsilon$ and then let $\varepsilon$ decrease to
zero.  This proves \eqref{eq:global-H-bound} without assuming attainment
of the fractional infimum.  Taking $H=\lceil T^{1/3}\rceil$ and using
$H\leq2T^{1/3}$ for $T\geq1$ gives the asserted constant 41.
\end{proof}

\section{Discussion and Limitations}\label{sec:discussion}

The previous offline comparison had additive order $O(\sqrt T)$; the
sharp result here is $\Theta(T^{1/3})$ for the worst-case gap and
$\Theta(T^{-2/3})$ after normalization.  The gap is an integrality effect
from requiring one calibrated destination per observed identity, not
sampling uncertainty.  For sparse forecasts the normalized discrepancy is
at most $m/T$.  Neither statement says that every input has a nonzero gap.

For certificates $B\leq L\leq U$, let
$\rho:=\min\{m,41T^{1/3}\}$.  The two comparisons give
\[
 \frac CT\in\left[\max\!\left\{0,\frac BT\right\},
 \min\!\left\{1,\frac{U+\rho}{T}\right\}\right].
\]
If $U-B<\eta$, this interval has width at most
$(\rho+\eta)/T$.  An endpoint may certify a threshold decision for this
observed input, but an interval straddling the threshold does not.  The
repair produced by Theorem~\ref{thm:gridfree-algorithm} itself has cost
at most $U+m$; these intervals are not confidence intervals or
guarantees for future forecasts, decision regret, or truthful reporting.

The sharp orders have separate quantifiers: the length lower bound is
existential for each integer $T\geq216$, whereas the sparse lower bound
allows $T$ to grow without restriction.  The additional bound
$\Delta^{\leq}_{T,m}\leq36\sqrt m(1+\ln T)$ rules out a uniform
lower law of order $\min\{m-1,T^{1/3}\}$, but does not identify the
true joint order.  The comparisons concern binary unit-weight sequences,
not general weighted atoms or nonbinary outcomes.

Theorem~\ref{thm:gridfree-algorithm} assumes rational forecasts, rational
accuracy, and binary-encoded counts of separately assignable samples.  Its
compact output has at most $2m$ modes, but expansion requires
$\Omega(T)$ records.  The GLS reduction and exact post-processing prove a
bit-complexity theorem, not an implemented general optimizer, measured
performance, or an arbitrary-real-input algorithm.  The fixed offline
comparisons do not determine the optimal exponent for online forecasting.

\appendix
\section{Finite Rounding and Endpoint Details}\label{app:finite-rounding}

\begin{lemma}[Finite row rounding with prescribed marginals]\label{lem:row-rounding}
Let $w_1,\ldots,w_J\geq0$ have integer sum $n$.
There is a finite-support random vector $N\in\mathbb Z_{\geq0}^J$ such that
\[
  \sum_{k=1}^J N_k=n,
  \qquad
  \mathbb E N_k=w_k,
  \qquad
  w_k=0\Longrightarrow N_k=0.
\]
Moreover, if $w_k=\lfloor w_k\rfloor+f_k$ with $f_k\in[0,1)$, then
\[
  \mathbb E\abs{N_k-w_k}=2f_k(1-f_k)\leq\frac12.
\]
\end{lemma}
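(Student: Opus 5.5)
The plan is to build $N$ by systematic (dependent) rounding on a random offset. Write $w_k=\lfloor w_k\rfloor+f_k$ and $R=\sum_k f_k=n-\sum_k\lfloor w_k\rfloor$. Since $n$ and every floor are integers, $R$ is an integer with $0\leq R<J$.

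Form the partial sums $F_0=0$ and $F_k=f_1+\cdots+f_k$, so $F_J=R$. Draw $U$ uniform on $[0,1)$ and set
\[
N_k=\lfloor w_k\rfloor+\bigl|\{j\in\mathbb Z:F_{k-1}\leq j+U<F_k\}\bigr|.
\]
Each interval $[F_{k-1},F_k)$ has length $f_k<1$, so it contains at most one point of the lattice $U+\mathbb Z$. Hence $N_k-\lfloor w_k\rfloor\in\{0,1\}$.

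\textbf{Expectation.} The interval contains a lattice point with probability exactly $f_k$. This gives $\mathbb E N_k=w_k$.

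\textbf{Row sum.} The intervals tile $[0,R)$, and $[0,R)$ contains exactly $R$ points of $U+\mathbb Z$. Therefore $\sum_k N_k=\sum_k\lfloor w_k\rfloor+R=n$.

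\textbf{Zero entries.} If $w_k=0$, then $\lfloor w_k\rfloor=0$ and $f_k=0$. The interval is empty, so $N_k=0$.

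\textbf{Finite support.} $N$ depends on $U$ only through the position of $U$ relative to the finitely many breakpoints $F_k-\lfloor F_k\rfloor$. So $N$ takes finitely many values. If a strictly finite probability space is preferred, replace $U$ by the induced discrete distribution on the finitely many subintervals of $[0,1)$ cut out by these breakpoints.

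\textbf{Absolute error.} $N_k-w_k$ equals $1-f_k$ with probability $f_k$, and $-f_k$ with probability $1-f_k$. So
\[
\mathbb E|N_k-w_k|=f_k(1-f_k)+(1-f_k)f_k=2f_k(1-f_k)\leq\tfrac12,
\]
where the last step is AM--GM.

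The only delicate point is the \textbf{boundary convention}. Half-open intervals must be used so that a lattice point lying exactly on some $F_k$ is counted once. The point counts must also be checked to be exactly $R$ and exactly the probability $f_k$, including the case $f_k=0$. Both follow from the fact that a half-open interval of length $\ell$ contains a point of $U+\mathbb Z$ with probability $\ell$ when $\ell\leq1$, and contains exactly $\ell$ such points when $\ell$ is an integer.
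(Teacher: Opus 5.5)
Your proof is correct, but it takes a different route from the paper.

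The paper uses iterative pairwise (pipage-style) dependent rounding. While two coordinates $f_i,f_j$ are fractional, it moves along $e_i-e_j$ by $+\alpha$ or $-\beta$, with probabilities $\beta/(\alpha+\beta)$ and $\alpha/(\alpha+\beta)$. This keeps the conditional mean equal to $f$ and the sum equal to $R$, and it makes at least one coordinate integral. After at most $J$ steps per branch, this yields a finite law on $0$--$1$ vectors $B$ with sum $R$ and mean $f$.

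You replace this with a one-shot systematic-sampling (comb) construction driven by a single uniform offset. The half-open intervals $[F_{k-1},F_k)$ tile $[0,R)$. The comb $U+\mathbb Z$ meets each interval with probability exactly $f_k$ and meets $[0,R)$ exactly $R$ times.

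Both constructions give $N_k\in\{\lfloor w_k\rfloor,\lfloor w_k\rfloor+1\}$ with $\Pr[N_k=\lfloor w_k\rfloor+1]=f_k$, exact row sums, and zero preservation. That is all the later uses need, namely Section~\ref{sec:sparse-rounding} and the virtual initial distribution in Lemma~\ref{lem:alg-rounding}, since both rely only on marginals.

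Your version buys an explicit closed form whose law has at most $J$ atoms, one per half-open arc of $[0,1)$ cut out by the points $F_k-\lfloor F_k\rfloor$, so it can be enumerated directly. Its price is a small measure-theoretic step about arcs of length $f_k<1$ plus the boundary convention. You handle both correctly, including the discretization to a genuinely finite probability space.

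The paper's scheme is finite and combinatorial from the outset, with no continuous variable or endpoint conventions. Its cost is a branching tree whose size is only exponentially bounded, which is harmless there because only finiteness is used.
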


\begin{proof}
Write $w_k=\lfloor w_k\rfloor+f_k$; then $R=\sum_kf_k$ is an integer.
If $f$ is integral, take $B=f$.  Otherwise it has at least two
fractional coordinates $i,j$.
Set
\[
  \alpha:=\min\{1-f_i,f_j\}>0,
  \qquad
  \beta:=\min\{f_i,1-f_j\}>0,
\]
and replace $f$ by $f+\alpha(e_i-e_j)$ with probability
$\beta/(\alpha+\beta)$, or by $f-\beta(e_i-e_j)$ otherwise.
Both choices preserve the sum, stay in $[0,1]^J$, and make at least
one selected coordinate integral; their conditional mean is $f$.
After at most $J$ steps on each branch, the finite distribution is
supported on zero-one vectors $B$ with sum $R$ and mean $f$.
Take $N_k=\lfloor w_k\rfloor+B_k$.  This preserves zeros, and
\[
  \mathbb E\abs{N_k-w_k}
  =f_k(1-f_k)+(1-f_k)f_k
  =2f_k(1-f_k)\leq\frac12.
\]
\end{proof}

Independent copies may be used for the source rows, though
Section~\ref{sec:sparse-rounding} needs only their marginal identities.
At $q=0$ the calibration equation forces all one-label masses to zero;
at $q=1$ it forces all zero-label masses to zero.  The zero-preserving
rounding above therefore handles endpoint modes.  Pure-label inputs
and the use of near-optimal finite witnesses are treated in
Sections~\ref{sec:preliminaries} and~\ref{sec:sparse-rounding}.

\section{Geometry and Duality of the Rational Bodies}
\label{app:algorithm-geometry}

This appendix supplies the geometric details of
Lemma~\ref{lem:alg-bodies}.  Its numerical bit bounds are treated separately
in Appendix~\ref{app:algorithm-bits}.

\subsection{Primal Inner and Outer Balls}
Use the uniform feasible flow
$u^0_{ij}=n_i^0/|I_1|$, $v^0_{ij}=n_j^1/|I_0|$ and $h^0=T+1/2$.
Every flow entry is at least $1/m$.  Since
$0\leq g_{ab}(u,v)\leq u+v$, its objective $F(u^0,v^0)$ is at most $T$.
A Euclidean perturbation of radius $r_P=1/(100m^3)$ in the free
coordinates changes each retained coordinate by at most $r_P$ and each
eliminated coordinate by at most $mr_P$.  All reconstructed flows stay
positive.  Their total $\ell_1$ change is at most $2m^3r_P$, so
\eqref{eq:alg-g-lipschitz} changes $F$ by at most $1/10$.
The $h$ coordinate changes by at most $r_P$, less than the remaining
epigraph slack; it also remains in $(0,T+1)$.  This proves the full
Euclidean ball inclusion, including $h$.  There are at most $2m^2+1$
free coordinates, each bounded in magnitude by $T+1$, and hence
$R_P=4m(T+1)$ is an outer radius about the origin.

\subsection{Supporting Potentials and Strong Duality}
Allow the active marginal vector to vary over the positive orthant of
$\mathbb R^{I_0}\times\mathbb R^{I_1}$, and let $V(n)$ be the minimum
in \eqref{eq:alg-finite-representation} with that marginal vector.
The same perspective formulation is feasible and compact for each $n$;
it gives a finite convex, positively homogeneous value function.
Indeed, combining feasible flows proves convexity, and scaling flows in
both directions proves $V(tn)=tV(n)$ for $t>0$.  A finite convex function
has a supporting subgradient $\lambda=(\alpha,\beta)$ at the interior
point $n$.  Applying its inequality to $tn$ for $t>1$ and $t<1$ yields
$\lambda^{\mathsf T}n=V(n)$.

For any active pair and $q\in[0,1]$, let
$a_{ijq}=(1-q)e_i^{(0)}\oplus q e_j^{(1)}$.  Adding a mode of weight
$t>0$ to an optimal transport at $n$ gives
\[
 V(n)+t\lambda^{\mathsf T}a_{ijq}
 \leq V(n+t a_{ijq})\leq V(n)+t c_{ij}(q).
\]
Thus $(1-q)\alpha_i+q\beta_j\leq c_{ij}(q)$ for every pair and $q$.
The supporting potential is dual feasible and attains $V(n)=L$.
Conversely, integrating these inequalities over any paired transport
gives weak duality.  This proves \eqref{eq:alg-dual}; it does not infer
duality from the subsequent box or separation tests.

\subsection{The Bounded Active Dual Body}
Endpoint constraints give $\alpha_i\leq a_i\leq1$ and
$\beta_j\leq1-a_j\leq1$.  If an optimal active potential had, for
example, $\alpha_i<-T$, its objective would be strictly less than
\[
 -Tn_i^0+(T-n_i^0)=T-n_i^0(T+1)\leq-1,
\]
since $n_i^0\geq1$.  This contradicts $L\geq0$, the value attained or
improved upon by the all-zero feasible potential.  The same argument
applies to every active coordinate.  Consequently the larger box
$[-2T,1]$ retains an optimizer.  At the all-$(-1)$ vector, every continuous
constraint has slack at least one and normal of Euclidean norm at most
one.  The vector is at distance at least one from the box boundary, so
its radius-$1/2$ Euclidean ball lies in $K_D$.
Finally $\|y\|_2\leq2T\sqrt{2m}\leq4T(m+1)=R_D$ for every $y\in K_D$.

Extend a feasible active potential by $-1$ on deleted zero-count types.
For an inactive zero type and an active one type, the endpoint bound gives
\[
 -(1-q)+q\beta_j
 \leq-(1-q)+q(1-a_j)
 \leq q|a_j-q|\leq c_{ij}(q),
\]
where the middle inequality follows from
$|a_j-q|\geq q-a_j$ and $(1-q)^2\geq0$.
For an active zero type and an inactive one type, use
$|a_i-q|\geq a_i-q$ and $q^2\geq0$ instead.
If both types are inactive the left side is $-1$.
Thus the extension is feasible and leaves the objective unchanged.

\section{Bit Complexity of the Rational Reduction}
\label{app:algorithm-bits}

For inputs with both labels, this appendix proves the bounds used in
Lemmas~\ref{lem:alg-kernel-bits} and~\ref{lem:alg-oracle-encoding};
pure-label endpoints need no GLS call.  Their costs sum at most $m$
rational terms: the product of input denominators and the integer counts
give polynomial-size work registers, covered by the setup allowance below.
Let $S$ be as in
\eqref{eq:alg-composition-size}, $\eta_0=\min\{\eta,1\}$, and let
$\ell$ denote binary magnitude length including a sign bit.  Write
\[
 E=|I_0||I_1|\leq m^2,\qquad
 H_{\rm act}=|I_0|+|I_1|\leq2m,\qquad J_0\leq2m^2.
\]
Uniformly for $S\geq1$ and query length $q\geq0$, the fixed
oracles and two GLS calls give
\[
\begin{gathered}
 D\leq P_{\rm body}(S),\quad
 N_*\leq2P_{\rm calls}(\Lambda_S),\quad
 Q_{\max}\leq P_{\rm qry}(\Lambda_S),\\
 \ell(\text{each post-GLS work integer})\leq P_{\rm int}(S),
 \qquad \operatorname{Time}_{\rm compact}\leq P_{\rm steps}(S).
\end{gathered}
\]
Columnwise integerization controls compression; oracle encodings
and GLS transcripts bound the five post-GLS stages.  External GLS
coefficients remain symbolic.

\subsection{Kernel Construction and Integer Workspace}
\label{app:algorithm-kernel}

Let $\mathcal A=I_0\sqcup I_1$ and $r=|\mathcal A|=H_{\rm act}$.
If compression is needed, select $r+1$ positive modes.  For
$q_\ell=p_\ell/d_\ell$ in lowest terms, the column in the \emph{full}
active space is
\[
 A_{\bullet\ell}
 =\frac{d_\ell-p_\ell}{d_\ell}e_{i(\ell)}^{(0)}
  \mathbin{\oplus}\frac{p_\ell}{d_\ell}e_{j(\ell)}^{(1)}.
\]
A one-sided mode retains its zero coordinate.  Store all current
weights as $w_\ell=W_\ell/H_j$.  If $q_{\rm col}$ bounds column
numerators and denominators and $q_{\rm aux}$ bounds input,
accuracy, and pivot record fields, put
\begin{equation}
 h_0:=q_{\rm col}+q_{\rm aux}+64+
 \lceil\log_2(2m+1)\rceil+\lceil\log_2(T+1)\rceil+8.
 \label{eq:alg-kernel-bit-base}
\end{equation}
For $s\geq1$ define
\begin{equation}
 M_s(h_0):=s h_0+\lceil\log_2(s!)\rceil+2,\qquad
 R_{\rm B}(h_0):=2M_{2m}(h_0)+\lceil\log_2(2m+1)\rceil+8.
 \label{eq:alg-kernel-minor-bounds}
\end{equation}
If the selected rank is $\rho\leq r$, choose $\rho$ independent
active rows and $\rho+1$ columns of rank $\rho$; give other selected
columns coefficient zero.  Omitted rows lie in the chosen rows'
span on the selected columns, so the kernel solves the full active
system.  Integerize \emph{each chosen column separately}:
\[
 \widehat A_\ell
 =(d_\ell-p_\ell)e_{i(\ell)}^{(0)}
   \mathbin{\oplus}p_\ell e_{j(\ell)}^{(1)}.
\]
Each entry has at most $q_{\rm col}+1\leq h_0$ bits.  Leibniz
expansion gives $M_s(h_0)$ for every $s$-minor without multiplying
column denominators.  At Bareiss step $s<\rho$, the exact quotient
is an $(s+1)$-minor; its undivided difference of minor products
fits $R_{\rm B}(h_0)$.

Write the chosen matrix as $[C\ b]$ with $C$ nonsingular.  One
fraction-free elimination gives $U,g$ and $\Delta=\det C$.
Set $z_{\rho+1}=\Delta$ and back-substitute
\[
 z_i=\frac{-\Delta g_i-\sum_{j>i}U_{ij}z_j}{U_{ii}}.
\]
Cramer's rule identifies $z_t$, up to a common sign, with
$(-1)^t\det\widehat A_{\widehat t}$: divisions are exact and
$\ell(z_t)\leq M_\rho(h_0)$.  Each undivided sum of at most
$\rho+1$ minor products fits $R_{\rm B}(h_0)$; elimination and
back-substitution take cubic arithmetic time.

On the chosen columns set $d_\ell^{\rm ker}=d_\ell z_\ell$ and set it
to zero elsewhere.  Then $Ad^{\rm ker}=0$ and
$\ell(d_\ell^{\rm ker})\leq h_0+M_\rho(h_0)+2$.  Every source column
sums to one, so $\sum_\ell d_\ell^{\rm ker}=0$; the nonzero direction
has both signs.  Orient it by the exact cost sign and compare negative
ratios by cross multiplication.  For a minimizing index $k$,
\[
 t=\frac{W_k}{H_j|d_k^{\rm ker}|},\qquad
 H_{j+1}=H_j|d_k^{\rm ker}|,\qquad
 W'_\ell=W_\ell|d_k^{\rm ker}|+W_kd_\ell^{\rm ker}.
\]
The last update includes modes outside the kernel support.  If
integer payloads have at most $p_j$ bits, the ratio, comparison
products, new denominator, and all updated numerators satisfy
\begin{equation}
 \begin{gathered}
 K_s(h_0):=h_0+M_s(h_0)+\lceil\log_2(T+1)\rceil+8,\\
 \ell(\text{pivot registers})
 \leq p_j+K_\rho(h_0)\leq p_j+K_r(h_0).
 \end{gathered}
 \label{eq:alg-kernel-persistent-increment}
\end{equation}
In particular, $H_j$ gains the length of just one selected kernel
coordinate per pivot.  With
\[
 h_{\rm ker}:=h_0+\lceil\log_2((2m)!)\rceil+
                 \lceil\log_2(T+1)\rceil+20,
\]
$K_\rho(h_0)\leq(2m+2)h_{\rm ker}$ for $\rho\leq2m$.

\subsection{Oracle Encoding and Operation Counts}
\label{app:algorithm-encoding}

Let $Q$ be the complete query length.  Rational primitives reduce their
results to a signed numerator and positive denominator.  Depth counts
multiplication and division of rational values, not the integer products
inside rational addition.  For at most $A$ arithmetic gates, input
numerator and denominator payloads of at most $b\geq1$ bits, and depth
at most $\mu$, define
\begin{equation}
 \begin{aligned}
 \mathcal L_{-1}(A,b)&:=b,\\
 \mathcal L_k(A,b)&:=4(A+2)(\mathcal L_{k-1}(A,b)+2)
       \quad(0\leq k\leq\mu),\\
 \mathsf B(A,\mu,b)&:=8\mathcal L_\mu(A,b)+64.
 \end{aligned}
 \label{eq:alg-expression-bit-bound}
\end{equation}
This bounds a complete output record and temporary integer workspace,
including shared subexpressions, and is a fixed polynomial for fixed $\mu$.
For one output, cut the addition/subtraction subgraph at inputs and
multiplication/division outputs.  It has at most $A+1$ boundary values;
its integer coefficients have magnitude at most $2^A$.
If boundary payloads have $h$ bits, a common denominator uses at most
$(A+1)h+1$ bits, and the numerator at most
$(A+2)h+A+\lceil\log_2(A+1)\rceil+4$.
At depth zero use $h=b$; at depth $k\geq1$, a product or quotient of
lower-depth values has $h\leq2\mathcal L_{k-1}+2$.
Both bounds are below $\mathcal L_k$; gcd reduction cannot increase them.
Elias-gamma coding of the two payload lengths and the 64 fixed control
bits gives at most $8h+64$ record bits.  Unreduced products, sums and
comparison cross-products require at most $4\mathcal L_\mu+8$ integer
bits, also covered by $\mathsf B$.  Long common-denominator products and
compression iterations are bounded separately below.  Set
\begin{equation}
 \begin{aligned}
 b(S,Q)&:=S+Q+1,& A_{\rm pair}&:=256,& C_{\rm pair}&:=128,\\
 A_{\rm query}(m)&:=264m^2+8m+40,&
 C_{\rm query}(m)&:=134m^2+4m+16.
 \end{aligned}
 \label{eq:alg-local-operation-counts}
\end{equation}
The reconstructed-coordinate and atomic-record bounds are
\begin{equation}
 b_{\rm rec}(S,Q):=\mathsf B(2S+4,1,b(S,Q)),\qquad
 B_{\rm atom}(S,Q):=\mathsf B(A_{\rm query}(S),10,b_{\rm rec}(S,Q)).
 \label{eq:alg-atomic-bit-bound}
\end{equation}

The fixed program's per-query schedule is as follows.  Here
$d_D=H_{\rm act}$, and ``norm.'' denotes an exact normalization.
\begin{center}
\begin{tabular}{@{}lrrrrr@{}}
branch & candidates & arith. & cmp. & norm. & depth\\
\hline
primal pair & 3 & 48 & 16 & 0 & 8\\
dual pair & 8 & 240 & 96 & 16 & 8\\
primal pair aggregate & -- & 64 & 32 & 0 & 2\\
one final aggregate scan & -- & $8E+32$ & $6E+16$ & 0 & 2\\
dual box precheck & -- & $4d_D+8$ & $2d_D$ & 0 & 1
\end{tabular}
\end{center}
The primal row evaluates three pieces and a support.  The dual row
checks at most four interval endpoints and one stationary candidate
per interval, selects the minimum, and normalizes $(1-q,q)$.
With aggregation, a primal pair uses at most $48+64=112$ arithmetic
and $16+32=48$ comparisons; a dual pair uses $240+16=256$ and $96$.
Thus $A_{\rm pair}=256,C_{\rm pair}=128$ cover both branches.

The dual box precheck uses $2d_D\leq4m$ comparisons and
$4d_D+8\leq8m+8$ arithmetic/record operations: at most two form
$-2T$, and $2d_D+6$ form the signed unit vector, right-hand side,
and flags.  Its cut has coefficients in $\{-1,0,1\}$ and right-hand
side $1$ or $2T$; the complete-cut bound includes it.

For a primal query, process pairs $e=(i,j)$ in lexicographic order.
From the selected value $g_e$ and support $(s_e^u,s_e^v)$ at
$(u_e^0,v_e^0)$, maintain sparse support slots, row/column
accumulators, and the epigraph constant:
\[
 \kappa_e=g_e-s_e^u u_e^0-s_e^v v_e^0,\qquad
 F\mathrel{+}=g_e,\quad K\mathrel{+}=\kappa_e,\quad
 R_i\mathrel{+}=s_e^u,\quad C_j\mathrel{+}=s_e^v.
\]
Each update takes two products, two subtractions, four additions,
and two sparse writes; $64/32$ also covers lookup, flags, signs,
and records.  Reconstruct the $H_{\rm act}$ eliminated coordinates
in at most $2E$ operations, subtract pivot supports from the other
$2E-H_{\rm act}$ coefficients, and update at most $2H_{\rm act}$
pivot constants.  The final scan of
$d_{\rm cut}=2E-H_{\rm act}+1$ coordinates tests
$h^0<F(u^0,v^0)$ and returns one normalized cut within $8E+32$
arithmetic and $6E+16$ comparisons.  The multiplicative depth
remains at most ten; charging even a dual box scan followed by all
pairs gives
\begin{equation}
 \begin{aligned}
 A_{\rm query}(m)&\leq256m^2+(8m^2+32)+(8m+8)
                    =264m^2+8m+40,\\
 C_{\rm query}(m)&\leq128m^2+(6m^2+16)+4m
                    =134m^2+4m+16.
 \end{aligned}
 \label{eq:alg-query-count-derived}
\end{equation}

A pulled-back coefficient combines stored supports and at most
$H_{\rm act}$ pivot corrections.  Its three-level expression has at
most $3S^2+8S+32$ nodes, giving
\begin{equation}
 B_{\rm cut}(S,Q):=\mathsf B(3S^2+8S+32,3,B_{\rm atom}(S,Q)).
 \label{eq:alg-cut-bit-bound}
\end{equation}
Either complete cut has at most $2m^2+1\leq2S^2+1$
coordinates.  Charge every coordinate, separator, cut sign,
epigraph side, and exact-membership flag:
\begin{equation}
 P_{\rm sep}(S,Q):=(2S^2+2)(B_{\rm cut}(S,Q)+8)+16.
 \label{eq:alg-fixed-oracle-polynomial}
\end{equation}
For every $S\geq1$ and query length $q\geq0$, the \emph{same}
answer-length polynomial applies to both strong oracles:
\begin{equation}
 \phi_S(q):=P_{\rm sep}(S,q)
          =(2S^2+2)(B_{\rm cut}(S,q)+8)+16.
 \label{eq:alg-gls-answer-function}
\end{equation}
It bounds the complete answer or membership flag:
\begin{equation}
 Q_{\rm ans}\leq\phi_S(Q_{\rm qry}).
 \label{eq:alg-oracle-answer-bound}
\end{equation}

The finite name contains the fixed oracle program, input, masks,
elimination maps, radii, objectives, and tolerances.  Summing nonnegative
counts first constructs $T$ with payload length at most $S$.
All other fields use fixed formulas with at most 32 arithmetic gates
and depth eight on records bounded by $8S+64$.  In particular,
\[
 \delta_P=\frac{\eta_0}{200m^3(1+800m^4(T+1))},\qquad
 \delta_D=\frac{\eta_0}{8(T+1)(1+16T^2(m+1))}.
\]
These formulas need at most 16 gates; the affine coefficients and radii
need fewer.  Charge $8m^2$ primal, $8m^2$ dual, and $16m+32$
row/mask/header records.  For the once-fixed program length $\ell_{\rm prog}$,
\begin{equation}
 \begin{aligned}
 H_{\rm body}(S)&:=\mathsf B(32,8,8S+64),\\
 N_{\rm rec}(S)&:=16S^2+16S+32,\\
 P_{\rm body}(S)&:=\ell_{\rm prog}
                   +N_{\rm rec}(S)(H_{\rm body}(S)+8)+32S.
 \end{aligned}
 \label{eq:alg-body-description-bound}
\end{equation}
Both bodies' tolerances and radii are counted; the program checks
dual breakpoints and stationary points rather than listing
infinitely many constraints.

For the bit interface, prefix each request by its body name, including
a verbatim encoded input tuple; $Q$ counts this full message.
If the name has length $D$, then $D\geq N_\eta$.
Each of $m$, $\lceil\log_2(T+1)\rceil$ and
$\lceil\log_2(1/\eta_0)\rceil$ is at most $N_\eta$; for the count sum,
use $\prod_{i,y}(n_i^y+1)\geq1+T$.  Hence $S\leq5D\leq5Q$.
The single fixed polynomial $\Psi(q):=P_{\rm sep}(5(q+1),q)$ therefore
bounds all answers, as required by GLS~\S1.2.
Prefix copying adds only $D$ bits and fixed delimiters per call; it is
absorbed into the query-length polynomial and charged in communication below.

Use $\Psi$ for the GLS general assumption, retaining $\phi_S$ for
the instancewise accounting.  For
description length $D$, dimension $d$, radii $R,r$, objective
accuracy $\varepsilon$, and weak-separation parameter $\delta$, put
\begin{equation}
 \Lambda:=1+D+d+\lceil\log_2(1+R)\rceil+
 \lceil\log_2(1+1/r)\rceil+
 \lceil\log_2(1+1/\varepsilon)\rceil+
 \lceil\log_2(1+1/\delta)\rceil.
 \label{eq:alg-gls-call-interface}
\end{equation}
After fixing this protocol and $\Psi$, GLS~\S1.2 and Corollary~4.2.7
allow fixed symbolic constants $c_C,c_Q,c_W>0$ and degrees
$d_C,d_Q,d_W$ to be chosen uniformly over $S$,
both bodies, and queries; no numerical values are assigned:
\begin{equation}
 \begin{aligned}
 N_{\rm GLS}&\leq P_{\rm calls}(\Lambda):=c_C(1+\Lambda)^{d_C},\\
 Q_{\rm max}&\leq P_{\rm qry}(\Lambda):=c_Q(1+\Lambda)^{d_Q},\\
 N_{\rm ext}&\leq P_{\rm ext}(\Lambda):=c_W(1+\Lambda)^{d_W}.
 \end{aligned}
 \label{eq:alg-gls-external-polynomials}
\end{equation}
Here $N_{\rm ext}$ excludes communication and local oracle work.
The constants may be enlarged once to cover the prefixed-query wrapper.
Every generated WSEP query obeys
\begin{equation}
 Q_{\rm max}\leq c_Q(1+\Lambda)^{d_Q}.
 \label{eq:alg-gls-query-interface}
\end{equation}
For either body, $D\leq P_{\rm body}(S)$ and $d\leq2m^2+1$.
The recorded radii and tolerances contribute at most their
numerator and denominator lengths plus two each; thus both calls
are bounded by
\begin{equation}
 \Lambda_S:=1+4P_{\rm body}(S)+(2m^2+1)+8S+
       8\left\lceil\log_2(1+1/\eta_0)\right\rceil.
 \label{eq:alg-gls-composed-bound}
\end{equation}
For the two calls, $N_*\leq2P_{\rm calls}(\Lambda_S)$ and
$Q_{\max}\leq P_{\rm qry}(\Lambda_S)$.
The reference bit cost $\beta_{\rm bit}$ is justified below.
Charge setup once and work-table initialization at every oracle call:
\begin{equation}
 \begin{aligned}
 A_{\rm oracle}(S)&:=A_{\rm query}(S)+C_{\rm query}(S)+8S^2+32,\\
 A_{\rm setup}(S)&:=128N_{\rm rec}(S)+128(S+1)^2+256(S+1),\\
 A_{\rm or}^{\#}(S)&:=A_{\rm oracle}(S)+128(S+1)^2,\\
 b_{\rm or}(S,Q)&:=P_{\rm body}(S)+S+Q+\phi_S(Q)+16,\\
 P_{\rm setup}(S)&:=A_{\rm setup}(S)\beta_{\rm bit}(P_{\rm body}(S)+S+16),\\
 \operatorname{Time}_{\rm oracle}(S,Q)
   &:=A_{\rm or}^{\#}(S)\beta_{\rm bit}(b_{\rm or}(S,Q)).
 \end{aligned}
 \label{eq:alg-oracle-computation-time}
\end{equation}
Charge external transitions, query writes, full-answer reads,
and actual oracle computation separately:
\begin{equation}
 \begin{aligned}
 N_{\rm GLS\text{-}steps}:=\mathcal T_{\rm GLS}(S)
 &=2P_{\rm ext}(\Lambda_S)
   +N_*(Q_{\max}+\phi_S(Q_{\max})+1)\\
 &\quad+N_*\operatorname{Time}_{\rm oracle}(S,Q_{\max}).
 \end{aligned}
 \label{eq:alg-gls-step-interface}
\end{equation}
GLS output-record writes are included.  In setup, $128N_{\rm rec}$
pays for at most 32 arithmetic gates per field plus comparisons,
copying and indexing; $128(S+1)^2$ pays for local slots,
and $256(S+1)$ for input scans, counts, masks, and the fixed parameter
formulas.  Stationary-point evaluations depend on the query and belong
to the oracle, not setup.

After compression there are $J\leq H_{\rm act}$ modes.  A pivot
removes a positive mode, so
$p_{\rm comp}\leq J_0-J\leq2m^2$.  If $S_s$ is a source row's
fractional-support list, each paired mode meets at most two rows:
\begin{equation}
 I_{\rm inc}:=\sum_s|S_s|\leq2J\leq4m.
 \label{eq:alg-rounding-incidence-count}
\end{equation}
The repair, shift, compression, sorting, and rounding primitive
envelopes are
\begin{equation}
 \begin{aligned}
 A_{\rm repair}(m)&:=64m^2+64m+64,\\
 A_{\rm shift}(m)&:=1024m^2+128m+128,\\
 A_{\rm comp}(m)&:=4096m^5+512m^2+128,\\
 A_{\rm sort}(m)&:=256m\lceil\log_2(m+1)\rceil+256m,\\
 A_{\rm round}(m)&:=2048m^2+256m+128,\\
 A_{\rm post}(S)&:=A_{\rm repair}(S)+A_{\rm shift}(S)
                  +A_{\rm comp}(S)+A_{\rm sort}(S)+A_{\rm round}(S).
 \end{aligned}
 \label{eq:alg-post-gls-operation-counts}
\end{equation}
In expanded form,
\begin{equation}
 A_{\rm post}(m)=4096m^5+3648m^2+
 256m\lceil\log_2(m+1)\rceil+704m+448.
 \label{eq:alg-post-gls-operation-counts-expanded}
\end{equation}
Here are the primitive totals as
$(+/- ,\times,\text{exact/normalization division},
\text{comparison/record})$; sorting uses its last entry for
comparisons:
\[
\begin{aligned}
 \mathbf v_{\rm repair}
   &=(16,8,8,32)(m^2+m+1),\\
 \mathbf v_{\rm shift}
   &=32(8m^2+m+1,\;8m^2+m+1,\\
   &\hspace{17mm}4m^2+m+1,\;12m^2+m+1),\\
 \mathbf v_{\rm comp}
   &=(1024m^5+128m^2+32)(1,1,1,1),\\
 \mathbf v_{\rm sort}
   &=(0,0,0,A_{\rm sort}(m)),\\
 \mathbf v_{\rm round}
   &=(512m^2+64m+32,\;256m^2+32m+16,\\
   &\hspace{17mm}512m^2+64m+32,\;768m^2+96m+48).
\end{aligned}
\]
The local charges generating these totals are:
\begin{center}
\begin{tabular}{@{}lll@{}}
class & count at most & $(+/- ,\times,/,\text{tests/copies})$\\
\hline
repair flow & $2E$ & $(8,4,4,16)$\\
repair row & $H_{\rm act}$ & $(8,4,4,8)$\\
repair final & $1$ & $(16,8,8,16)$\\
shift candidate & $8E$ & $(32,32,16,48)$\\
shift potential & $H_{\rm act}$ & $(16,16,16,16)$\\
shift minimum/final & $1$ & $(32,32,32,32)$\\
rounding state & $4J$ & $(8,4,2,8)$\\
fractional support & $I_{\rm inc}$ & $(8,2,2,8)$\\
conditional coefficient & $I_{\rm inc}$ & $(8,4,2,8)$\\
final mode & $J$ & $(8,4,2,8)$\\
row bookkeeping & $H_{\rm act}$ & $(4,2,2,4)$
\end{tabular}
\end{center}
In particular, repair has at most $2m^2+2m+1$ objects,
and shift at most $8m^2+4m+1$.  Balanced mergesort uses at most
$k_s\lceil\log_2(k_s+1)\rceil$ comparisons on a row list of size
$k_s$; $I_{\rm inc}\leq4m$ places their sum within $A_{\rm sort}$.
Empty states are set to zero, without a $0/0$ division.
Returning the primal and dual witnesses and $B,U$ adds at most
$2m^2+2m+2$ scalar-record copies.  The rounding table's record charges
are at most $152m$; their sum with these copies is below
$768m^2+96m+48$, so $A_{\rm round}$ already covers certificate output.

For compression retain at most $J_0$ modes and set
$n=H_{\rm act}+1\leq2m+1\leq3m$, using the \emph{full}
active source count.  The per-pivot schedule is
\begin{center}
\begin{tabular}{@{}lrrrr@{}}
operation & $+/-$ & $\times$ & exact / & tests/copies\\
\hline
two elimination sweeps & $2n^3$ & $4n^3$ & $2n^3$ & $8n^3$\\
back substitution & $n^2$ & $n^2$ & $n$ & $4n^2$\\
scale and cost sign & $2n$ & $2n$ & $0$ & $4n+2$\\
minimum ratio and step & $n$ & $2n+1$ & $1$ & $4n+2$\\
weights and denominator & $n$ & $J_0+n+1$ & $0$ & $2J_0+4n+2$\\
active-list bookkeeping & $0$ & $0$ & $0$ & $2J_0+4n+2$
\end{tabular}
\end{center}
The sweeps select independent rows/columns and triangularize one
augmented square system; each uses at most $n^3$ trailing updates
(two products, a subtraction, an exact division).  Back-substitution
uses at most $n^2$ summands and $n$ quotients; the $G$-denominator
cost sign uses at most $n$ products/additions.  Ratios take at most
$2(n-1)$ cross-products and $n-1$ comparisons; forming $t$ takes
one product and quotient.  Updating \emph{all} numerators takes
$J_0$ scaling products, at most $n$ supported products/additions,
and one denominator product.  Searches and list maintenance are
charged to tests/copies.

Summing the table by column gives
\[
\begin{split}
 \mathbf v_{\rm pivot}\leq(&
 2n^3+2n^2+8n+8+4J_0,\quad
 4n^3+2n^2+8n+8+2J_0,\\
 &2n^3+n^2+4n+4,\quad
 8n^3+8n^2+16n+16+4J_0)\\
 &\leq(112,162,79,360)m^3
   <(512,512,512,512)m^3.
\end{split}
\]
One-time pair lifting, fixed-cost evaluation, and
shared-denominator preparation cost at most
$(16,16,8,16)$, $(8,8,4,8)$, and $(0,4,2,4)$
per relevant pair or mode.  Initialization and final scanning use
at most $8J_0+8$ record operations.  Each preparation class fits
$128m^2+32$, whence
\begin{equation}
 4(2m^2\cdot512m^3+128m^2+32)
 =A_{\rm comp}(m)=4096m^5+512m^2+128.
 \label{eq:alg-compression-count-closure}
\end{equation}

Signed numerator, positive denominator, sign/zero flag, two
separators, operation tag, and normalization/reduction flags incur
$8+16+16+24$ fixed record bits:
\begin{equation}
 C_{\rm rat}:=64.
 \label{eq:alg-rational-record-overhead}
\end{equation}
For payloads of at most $h$ bits, Elias-gamma length fields give at most
$8h+64$ record bits.  For numerical transition bounds, fix a 64-tape
binary machine, with one simultaneous read/write/head move per tape
per transition.  It stores current query/answer and the two solution
vectors, overwritten by their feasible repairs, not
entire GLS transcripts or virtual rounding distributions.  A record census
for bodies, flows/cuts, modes, elimination, rounding, and other scratch is
\[
\begin{aligned}
 N_{\rm live}\leq{}&N_{\rm rec}(S)+(24m^2+16m+16)+(20m^2+8m+16)\\
 &+(16m^2+24m+16)+(32m+16)+(8m^2+16m+64)\\
 \leq{}&84S^2+112S+160\leq128(S+1)^2.
\end{aligned}
\]
Modes have nine fields and an active index; the matrix allowance covers
$4n(n+1)$ positions for $n=H_{\rm act}+1\leq2m+1$.
Store one position-indexed sequence of variable-length records.  For
$b\geq\max\{S,64\}$ bounding all records and integer work registers,
lookup or write-back, including an in-place suffix shift of at most $b+2$
cells, costs at most $32N_{\rm live}(b+2)^2$ transitions.
This pays for scans, index comparisons and rewinds; a binary primitive
uses at most eight table accesses, with its arithmetic on fixed scratch tapes.

Finite-state carry/borrow scans give budgets $64(b+2)$ for integer
addition, comparison or copying, $128(b+2)^2$ for schoolbook multiplication,
and $256(b+2)^2$ for restoring division.  Division performs at most $b+1$
quotient-bit rounds of shifts, comparisons, subtraction and rewinds.
Euclid uses at most $2b+2$ divisions, since its larger remainder halves
within two rounds.  A rational primitive uses at most four products,
four divisions, one gcd and eight scans.  Including table access, its
coefficient after bounding all terms by $(b+2)^4$ is at most
\[
 8\cdot32\cdot128+4\cdot128+4\cdot256+512+8\cdot64
 =35328<2^{16}.
\]
Thus this reference implementation has the explicit primitive cost
$\beta_{\rm bit}(b):=2^{16}(b+2)^4$; this is a conservative proof bound,
not a software-performance claim.

These bound work registers, not just reduced outputs.  The interface from
the two GLS calls to the post-GLS arithmetic is fixed before any
$P_{\rm int}$ quantity is defined.  Let
\begin{equation}
 B_{\rm seed}(S):=P_{\rm body}(S)+S+Q_{\rm max}
       +\phi_S(Q_{\rm max})+b_{\rm or}(S,Q_{\rm max})+64.
 \label{eq:alg-gls-seed-storage}
\end{equation}
This bounds the live body description, request, answer, and local oracle
work registers at the start of either GLS call, and also dominates the
one-time setup records.  We use a fixed binary multi-tape reference machine
in which one transition writes at most one symbol on each of its 64 tapes.
If a run has $\tau$ transitions, then at most $64\tau$ bit writes and
$64(\tau+1)$ newly visited cells occur.  A persistent register is a
contiguous tape record, so its length is at most
$B_{\rm seed}+64(\tau+1)$.  Applying this to both calls gives
\begin{equation}
 Q_{\rm tm}(S):=B_{\rm seed}(S)+64\bigl(\mathcal T_{\rm GLS}(S)+1\bigr).
 \label{eq:alg-gls-tm-register-bound}
\end{equation}
Here $\mathcal T_{\rm GLS}$ is already defined in
\eqref{eq:alg-gls-step-interface} using only
$P_{\rm body},P_{\rm sep},P_{\rm calls},P_{\rm qry},P_{\rm ext}$ and
$\beta_{\rm bit}$; it contains no $P_{\rm int}$ or $P_{\rm steps}$.
Suppress $(S)$ on $Q_0,Q_{\rm data}$ below:
\begin{equation}
 \begin{aligned}
 Q_0(S)&:=\max\{Q_{\rm tm}(S),P_{\rm body}(S),
                 Q_{\rm max},\phi_S(Q_{\rm max}),\\
 &\hspace{12mm}S+\lceil\log_2(1+T)\rceil+
                    \lceil\log_2(1+1/\eta_0)\rceil+16\},\\
 Q_{\rm data}(S)&:=S+
      \left\lceil\log_2\left(\frac{40m^2}{\eta_0}+2\right)\right\rceil+16,\\
 q_{\rm rep}&:=2^{16}(A_{\rm repair}(m)+1)
                              (Q_0+Q_{\rm data}+66),\\
 \lambda_T&:=\lceil\log_2(T+1)\rceil,\qquad
 \lambda_m:=\lceil\log_2((2m)!)\rceil,\\
 X&:=q_{\rm rep}+Q_{\rm data}+64,\qquad
 b_{\rm loc}:=\mathsf B(64,8,X),\\
 q_{\rm shift}&:=8(2m^2+2m+4)(X+b_{\rm loc}+\lambda_T+16)+64,\\
 h_*&:=q_{\rm shift}+Q_{\rm data}+64+
       \lceil\log_2(2m+1)\rceil+2\lambda_T+\lambda_m+28,\\
 p_0&:=(2m^2+2)(q_{\rm shift}+64)+R_{\rm B}(h_*)+\lambda_T+16,\\
 q_{{\rm comp},0}&:=8p_0+64,\\
 p_{j+1}&:=p_j+(2m+2)h_*+8,\\
 q_{{\rm comp},j+1}&:=8p_{j+1}+64
 =q_{{\rm comp},j}+8(2m+2)h_*+64\quad(0\leq j<2m^2),\\
 q_{\rm round}&:=\mathsf B(A_{\rm round}(m)+A_{\rm sort}(m),16,
                     q_{{\rm comp},2m^2}+Q_{\rm data}+64),\\
 P_{\rm int}(S)&:=Q_{\rm post}(S):=
 q_{\rm round}+Q_{\rm data}+C_{\rm rat}
                     +\lceil\log_2(2m^2+2)\rceil.
 \end{aligned}
 \label{eq:alg-post-gls-bit-bound}
\end{equation}
In particular,
$q_{{\rm comp},2m^2}=q_{{\rm comp},0}+2m^2[8(2m+2)h_*+64]$.
The following constructions establish the denominators used in
this recurrence.

\subsection{Stage-by-Stage Denominators}
\label{app:algorithm-denominators}

The table lists actual pre-reduction denominators.  Product lengths
come from their factors, not a fixed depth.
\begin{center}
\begin{tabular}{@{}p{.15\linewidth}p{.62\linewidth}p{.15\linewidth}@{}}
stage & denominator constructions & register bound\\
\hline
primal repair &
 $D_r=\prod_jd_j$ for reconstruction and row normalization;
 $M_r$ for $n_rA_j/M_r$; lattice denominator
 $Q=\lceil40m^2/\eta_0\rceil$ & $q_{\rm rep}$\\
dual shift &
 selected slack denominator $e$; updated potentials have $d_i e$;
 certificate evaluation uses $e\prod_i d_i$ & $q_{\rm shift}$\\
compression &
 separate $H_0=\prod_k B_k$ and $G=\prod_k D_k$;
 then $H_{j+1}=H_j|d_k^{\rm ker}|$ & $q_{{\rm comp},j}$\\
sorting &
 incident masses have $H_jd_k$; state costs have
 $d_ad_bM$; coefficients use $e$ and four state-cost
 denominators & $q_{\rm round}$\\
rounding &
 the preceding mass, state, and coefficient denominators;
 a nonempty destination uses $Z+O$ & $P_{\rm int}(S)$
\end{tabular}
\end{center}

\subsubsection{Primal Repair}
\begin{lemma}[Explicit primal repair map]
\label{lem:alg-primal-repair-map}
Let $\bar y_P\in S(K_P,\delta_P)$ be the rational output of the primal
WOPT call.  The following rational map produces a feasible flow with an
epigraph value below $L+21\eta_0/50$ before lattice rounding.
\end{lemma}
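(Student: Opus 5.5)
The plan is to build the repair map in four explicit stages and bound the cost error of each one, using the Lipschitz estimate \eqref{eq:alg-g-lipschitz} throughout. \textbf{Stage 1 (reconstruction).} From the free coordinates of $\bar y_P$, recover the eliminated $u$ and $v$ coordinates through the row and column equations, exactly as in the definition of $K_P$. This gives a flow $(\bar u,\bar v)$ that meets every active row sum exactly, though some entries may be negative. \textbf{Stage 2 (truncation).} Replace each entry by its positive part. \textbf{Stage 3 (normalization).} Rescale each active zero row by $n_i^0/M_i$, where $M_i$ is the truncated row sum, and each one row likewise. All of this is rational arithmetic. Positivity of $M_i$ follows from $M_i\geq n_i^0\geq1$, since truncation can only raise a row sum. \textbf{Stage 4.} Evaluate the perspective objective $\sum g_{a_ia_j}$ exactly; this value is the epigraph value of the repaired flow.

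For the error bounds, write $\bar y_P=(y,h)$ and let $(y',h')\in K_P$ satisfy $\|(y,h)-(y',h')\|_2\leq\delta_P$. The coordinate changes are then controlled as follows:
\begin{itemize}
\item Each free coordinate moves by at most $\delta_P$.
\item Each eliminated coordinate is a row sum minus at most $m$ free coordinates, so it moves by at most $m\delta_P$, exactly as in the inner-ball computation of Appendix~\ref{app:algorithm-geometry}.
\item Since $y'$ is a nonnegative feasible flow, every negative part removed by truncation has size at most $m\delta_P$.
\end{itemize}
Adding these up, the reconstructed-and-truncated flow is within $\ell_1$ distance about $2m^3\delta_P$ of the feasible flow $y'$. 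The same holds after normalization, because each row is rescaled by a factor $n_i/M_i\in[1-m^2\delta_P,1]$ and row masses are at most $T$. This costs an extra $O(m^2T\delta_P)$ in $\ell_1$.

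Applying \eqref{eq:alg-g-lipschitz} gives
\[
F(\text{repaired})\leq F(y')+5\bigl(2m^3\delta_P+O(m^2T\delta_P)\bigr),
\]
and $F(y')\leq h'\leq h+\delta_P$. By the WOPT guarantee with the interface loss of Lemma~\ref{lem:alg-gls-interface}, $h<L+\varepsilon_P/2$. Now substitute $\delta_P=\eta_0/(200m^3(1+800m^4(T+1)))$. The factor $800m^4(T+1)$ absorbs the $T$-dependent normalization term, so the total becomes $\varepsilon_P/2+\delta_P+O(\eta_0/m)$-type slack. This fits under the budget $21\eta_0/50$, which leaves plenty of room since $\varepsilon_P=\eta_0/(100m^3)$ is tiny.

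The main obstacle is getting a clean, rigorous bound on the normalization step. If a row has a large positive mass, multiplicative rescaling could move it by an amount proportional to $T$. I would handle this by bounding the additive change as $|n_i-M_i|\leq m^2\delta_P$, since the total truncated mass is at most the sum of the negative parts. That makes the row's $\ell_1$ change equal to $|M_i-n_i|$, with no factor of $T$, which avoids the $T$ dependence entirely. Finally, feasibility is immediate: the entries are nonnegative and the row sums are exact, so the repaired flow is feasible, and it lifts via Lemma~\ref{lem:alg-representation}.
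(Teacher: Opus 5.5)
Your proposal is correct and follows essentially the same route as the paper. Both arguments reconstruct the eliminated coordinates, truncate, and rescale each row; they then compare in $\ell_1$ with a point of $K_P$ within $\delta_P$ of $\bar y_P$, and combine the Lipschitz bound \eqref{eq:alg-g-lipschitz} with the WOPT loss $h(\bar y_P)<L+\varepsilon_P/2$. Your local estimates are slightly sharper than the paper's: $M_i\geq n_i$ follows directly from the exact reconstructed row sums, where the paper argues via an $\ell_1$ distance below $1/2$; the normalization displacement is exactly $M_i-n_i$, which makes your preliminary $O(m^2T\delta_P)$ detour and the paper's generic $4d$ rescaling bound both unnecessary; and substituting the explicit $\delta_P$ leaves much more slack than the paper's $(3/2+40m^3)\varepsilon_P\leq21\eta_0/50$.
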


\begin{proof}
Reconstruct $(\bar u,\bar v)$ from the free coordinates.  For analysis only,
choose $(u^*,v^*,h^*)\in K_P$ with
$\|\bar y_P-(u^*,v^*,h^*)\|_2\leq\delta_P$.  Reconstruction and
Cauchy--Schwarz give
\begin{equation}
 \|\bar u-u^*\|_1+\|\bar v-v^*\|_1
 \leq2m^3\delta_P\leq2m^3\varepsilon_P.
 \label{eq:alg-primal-reconstruction-distance}
\end{equation}
Set $\bar u^+_{ij}=\max\{\bar u_{ij},0\}$ and
$\bar v^+_{ij}=\max\{\bar v_{ij},0\}$, and define
\begin{equation}
 M_i^u=\sum_j\bar u^+_{ij},\quad
 \widehat u_{ij}=n_i^0\frac{\bar u^+_{ij}}{M_i^u},\qquad
 M_j^v=\sum_i\bar v^+_{ij},\quad
 \widehat v_{ij}=n_j^1\frac{\bar v^+_{ij}}{M_j^v}.
 \label{eq:alg-explicit-primal-repair}
\end{equation}
Positive-part truncation is nonexpansive.  Since each target row has mass at
least one and the total distance in \eqref{eq:alg-primal-reconstruction-distance}
is at most $2\eta_0/100<1/2$, every $M_i^u,M_j^v$ is positive.  If
$x\geq0$ has target mass $n\geq1$, $\|x-x^*\|_1=d<1/2$, and
$S=\sum x$, then $|S-n|\leq d$ and
$\|nx/S-x^*\|_1\leq(n/S)d+|n-S|n/S\leq4d$.
Therefore
\begin{equation}
 \|\widehat u-u^*\|_1+\|\widehat v-v^*\|_1
 \leq8m^3\varepsilon_P.
 \label{eq:alg-repaired-flow-distance}
\end{equation}
Set $\widehat h=F(\widehat u,\widehat v)$.  Then
$0\leq\widehat h\leq T<T+1$, and the Lipschitz bound gives
\begin{equation}
 \widehat h\leq h^*+40m^3\varepsilon_P
 \leq h(\bar y_P)+(1+40m^3)\varepsilon_P
 <L+(3/2+40m^3)\varepsilon_P
 \leq L+21\eta_0/50.
 \label{eq:alg-primal-repair-objective}
\end{equation}
All operations are sign tests, denominator products, exact divisions, and
row normalizations, so the resulting map is rational and its work records
are covered by the bounds below.
\end{proof}

An eliminated coordinate is a row count minus at most $m$ GLS
coordinates.  Put
\[
 B_{\rm rep}:=Q_0+Q_{\rm data}+64,\qquad
 b:=2(2m+5)(B_{\rm rep}+2).
\]
Partial denominator products take at most the sum of factor lengths
plus one bit.  Numerator sums add $\lceil\log_2(m+1)\rceil+1$
bits and the row count at most $\lambda_T+1$; thus $b$ covers
reconstructed and work integers.

For a reconstructed row $a_j/d_j$, $d_j>0$, let
\[
 D_r=\prod_{j=1}^{k_r}d_j,\qquad
 A_j=\max\{0,a_j\}(D_r/d_j),\qquad
 M_r=\sum_j A_j,\qquad k_r\leq m.
\]
Truncation keeps $d_j$ or uses denominator one; the row sum is
$M_r/D_r$ and normalization $n_rA_j/M_r$.  Since the prescribed
row count is at least one and reconstruction differs from a
nonnegative feasible row by at most $2m^3\varepsilon_P<1$,
we have $M_r>0$.
For partial as well as final products,
\[
 \ell(D_r)\leq\sum_{j=1}^{k_r}\ell(d_j)+1\leq k_rb+1,\quad
 \ell(A_j)\leq(k_r+1)b+2,\quad
 \ell(M_r)\leq(k_r+1)b+\lceil\log_2(k_r+1)\rceil+3.
\]
Construct $D_r$ with $k_r-1$ products and the $A_j$ with $k_r$
exact divisions/products.  Multiplication by $n_r$ adds at most
$\lambda_T+1$ bits.  Put
$\lambda_Q=\lceil\log_2(Q+1)\rceil\leq Q_{\rm data}$.
Lattice repair floors $Qn_rA_j/M_r$ and adds zero or one to
recover the row sum, with denominator $Q$ before reduction.
Set
\[
 H_{\rm rep}:=(m+1)b+\lambda_T+\lambda_Q+
                     \lceil\log_2(m+1)\rceil+8.
\]
This bounds $D_r,A_j,M_r,n_rA_j,Qn_rA_j$ and carries.
Division/remainder registers use at most $2H_{\rm rep}+2$ bits,
and complete records/registers at most $16H_{\rm rep}+80$.
Consequently
\begin{equation}
 \begin{split}
 16H_{\rm rep}+80
 &\leq128(m+2)^2(B_{\rm rep}+2)\\
 &\leq2^{16}(A_{\rm repair}(m)+1)(B_{\rm rep}+2)=q_{\rm rep}.
 \end{split}
 \label{eq:alg-repair-product-bit-bound}
\end{equation}

\subsubsection{Dual Shift and Certificate Evaluation}
Each endpoint or stationary-point slack candidate uses at most
64 arithmetic gates and depth eight:
$b_\gamma:=\mathsf B(64,8,Q_0+Q_{\rm data}+64)$.
The exact minimum selects one candidate without multiplying their
denominators.  For $\gamma=g/e$ with $e>0$, shifting $A_i/d_i$ gives
\[
 \frac{A_ie-gd_i}{d_ie}.
\]
Its denominator has at most $\ell(d_i)+\ell(e)+1$ bits and its
numerator at most
$\ell(A_i)+\ell(e)+\ell(g)+\ell(d_i)+2$ bits.
Certificate evaluation forms $e\prod_i d_i$ over at most $2m$
potentials.  Partial products fit $\ell(e)+\sum_i\ell(d_i)+1$
bits; the shift ledger charges at most $2m-1$ products and $2m$
exact quotients plus the weighted sum.  Each slack candidate and each
lifted mode's weight and cost use at most 64 gates and depth eight,
so $b_{\rm loc}$ bounds their records.  Compute $U$ by a common
denominator over at most $2m^2$ mode costs and $B$ over at most $2m$
weighted potentials.  The factor $2m^2+2m+4$ in $q_{\rm shift}$ pays
for these products, the input lengths $X$, the count factors
$\lambda_T$, and summation carries before the $8h+64$ record charge.

\subsubsection{Compression}
For at most $J_0$ modes write $w_k=A_k/B_k,c_k=C_k/D_k$,
initially bounded by $q_{\rm shift}$.  Prepare \emph{separate}
common denominators:
\[
 H_0=\prod_{k=1}^{J_0}B_k,\quad
 G=\prod_{k=1}^{J_0}D_k,\quad
 W_k=A_k(H_0/B_k),\quad
 \widetilde C_k=C_k(G/D_k).
\]
Preparation takes at most $2(J_0-1)$ denominator products and
$2J_0$ exact quotients and numerator products.  Partial and final
denominators fit $J_0q_{\rm shift}+1$ bits; numerators fit
$(J_0+1)q_{\rm shift}+2$.  The payload bound $p_0$ covers these
integers and the independent $R_{\rm B}(h_*)$ elimination workspace.
Test the sign with $\sum_k\widetilde C_kd_k^{\rm ker}$ while weights
retain $H_j$.

Apply the preceding kernel proof with
$q_{\rm col}\leq q_{\rm shift}$ and
$q_{\rm aux}\leq Q_{\rm data}$.  Then
$h_0\leq h_{\rm ker}\leq h_*$ and
$R_{\rm B}(h_0)\leq R_{\rm B}(h_*)$.
Inductively, every integer $W_\ell,H_j,G,\widetilde C_\ell$ has
payload length at most $p_j$, and its complete rational record has
at most $8p_j+64$ bits.  The kernel update adds at most
$(2m+2)h_*+8$ payload bits; $\lambda_T+8$ covers count factors,
ratio products and the supported direction's sign sum.
Encoding only after this integer bound gives the stated $q_{{\rm comp},j}$
recurrence, over at most $2m^2$ pivots.

\subsubsection{Sorting}
For $q_k=p_k/d_k$, incident masses are
\[
 \frac{W_k(d_k-p_k)}{H_jd_k}
 \quad\text{or}\quad
 \frac{W_kp_k}{H_jd_k}.
\]
Their floors and fractional remainders retain this denominator;
each needs one exact integer division and subtraction.
For fixed state counts $Z,O$, set $M=Z+O$.  If $M>0$ and
$a=A/d_a$, $b=B/d_b$, the four-state cost is
\[
 H_k(Z,O)=
 \frac{Z|AM-d_aO|d_b+O|BM-d_bO|d_a}{d_ad_bM};
\]
if $M=0$, set the cost to zero without division.  Formal states
have $M\leq T+2$, including those excluded later by zero
fractional coordinates.  Hence their records have at most
$b_C:=8(2S+2\lambda_T+16)+64$ bits.  For conditional probability
$\pi=f/e$, a coefficient has the form
\[
 (1-\pi)(C_{10}-C_{00})+\pi(C_{11}-C_{01}).
\]
Its denominator divides $e$ times the four state-cost
denominators, with length at most $\ell(e)+4b_C$
plus fixed sign/separator fields; the one-sided formula is
covered too.  Sorting compares by cross multiplication, at most
doubling a coefficient payload without creating a persistent
denominator.

\subsubsection{Deterministic Rounding}
Rounding uses the preceding denominators; a nonempty mode outputs
$O/(Z+O)$ or its one-sided endpoint.  No fixed arithmetic depth is
assumed for products over a row or mode list.  If the factors have
payload lengths $b_1,\ldots,b_k$, every sequential or balanced partial
product has length at most $1+\sum_{j=1}^k b_j$; the corresponding
exact divisions, comparison products, and carries are charged by the
same factor-sum bound.  The finite state formulas and their constant-size
candidate evaluations fit the $q_{\rm round}$ envelope, while the
factor-sum and pivot bounds show that $P_{\rm int}(S)$ covers every
post-GLS numerator, denominator, quotient workspace, comparison
product, and compact record.  The operation envelopes yield
\begin{equation}
 \begin{aligned}
 P_{\rm steps}(S):={}&P_{\rm setup}(S)+\mathcal T_{\rm GLS}(S)\\
 &+A_{\rm post}(S)\beta_{\rm bit}(P_{\rm int}(S)).
 \end{aligned}
 \label{eq:alg-local-step-bound}
\end{equation}
The terms pay for one-time setup, both external GLS calls with full
communication and actual oracle work, and all five exact post-GLS stages.
This proves a fixed polynomial bit bound.  Its external GLS coefficients
and the once-fixed program-description length have not been assigned
unverified numerical values.

\section{Joint Parameter Bounds and Counterexamples}
\label{app:joint-bounds}

This appendix records the additional fixed-parameter result summarized in
Section~\ref{sec:preliminaries}.  It gives an upper bound depending on both
the number of original forecasts and the sample size, and proves that the
two separately sharp orders do not combine into a uniform matching lower
bound.  The exact joint order is not asserted.

Define
\[
 \Delta^{\leq}_{T,m}:=
 \sup_{\substack{x\in\{0,1\}^T,\ p\in[0,1]^T\\
                   |\{p_t:t\in[T]\}|\leq m}}
 \bigl(\CalDist(x,p)-\LowerCalDist(x,p)\bigr).
\]

\begin{lemma}[Occupied-interval rational grid]
\label{lem:joint-grid}
Let $J,T\geq1$ and let $z_1,\ldots,z_J\in[0,1]$.  There is a finite
increasing grid $0=q_0<\cdots<q_s=1$ of reduced fractions
$q_i=a_i/b_i$, with endpoint denominators one and adjacent determinant
$a_{i+1}b_i-a_ib_{i+1}=1$, such that every open grid interval containing
one of the $z_j$ has length at most $1/T$.  Moreover, with
$A=1+\ln T$,
\begin{equation}
 \sum_{i=0}^s\frac1{b_i}\leq6\sqrt J\,A.
 \label{eq:joint-grid-budget}
\end{equation}
\end{lemma}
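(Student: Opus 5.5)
The plan is to build the grid by the same mediant refinement as in Lemma~\ref{lem:adaptive-farey-grid}, but only inside intervals that contain one of the points. We start from $0/1,1/1$. Whenever an adjacent pair $a/u<c/v$ has open interval $(a/u,c/v)$ containing some $z_j$, and $1/(uv)>1/T$, we insert the mediant $(a+c)/(u+v)$.

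The structural properties carry over verbatim from the proof of Lemma~\ref{lem:adaptive-farey-grid}. The determinant identity $cu-av=1$ is preserved, every mediant is reduced, and the endpoint denominators are one. Every inserted denominator satisfies $u+v\leq uv+1\leq T$, so the process terminates. At termination every occupied open interval has length $1/(uv)\leq1/T$. A point $z_j$ that equals a grid value lies in no open interval and imposes nothing, which is consistent with the statement.

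For the budget \eqref{eq:joint-grid-budget}, we view the inserted fractions as nodes of the Stern--Brocot tree. Each inserted node lies on the descent path toward some $z_j$, truncated once the bracketing interval has length at most $1/T$. The inserted set is therefore the union of at most $J$ truncated root-to-node paths. Shared nodes are counted once, and this sharing is what must produce $\sqrt J$ rather than $J$. We fix a threshold $D$, to be chosen as $D\asymp J$, and split the nodes by denominator.

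\emph{Small denominators.} Nodes with $b\leq D$ are reduced fractions in $(0,1)$, and there are at most $\varphi(b)\leq b$ of them with denominator $b$. Their total contribution is at most $\sum_{b\leq D}\varphi(b)/b\leq D$, independently of $J$.

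\emph{Large denominators.} On a single path, the nodes with $b>D$ are decomposed into maximal runs of same-direction moves. Within a run the bracket denominator $v$ is fixed and the node denominators form the arithmetic progression $u+kv$. Since $(u+kv)v$ is at most $T$ while the path continues, a run contributes at most $1/D+(1/v)\ln(T/D)$. A change of direction makes the new step equal to a previous node denominator, so after the first run beyond $D$ all steps exceed $D$, and the later runs' logarithmic terms telescope into one factor $A$. The first run is handled by the target estimate: show that each path contributes at most $cA/\sqrt D$ from nodes with $b>D$. Summing over the $J$ paths gives at most $cJA/\sqrt D$ for this part. Choosing $D=J$ and adding the two endpoints then gives $\sum_i1/b_i\leq2+J+cJA/\sqrt J$. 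Since $A\geq1$, this is $O(\sqrt J\,A)$ once the small-denominator bound is also sharpened, for instance by using $D=\sqrt J$ together with the counting bound $\sum_{b\leq D}\varphi(b)/b\leq D$.

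The main obstacle is the per-path tail estimate when the first run beyond $D$ has a small step $v<\sqrt D$. This is the harmonic-like tail, as along the path $1/b$ toward $0$, and it can cost about $\ln T$ per path. To handle it I would not charge such a run to its path. Instead I would charge it to the pair $(v,\text{start})$ and use the uniqueness of parent pairs established in Lemma~\ref{lem:adaptive-farey-grid}. Then all runs with step $v$ together contribute at most $(1/v)\cdot A$ times the number of distinct starting residues actually used, and that number is at most $\min\{J,v\}$. Summing $\min\{J,v\}/v$ over $v\leq\sqrt{J}$ and using $J/v$ beyond it should yield $O(\sqrt J\,A)$. Tuning the constants to reach $6$ is the routine final step.
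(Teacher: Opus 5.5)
Your construction, which refines only occupied intervals and has no initial Farey phase, is fine. Your budget ingredients are also the right ones. As written, though, the parameters are not assembled consistently, and the estimate does not close.

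With $D=J$, the small-denominator term $\sum_{b\le J}\varphi(b)/b$ is of order $J$. That is not $O(\sqrt J\,A)$ when $A$ is small compared with $\sqrt J$.

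With $D=\sqrt J$, your per-path target $cA/\sqrt D$ sums to $cJ^{3/4}A$ over $J$ paths. The same happens if you charge globally only the first runs with step $v<\sqrt D=J^{1/4}$. The first runs with $J^{1/4}\le v\le\sqrt J$ then cost up to $A/J^{1/4}$ per path, again $J^{3/4}A$ in total.

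More generally, bounding first runs with step $v\ge\sqrt D$ per path costs $JA/\sqrt D$. This is $O(\sqrt J\,A)$ only if $D$ is at least of order $J$, which is incompatible with the $\varphi$-count. So no single $D$ makes your three estimates close uniformly in $T$, and the constant $6$ is not yet a matter of tuning.

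The repair is to use one threshold $D=\lfloor\sqrt J\rfloor$ and charge \emph{every} first run globally.
\begin{itemize}
\item \emph{First runs.} All path nodes before the first node beyond $D$ have denominator at most $D$, so the fixed endpoint of that run has denominator $v\le D$. Fix an endpoint and a side. The union over all paths of such runs lies in one progression of step $v$ whose $k$th term exceeds $kv$, so it contributes at most $A/v$. There are at most $2\varphi(v)\le2v$ endpoint--side pairs with denominator $v$, so summing gives at most $2DA$.
\item \emph{Later runs.} After the first run every increment exceeds $D$. So the $k$th later node on a path has denominator greater than $kD$. Each path then contributes at most $A/D$, for a total of at most $JA/D\le2\sqrt J\,A$.
\item \emph{Small denominators and endpoints.} These contribute at most $D+1$.
\end{itemize}
Altogether you get $D+1+(2D+J/D)A\le6\sqrt J\,A$.

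This is exactly the paper's count $H+1+(2H+J/H)A$. The paper builds the Farey grid of order $H=\lfloor\sqrt J\rfloor$ into the construction. Its phase-two insertions with a parent of denominator below $H$ are your first runs, charged to that parent. Those with both parents at least $H$ are your later runs, charged to a path. Your sparser grid is harmless, since it is contained in the grid the paper builds.
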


\begin{proof}
Put $H=\lfloor\sqrt J\rfloor$ and start from $0/1,1/1$.  First insert
mediants while adjacent denominators $u,v$ satisfy $u+v\leq H$.
The determinant-one identity is preserved, so every mediant is reduced;
the process is finite and every remaining adjacent denominator sum is
greater than $H$.  There are at most $b-1$ reduced fractions in $[0,1]$
with denominator $b$, hence the initial reciprocal sum is at most
$2+\sum_{b=2}^{H}(b-1)/b\leq H+1$.

Next insert a mediant only in an interval containing one of the $z_j$ and
whose endpoint denominator product is less than $T$.  The new denominator
is $u+v\leq uv+1\leq T$, while all adjacent denominator sums remain
greater than $H$.  Thus the process terminates and every occupied final
interval has length $1/(uv)\leq1/T$.

If an insertion has a parent denominator $b<H$, that parent was already
in the initial grid.  On either side of that parent, successive new
denominators form a subprogression $v+tb$; charging each side to its
parent gives a reciprocal contribution at most
$b^{-1}\sum_{t=1}^{T}t^{-1}\leq A/b$.  This class contributes at most
$2HA$.  Every other insertion has both parent denominators at least $H$.
Charge it to a point lying in its parent interval.  Along one refinement
path denominators increase by at least $H$ and stay at most $T$, so one
path contributes at most $A/H$.  There are at most $J$ paths.  Therefore
\[
 \sum_i b_i^{-1}\leq H+1+(2H+J/H)A\leq6\sqrt J\,A,
\]
including the case $J=1$ directly.
\end{proof}

\begin{lemma}[Integral calibrated target for sparse support]
\label{lem:joint-integral-target}
Let $\mu$ be a finite measure on $[0,1]$ with at most $J$ support points,
integer mass $T\geq1$, and integer first moment $S\in[0,T]$.  There is a
finite measure $\mu^*$ with the same mass and first moment, such that both
$\mu^*_0$ and $\mu^*_1$ have integer masses at every support point, and
\begin{equation}
 W_1(\mu_0,\mu^*_0)+W_1(\mu_1,\mu^*_1)
 \leq25\sqrt J\,(1+\ln T).
 \label{eq:joint-integral-target}
\end{equation}
\end{lemma}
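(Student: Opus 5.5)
The plan is to repeat the proof of Theorem~\ref{thm:matching-global-upper}, but with the adaptive grid of Lemma~\ref{lem:adaptive-farey-grid} replaced by the occupied-interval grid of Lemma~\ref{lem:joint-grid}. Apply Lemma~\ref{lem:joint-grid} to the $J$ support points $z_1,\dots,z_J$ of $\mu$. This gives a grid $q_i=a_i/b_i$ with endpoint denominators one, adjacent determinant one, and $\sum_i1/b_i\leq6\sqrt J\,A$, where $A=1+\ln T$. Every occupied open interval has length at most $1/T$.

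\textbf{Quantization.} Split each atom of $\mu$ lying strictly inside an occupied interval $[q_i,q_{i+1}]$ between the two endpoints, preserving mass and first moment; atoms at grid points stay fixed. The per-unit cost is at most $(q_{i+1}-q_i)/2\leq1/(2T)$. Since the total mass is $T$, this gives $W_1(\mu,\mu^{\rm g})\leq1/2$ with $\mu^{\rm g}=\sum_iM_i\delta_{q_i}$. This step is where the occupied-interval restriction pays off: unoccupied long intervals carry no mass and cost nothing.

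\textbf{Packet rounding and label restoration.} Next apply the rightward denominator-packet rounding \eqref{eq:denominator-packet-rounding} verbatim. Because $b_J=1$ and $T$ is an integer, $c_J=0$, and each $N_i$ is a multiple of $b_i$, so the positive counts $k_i=a_iN_i/b_i$ are integers. The remainder coupling costs $\delta=S'-S$, a nonnegative integer. The only change from the earlier proof is the bound on $\delta$: we have $\delta\leq\sum_{i<J}c_i(q_{i+1}-q_i)$ with $c_i<b_i$ and $q_{i+1}-q_i=1/(b_ib_{i+1})$, hence $\delta\leq\sum_i1/b_{i+1}\leq6\sqrt J\,A$. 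Note that the grid intervals are all determinant-one, including unoccupied ones, so this identity holds on every interval; remainders can travel across unoccupied intervals, but the determinant bound still controls them. Restore the positive total $S$ exactly as before, choosing $0\leq d_i\leq k_i$ with $\sum d_i=\delta$ and moving groups to $q_i^*=(k_i-d_i)/N_i$, at cost $\delta$. This yields $W_1(\mu,\mu^*)\leq1/2+2\delta$, and $\mu^*$ has integral label counts, mass $T$ and first moment $S$.

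\textbf{Lifting to label measures.} Apply Lemma~\ref{lem:calibrated-transport-lift}, which gives $W_1(\mu_0,\mu^*_0)+W_1(\mu_1,\mu^*_1)\leq2W_1(\mu,\mu^*)\leq1+24\sqrt J\,A$. Finally, $1\leq\sqrt J\,A$ because $J,T\geq1$, so the total is at most $25\sqrt J\,A$, as claimed.

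I expect the only delicate step to be the check that the quantization cost really is $O(1)$ rather than depending on the grid: atoms sitting exactly at grid points must be left fixed, and atoms in the interior of an interval must use an occupied interval, which has length at most $1/T$. Beyond that, the proof should be the earlier argument with $5H$ replaced by $6\sqrt J\,A$.
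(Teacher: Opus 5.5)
Your proposal is correct and follows the paper's proof essentially step for step: the occupied-interval grid of Lemma~\ref{lem:joint-grid}, barycentric splitting at total cost at most $1/2$, rightward denominator-packet rounding with $\delta\leq\sum_i b_{i+1}^{-1}\leq6\sqrt J\,(1+\ln T)$, restoration of the positive total at cost $\delta$, and Lemma~\ref{lem:calibrated-transport-lift} giving $1+4\delta\leq25\sqrt J\,(1+\ln T)$. The only blemish is notational: you reuse $J$ for the last grid index (copied from \eqref{eq:denominator-packet-rounding}) while $J$ also counts support points, so index the grid as $q_0<\cdots<q_s$, as the paper does.
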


\begin{proof}
Apply Lemma~\ref{lem:joint-grid}, write $q_i=a_i/b_i$, and split each
atom in an occupied interval between its endpoints in barycentric
proportions.  This preserves mass and first moment, and costs at most
$1/2$ because every occupied interval has length at most $1/T$.  Let
$\mu^{\rm g}=\sum_iM_i\delta_{q_i}$ denote the resulting measure.

Set $c_{-1}=0$ and define
\[
 N_i=b_i\left\lfloor\frac{M_i+c_{i-1}}{b_i}\right\rfloor,
 \qquad c_i=M_i+c_{i-1}-N_i.
\]
Then $0\leq c_i<b_i$.  Since $b_s=1$ and the mass available at the
right endpoint is an integer, $c_s=0$ and $\sum_iN_i=T$.  Moreover
$k_i=q_iN_i$ is an integer.  Rightward transport of the remainders has
cost
\[
 \delta:=\sum_{i<s}c_i(q_{i+1}-q_i)
       =\sum_iq_iN_i-S\in\mathbb Z_{\geq0},
\]
by telescoping $N_i-M_i=c_{i-1}-c_i$.  The determinant-one identity gives
\[
 \delta\leq\sum_{i<s}b_i(q_{i+1}-q_i)
       =\sum_{i<s}b_{i+1}^{-1}
       \leq6\sqrt J(1+\ln T).
\]

Choose integers $0\leq d_i\leq k_i$ with $\sum_i d_i=\delta$ and replace
the positive count $k_i$ by $k_i-d_i$, moving the target destination to
$(k_i-d_i)/N_i$ when $N_i>0$.  This restores the first moment and costs
exactly $\delta$.  Thus $W_1(\mu,\mu^*)\leq1/2+2\delta$.  The two
calibrated label measures have equal mass and first moment, so
Lemma~\ref{lem:calibrated-transport-lift} gives
\[
 W_1(\mu_0,\mu^*_0)+W_1(\mu_1,\mu^*_1)
 \leq1+4\delta\leq1+24\sqrt J(1+\ln T),
\]
which implies \eqref{eq:joint-integral-target}.
\end{proof}

\begin{theorem}[All-input joint upper bound]
\label{thm:joint-upper}
For every $T,m\geq1$ and every binary unit-weight input with at most $m$
distinct forecasts,
\begin{equation}
 \CalDist(x,p)-\LowerCalDist(x,p)
 \leq36\sqrt m\,(1+\ln T).
 \label{eq:joint-upper}
\end{equation}
Consequently, \eqref{eq:joint-upper-summary} holds.
\end{theorem}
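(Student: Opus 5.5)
The plan is to run the proof of Theorem~\ref{thm:matching-global-upper} again, with the global adaptive grid replaced by the sparse integral target of Lemma~\ref{lem:joint-integral-target}. The only missing ingredient is a fractional witness whose destination measure has few support points.

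\textbf{Step 1: reduce to a sparse witness.}
If only one label occurs, $\CalDist=\LowerCalDist$ by Section~\ref{sec:preliminaries}, so assume both labels occur. Fix $\varepsilon>0$ and a finite calibrated fractional witness of cost $F<\LowerCalDist+\varepsilon$. Apply Lemma~\ref{lem:paired-modes} and then the compression of Lemma~\ref{lem:mode-compression}. This gives a genuine fractional witness of cost $F'\leq F$ using at most $J\leq2m$ modes. Its total destination measure $\mu$ therefore has at most $2m$ support points, total mass $T$, and first moment $S=\sum_tx_t\in\mathbb Z$, since each mode is calibrated.

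\textbf{Step 2: build an integral calibrated target.}
Invoke Lemma~\ref{lem:joint-integral-target} with $J\leq2m$. This produces $\mu^*$ with integer label counts at every atom, the same mass and first moment, and
\[
W_1(\mu_0,\mu^*_0)+W_1(\mu_1,\mu^*_1)\leq25\sqrt{2m}\,(1+\ln T).
\]

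\textbf{Step 3: assign whole identities.}
Compose the witness's label-preserving coupling of each original label class onto $\mu_0$ or $\mu_1$ with the two transports above. Then replace the composite by a monotone assignment of whole source identities to the integer target slots, label by label. As in the final paragraph of the proof of Theorem~\ref{thm:matching-global-upper}, the uncrossing inequality shows this costs no more than the fractional composite. Each target group receives exactly its integer label counts, so it is calibrated.

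The resulting deterministic cost is at most $F'+25\sqrt2\sqrt m(1+\ln T)$. Since $25\sqrt2<35.4\leq36$, letting $\varepsilon\downarrow0$ gives \eqref{eq:joint-upper}. The summary bound \eqref{eq:joint-upper-summary} then follows by taking the minimum with Theorem~\ref{thm:sparse-upper} and Theorem~\ref{thm:matching-global-upper}.

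The main obstacle is Step~1. The destination support must be bounded by $O(m)$ independently of the witness; otherwise $J$ in Lemma~\ref{lem:joint-integral-target} could be huge. The compression lemma handles this, because each mode carries a single destination $q_k$, so distinct destinations number at most the number of positive modes, which is at most $2m$. I would also check that endpoint modes, with destination $0$ or $1$, cause no trouble in Lemma~\ref{lem:joint-grid}. They sit at grid points, so they need no refinement and do not affect the budget.
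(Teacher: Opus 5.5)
Your proposal is correct and follows essentially the same route as the paper: compress a near-optimal fractional witness to at most $2m$ destinations via Lemma~\ref{lem:mode-compression}, apply Lemma~\ref{lem:joint-integral-target} with $J\leq 2m$, compose the label transports, and round to whole identities by monotone matching, using $25\sqrt{2}\leq 36$. Your explicit handling of the pure-label case (needed before Lemma~\ref{lem:paired-modes} applies) and of endpoint destinations are small clarifications that the paper leaves implicit.
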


\begin{proof}
Fix a finite fractional witness of cost $F$.  By the cost-nonincreasing
mode compression in Lemma~\ref{lem:mode-compression}, it may be replaced
by a witness with at most $J\leq2m$ destinations.  Its destination measure
$\mu$ has mass $T$ and integer first moment $S=\sum_tx_t$.  Apply
Lemma~\ref{lem:joint-integral-target} and compose the original
label transports with the resulting label-preserving transports.

For each label, order source forecasts and target slots and match them in
increasing order.  The uncrossing inequality
\[
 |a-c|+|b-d|\leq|a-d|+|b-c|\qquad(a\leq b,\ c\leq d)
\]
shows that this whole-identity matching is no more expensive than the
fractional coupling.  The target counts are integer and calibrated, so it
is a deterministic repair.  Therefore
\[
 \CalDist\leq F+25\sqrt{2m}(1+\ln T)
          \leq F+36\sqrt m(1+\ln T).
\]
Taking the infimum over finite witnesses proves the theorem.
\end{proof}

\begin{corollary}[The proposed uniform joint lower law is false]
\label{cor:joint-law-false}
There is no absolute constant $c>0$ such that, for all sufficiently large
pairs $(T,m)$ with $m\geq2$,
\[
 \Delta^{\leq}_{T,m}\geq c\min\{m-1,T^{1/3}\}.
\]
\end{corollary}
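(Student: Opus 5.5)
Argue by contradiction using Theorem~\ref{thm:joint-upper}, taking the pairs used in Section~\ref{sec:preliminaries}. Suppose some absolute $c>0$ and threshold $n_0$ satisfy $\Delta^{\leq}_{T,m}\geq c\min\{m-1,T^{1/3}\}$ for all sufficiently large pairs $(T,m)$ with $m\geq2$. For integers $n\geq2$ put $T=n^3$ and $m=n+1$. Both coordinates tend to infinity with $n$, so whatever reading of ``sufficiently large pairs'' is intended (both coordinates large, or $T+m$ large), these pairs eventually lie in the range where the hypothesized inequality applies. Here $\min\{m-1,T^{1/3}\}=\min\{n,n\}=n$, so the hypothesis gives $\Delta^{\leq}_{n^3,n+1}\geq cn$.

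Next, apply Theorem~\ref{thm:joint-upper}. It holds for every input with at most $m$ distinct forecasts, so taking the supremum gives
\[
 \Delta^{\leq}_{n^3,n+1}\leq36\sqrt{n+1}\,(1+3\ln n).
\]
Combining the two bounds yields $cn\leq36\sqrt{n+1}(1+3\ln n)$ for all large $n$. Dividing by $n$, the right side is $O(n^{-1/2}\ln n)\to0$, which contradicts $c>0$ for large enough $n$. If we want an explicit contradiction rather than a limit, we can choose $n$ with $36\sqrt{n+1}(1+3\ln n)<cn$. Such an $n$ exists because $\sqrt{n}\ge 72(1+3\ln n)/c$ eventually, and $\sqrt{n+1}\le 2\sqrt n$.

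I do not expect a real obstacle; the work is already done in Theorem~\ref{thm:joint-upper}. The one point to handle carefully is quantifier hygiene. The supremum $\Delta^{\leq}_{T,m}$ is defined over inputs of length exactly $T$, and the joint upper bound applies to each such input, so it bounds the supremum. We also need the chosen family of pairs to be cofinal in whatever sense ``sufficiently large'' carries. I would state explicitly that the family $(n^3,n+1)$ has both coordinates unbounded, so it defeats every version of the threshold.
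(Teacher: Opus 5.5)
Your proof is correct and is essentially the paper's own argument. The paper also takes $T=n^3$, $m=n+1$, notes that the proposed lower scale is $n$, and uses Theorem~\ref{thm:joint-upper} to get $\Delta^{\leq}_{n^3,n+1}/n\leq 36\sqrt{n+1}(1+3\ln n)/n\to0$. Your extra remarks on cofinality of the family and on passing from the per-input bound to the supremum are sound. In your explicit version, $\sqrt{n+1}<2\sqrt n$ holds strictly, which gives the strict inequality the contradiction needs.
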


\begin{proof}
Set $T=n^3$ and $m=n+1$.  The proposed lower scale is $n$, while
Theorem~\ref{thm:joint-upper} gives
\[
 \frac{\Delta^{\leq}_{n^3,n+1}}{n}
 \leq\frac{36\sqrt{n+1}(1+3\ln n)}{n}\longrightarrow0.
\]
This contradicts any fixed positive $c$.
\end{proof}

An expanded multiscale lower family, including endpoint-padding invariance,
is supplied separately as supplementary material.  It is not needed for the
negative joint-law conclusion above.

{\raggedright\bibliography{references}}
\end{document}